\documentclass[dvipdfmx]{article} 
\usepackage{nips15submit_e,times}
\usepackage{url}
\usepackage[dvipdfmx]{graphicx} 
\usepackage{algorithm}
\usepackage{algorithmic}
\usepackage{bm}
\usepackage{amsfonts}
\usepackage{amsthm}
\usepackage{amsmath}
\usepackage{amssymb} 
\usepackage{wrapfig}

\usepackage{subfigure}
\newlength{\subfigwidth}
\newlength{\subfigcolsep}
\setlength{\subfigcolsep}{0.5\tabcolsep}

\title{Regret Lower Bound and Optimal Algorithm in Finite Stochastic Partial Monitoring}

\author{
Junpei Komiyama\\
The University of Tokyo\\
\texttt{junpei@komiyama.info} \\
\And
Junya Honda \\
The University of Tokyo\\
\texttt{honda@stat.t.u-tokyo.ac.jp} \\
\And
Hiroshi Nakagawa \\
The University of Tokyo\\
\texttt{nakagawa@dl.itc.u-tokyo.ac.jp}
}

%

\DeclareMathOperator*{\argmin}{\mathop{\rm arg~min}}
\newtheorem{theorem}{Theorem}

\newtheorem{corollary}[theorem]{Corollary}
\newtheorem{lemma}[theorem]{Lemma}

\newcommand{\Real}{\mathbb{R}}
\newcommand{\Prob}{\Pr}
\newcommand{\Probp}{\mathbb{P}'}

\newcommand{\Expect}{\mathbb{E}}
\newcommand{\Expectp}{\mathbb{E}'}
\newcommand{\Ind}{\id}

\newcommand{\simplex}{\mathcal{P}_M}

\newcommand{\ist}{{i^*}}
\newcommand{\imd}{{i'}}
\newcommand{\pst}{p^*}
\newcommand{\pmd}{p'}

\newcommand{\Regret}[1]{\mathrm{Regret}(#1)}

\newcommand{\actions}{[N]}
\newcommand{\outcomes}{[M]}
\newcommand{\signali}[1]{S_{#1}}
\newcommand{\signal}[3]{(S_{#1})_{#2,#3}}
\newcommand{\nib}[1]{N_{#1}(T)}

\newcommand{\cell}[1]{\mathcal{C}_{#1}}

\newcommand{\nn}{\nonumber\\}

\newcommand{\Img}{\mathrm{Im}}
\newcommand{\Obsi}{{\hat{X}_{i}}}
\newcommand{\hKL}{\mathrm{\widehat{KL}}}
\newcommand{\Natural}{\mathbb{N}}

\newcommand{\hatmu}{\hat{\mu}}

\newcommand{\tTheta}{\tilde{\Theta}}
\newcommand{\nonoptimal}{{\cP_{1}^c}}
\newcommand{\nonoptimali}[1]{{\cP_{#1}^c}}
\newcommand{\mymod}{\text{ mod }}

\newcommand{\ED}{\mathcal{D}}
\newcommand{\EE}{\mathcal{E}}

\newcommand{\admq}[2]{\mathcal{R}_{#1}(#2)}

\nipsfinalcopy 

\newcommand{\so}{\mathrm{o}}
\newcommand{\lo}{\mathrm{O}}
\newcommand{\ep}{\epsilon}
\newcommand{\de}{\delta}
\newcommand{\al}{\alpha}

\newcommand{\id}{ \mbox{\rm 1}\hspace{-0.63em}\mbox{\rm \small 1\,}}
\newcommand{\idx}[1]{\id\left[#1\right]}
\newcommand{\E}{\Expect}

\newcommand{\n}{\nonumber}
\newcommand{\rd}{\mathrm{d}}
\newcommand{\sets}{\mathcal{H}}



\newcommand{\interior}{\mathrm{int}}
\newcommand{\closure}{\mathrm{cl}}

\newcommand{\e}{\mathrm{e}}
\newcommand{\De}{\Delta}

\newcommand{\com}{\,,}
\newcommand{\per}{\,.}

\newcommand{\since}[1]{\quad\left(\mbox{#1}\right)}

\newcommand{\ihat}{\hat{i}}
\newcommand{\cA}{\mathcal{A}}

\newcommand{\cP}{\mathcal{C}}

\newcommand{\variation}{\mathcal{B}}
\newcommand{\trueD}{\mathcal{C}}
\newcommand{\empD}{\mathcal{D}}
\newcommand{\other}{\mathcal{E}}

\newcommand{\phat}{\hat{p}}
\newcommand{\adm}[2]{\mathcal{R}_{#1}(#2)}
\newcommand{\admb}[2]{\bar{\mathcal{R}}_{#1}(#2)}

\newcommand{\optset}[2]{\mathcal{R}_{#1}^*(#2)}

\newcommand{\optc}[2]{C_{#1}^*(#2)}
\newcommand{\optcone}[1]{C_{#1}^*}
\newcommand{\jt}[2]{J_{#1}(#2)}
\newcommand{\jtp}[2]{J'_{#1}(#2)}
\newcommand{\kyoria}[1]{\Vert#1\Vert}
\newcommand{\kyorim}[1]{\Vert#1\Vert_M}
\newcommand{\norm}[1]{}
\newcommand{\normss}{}
\newcommand{\normsd}{}
\newcommand{\regret}{\mathrm{Regret}}
\newcommand{\scup}{\,\cup\,}
\newcommand{\setd}{\mathcal{S}}
\newcommand{\tp}{\tilde{p}}
\newcommand{\Lmax}{L_{\mathrm{max}}}

\begin{document}

\maketitle

\begin{abstract}
Partial monitoring is a general model for sequential learning with limited feedback formalized as a game between two players.
In this game, the learner chooses an action and at the same time the opponent chooses an outcome, then the learner suffers a loss and receives a feedback signal. The goal of the learner is to minimize the total loss. 
In this paper, we study partial monitoring with finite actions and stochastic outcomes.
We derive a logarithmic distribution-dependent regret lower bound that defines the hardness of the problem.
Inspired by the DMED algorithm (Honda and Takemura, 2010) for the multi-armed bandit problem, we propose PM-DMED, an algorithm that minimizes the distribution-dependent regret. 
PM-DMED significantly outperforms state-of-the-art algorithms in numerical experiments.
To show the optimality of PM-DMED with respect to the regret bound, we slightly modify the algorithm by introducing a hinge function (PM-DMED-Hinge).
Then, we derive an asymptotically optimal regret upper bound of PM-DMED-Hinge that matches the lower bound.
\end{abstract}

\section{Introduction}

Partial monitoring is a general framework for sequential decision making problems with imperfect feedback. Many classes of problems, including prediction with expert advice \cite{Littlestone:1994:WMA:184036.184040}, the multi-armed bandit problem \cite{LaiRobbins1985}, dynamic pricing \cite{DBLP:conf/focs/KleinbergL03}, the dark pool problem \cite{DBLP:journals/jmlr/AgarwalBD10}, label efficient prediction \cite{DBLP:journals/tit/Cesa-BianchiLS05}, and linear and convex optimization with full or bandit feedback \cite{DBLP:conf/icml/Zinkevich03,DBLP:conf/colt/DaniHK08} can be modeled as an instance of partial monitoring. 

Partial monitoring is formalized as a repeated game played by two players called a learner and an opponent.
At each round, the learner chooses an action, and at the same time the opponent chooses an outcome. Then, the learner observes a feedback signal from a given set of symbols and suffers some loss, both of which are deterministic functions of the selected action and outcome. 

The goal of the learner is to find the optimal action that minimizes his/her cumulative loss.
Alternatively, we can define the regret as the difference between the cumulative losses of the learner and the single optimal action, and minimization of the loss is equivalent to minimization of the regret.
A learner with a small regret balances exploration (acquisition of information about the strategy of the opponent) and exploitation (utilization of information). The rate of regret indicates how fast the learner adapts to the problem: a linear regret indicates the inability of the learner to find the optimal action, whereas a sublinear regret indicates that the learner can approach the optimal action given sufficiently large time steps.

The study of partial monitoring is classified into two settings with respect to the assumption on the outcomes. 
On one hand, in the stochastic setting, the opponent chooses an outcome distribution before the game starts, and an outcome at each round is an i.i.d.\,sample from the distribution. 
On the other hand, in the adversarial setting, the opponent chooses the outcomes to maximize the regret of the learner.
In this paper, we study the former setting.

\subsection{Related work}

The paper by Piccolboni and Schindelhauer \cite{PiccolboniChristian2001} is one of the first to study the regret of the finite partial monitoring problem.
They proposed the FeedExp3 algorithm, which attains $\lo(T^{3/4})$ minimax regret on some problems. This bound was later improved by Cesa-Bianchi et al.\,\cite{DBLP:journals/mor/Cesa-BianchiLS06} to $\lo(T^{2/3})$, who also showed an instance in which the bound is optimal.
Since then, most literature on partial monitoring has dealt with the minimax regret, which is the worst-case regret over all possible opponent's strategies.
Bart{\'{o}}k et al.\,\cite{DBLP:journals/jmlr/BartokPS11} classified the partial monitoring problems into four categories in terms of the minimax regret:
 a trivial problem with zero regret, an easy problem with $\tTheta(\sqrt{T})$ regret\footnote{Note that $\tTheta$ ignores a polylog factor.}, a hard problem with $\Theta(T^{2/3})$ regret, and a hopeless problem with $\Theta(T)$ regret. 
This shows that the class of the partial monitoring problems is not limited to the bandit sort but also includes larger classes of problems, such as dynamic pricing.
Since then, several algorithms with a $\tilde{O}(\sqrt{T})$ regret bound for easy problems have been proposed \cite{bartokicml12,bartokcolt13,efficientpm}. Among them, the Bayes-update Partial Monitoring (BPM) algorithm \cite{efficientpm} is state-of-the-art in the sense of empirical performance.

\textbf{Distribution-dependent and minimax regret:}
we focus on the distribution-dependent regret that depends on the strategy of the opponent.
While the minimax regret in partial monitoring has been extensively studied,
 little has been known on distribution-dependent regret in partial monitoring.
To the authors' knowledge, the only paper focusing on the distribution-dependent regret in finite discrete partial monitoring is the one by Bart{\'{o}}k et al.\,\cite{bartokicml12}, which derived $\lo(\log{T})$ distribution-dependent regret for easy problems.
In contrast to this situation, much more interest in the distribution-dependent regret has been shown in the field of multi-armed bandit problems. 
Upper confidence bound (UCB), the most well-known algorithm for the multi-armed bandits, has a distribution-dependent regret bound \cite{LaiRobbins1985,auerfinite}, and algorithms that minimize the distribution-dependent regret (e.g., KL-UCB) has been shown to perform better than ones that minimize the minimax regret (e.g., MOSS), even in instances in which the distributions are hard to distinguish (e.g., Scenario 2 in Garivier et al.\,\cite{GarivierKLUCB}).
Therefore, in the field of partial monitoring,
 we can expect that an algorithm that minimizes the distribution-dependent regret would perform better than the existing ones. 
 
\noindent\textbf{Contribution:}
the contributions of this paper lie in the following three aspects.
First, we derive the regret lower bound: in some special classes of partial monitoring (e.g., multi-armed bandits), an $\lo(\log{T})$ regret lower bound is known to be achievable. In this paper, we further extend this lower bound to obtain a regret lower bound for general partial monitoring problems.
Second, we propose an algorithm called Partial Monitoring DMED (PM-DMED). We also introduce a slightly modified version of this algorithm (PM-DMED-Hinge) and derive its regret bound. PM-DMED-Hinge is the first algorithm with a logarithmic regret bound for hard problems. Moreover, for both easy and hard problems, it is the first algorithm with the optimal constant factor on the leading logarithmic term.
Third, performances of PM-DMED and existing algorithms are compared in numerical experiments. Here, the partial monitoring problems consisted of three specific instances of varying difficulty. In all instances, PM-DMED significantly outperformed the existing methods when a number of rounds is large.
The regret of PM-DMED on these problems quickly approached the theoretical lower bound.

\section{Problem Setup}

This paper studies the finite stochastic partial monitoring problem with $N$ actions, $M$ outcomes, and $A$ symbols.
An instance of the partial monitoring game is defined by a loss matrix $L = (l_{i,j}) \in \Real^{N \times M}$ and a feedback matrix $H = (h_{i,j}) \in [A]^{N \times M}$, where $[A] = \{1,2,\dots,A\}$. At the beginning, the learner is informed of $L$ and $H$.
At each round $t = 1,2,\dots,T$, a learner selects an action $i(t) \in \actions$, and at the same time an opponent selects an outcome $j(t) \in \outcomes$. The learner suffers loss $l_{i(t), j(t)}$, which he/she cannot observe: the only information the learner receives is the signal $h_{i(t),j(t)} \in [A]$. 
We consider a stochastic opponent whose strategy for selecting outcomes is governed by the opponent's strategy $\pst \in \simplex$, where $\simplex$ is a set of probability distributions over an $M$-ary outcome.
The outcome $j(t)$ of each round is an i.i.d.\,sample from $\pst$.

\begin{wrapfigure}{r}{4cm} 
\vspace{-1em}
\includegraphics[scale=0.2]{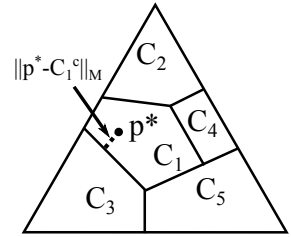}
\caption{Cell decomposition of a partial monitoring instance with $M=3$.}
\label{fig:celldecomp}
\vspace{-2em}
\end{wrapfigure} 

The goal of the learner is to minimize the cumulative loss over $T$ rounds.
Let the optimal action be the one that minimizes the loss in expectation, that is, $\ist = \argmin_{i \in \actions} L_i^\top \pst$, where $L_i$ is the $i$-th row of $L$. Assume that $\ist$ is unique. Without loss of generality, we can assume that $\ist = 1$.
Let $\De_i = (L_i - L_1)^\top \pst \in [0, \infty)$ and $N_i(t)$ be the number of rounds before the $t$-th in which action $i$ is selected.
The performance of the algorithm is measured by the (pseudo) regret,
\begin{equation*}
 \Regret{T} = \sum_{t=1}^T \De_{i(t)} = \sum_{i \in [N]} \De_{i} N_i(T+1),
\end{equation*} 
which is the difference between the expected loss of the learner and the optimal action $1$.
It is easy to see that minimizing the loss is equivalent to minimizing the regret. The expectation of the regret measures the performance of an algorithm that the learner uses.

For each action $i \in \actions$, let $\cell{i}$ be the set of opponent strategies for which action $i$ is optimal:
\begin{equation*}
 \cell{i} = \{q \in \simplex: \forall_{j \neq i} (L_i - L_j)^\top q \leq 0 \}.
\end{equation*}
We call $\cell{i}$ the optimality cell of action $i$. 
Each optimality cell is a convex closed polytope.
Furthermore, we call the set of optimality cells $\{\cell{1},\dots,\cell{N}\}$ the cell decomposition as shown in Figure \ref{fig:celldecomp}. Let $\nonoptimali{i} = \simplex \setminus \cell{i}$ be the set of strategies with which action $i$ is not optimal. 

The signal matrix $\signali{i} \in \{0,1\}^{A \times M}$ of action $i$ is defined as $\signal{i}{k}{j} = \Ind[h_{i,j} = k]$, where $\Ind[X] = 1$ if $X$ is true and $0$ otherwise.
The signal matrix defined here is slightly different from the one in the previous papers (e.g., Bart{\'{o}}k et al.\,\cite{DBLP:journals/jmlr/BartokPS11}) in which the number of rows of $\signali{i}$ is the number of the different symbols in the $i$-th row of $H$. The advantage in using the definition here is that, $\signali{i} \pst \in \Real^A$ is a probability distribution over symbols that
 the algorithm observes when it selects an action $i$. Examples of signal matrices are shown in Section \ref{sec:experiment}.
An instance of partial monitoring is {\it globally observable} if for all pairs $i,j$ of actions, $L_i - L_j \in \oplus_{k \in \actions} \Img S_k^\top$.
In this paper, we exclusively deal with globally observable instances: in view of the minimax regret, this includes trivial, easy, and hard problems. 

\section{Regret Lower Bound}

A good algorithm should work well against any opponent's strategy. 
We extend this idea by introducing the notion of strong consistency:
 a partial monitoring algorithm is strongly consistent
if it satisfies $\Expect[\Regret{T}] = \so(T^a)$ for any $a>0$ and $p \in \simplex$ given $L$ and $H$.

In the context of the multi-armed bandit problem, Lai and Robbins \cite{LaiRobbins1985} derived the regret lower bound of a strongly consistent algorithm: an algorithm must select each arm $i$ until its number of draws $N_i(t)$ satisfies $\log{t} \lesssim N_i(t) d(\theta_i \Vert \theta_1)$, where $d(\theta_i \Vert \theta_1)$ is the KL divergence between the two one-parameter distributions from which the rewards of action $i$ and the optimal action are generated. 
Analogously, in the partial monitoring problem, we can define the minimum number of observations.
\begin{lemma} 
For sufficiently large $T$, a strongly consistent algorithm satisfies:
\begin{equation*}
 \forall_{q \in \nonoptimal} \sum_{i \in \actions} \Expect[N_i(T)] D(\pst_i \Vert S_i q) \geq \log{T} - \so(\log{T}), 
\end{equation*}
where $\pst_i = S_i \pst$ and $D(p \Vert q) = \sum_{i} (p)_i \log{((p)_i/(q)_i)}$ is the KL divergence between two discrete distributions, in which we define $0 \log{0/0} = 0$.
\label{lem:drawlower}
\vspace{-0.3em}
\end{lemma}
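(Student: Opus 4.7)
The plan is a standard Lai--Robbins style change-of-measure argument, adapted to the partial monitoring observation model. Fix an alternative $q \in \nonoptimal$. We may assume $D(\pst_i \Vert S_i q) < \infty$ for every $i \in \actions$, since otherwise the left-hand side is already infinite (for large enough $T$ each action must be played at least once by a strongly consistent algorithm, so $\Expect[N_i(T)]>0$). Let $\Prob_{\pst}$ and $\Probp$ denote the laws of the entire action--signal history $(i(t),Y_t)_{t \le T}$ under the true strategy $\pst$ and under the alternative $q$, respectively. Since the learner's action $i(t)$ is a function of past observations only, the log-likelihood ratio telescopes to
\begin{equation*}
 L_T \;=\; \sum_{t=1}^T \log\frac{(S_{i(t)} \pst)_{Y_t}}{(S_{i(t)} q)_{Y_t}}\com
\end{equation*}
whose $\Prob_{\pst}$-expectation is exactly $\sum_{i} \Expect[N_i(T)]\, D(\pst_i \Vert S_i q)$ by Wald's identity applied action-by-action.

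Next, I would invoke the data-processing form of the change-of-measure inequality (essentially Lemma~1 of Kaufmann--Capp\'e--Garivier): for every event $\EE \in \sigma(i(1),Y_1,\dots,i(T),Y_T)$,
\begin{equation*}
 \sum_{i \in \actions} \Expect[N_i(T)]\, D(\pst_i \Vert S_i q) \;\ge\; d\bigl(\Prob_{\pst}(\EE) \,\Vert\, \Probp(\EE)\bigr)\com
\end{equation*}
where $d(x\Vert y) = x\log(x/y) + (1-x)\log((1-x)/(1-y))$ is the binary KL divergence. The game is now to choose $\EE$ so that $\Prob_{\pst}(\EE)$ is small while $\Probp(\EE)$ is close to $1$, and then read off the asymptotic growth of $d$.

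The natural choice is $\EE = \{N_1(T) \le T/2\}$. Under $\Prob_{\pst}$, action $1$ is optimal, so strong consistency and $\Regret{T}\ge \min_{i\ne 1}\De_i \cdot (T-N_1(T))$ give $\Expect_{\pst}[T-N_1(T)] = \so(T^a)$ for every $a>0$; Markov's inequality then yields $\Prob_{\pst}(\EE) = \so(T^{a-1})$. Under $\Probp$, action $1$ is suboptimal by the choice $q\in\nonoptimal$, so strong consistency (applied to the problem with outcome distribution $q$) gives $\Expectp[N_1(T)] = \so(T^a)$, hence $\Probp(\EE^c) = \so(T^{a-1})$. Plugging these into the binary KL, the first summand of $d$ vanishes (since $x\log x \to 0$), and the second is
\begin{equation*}
 (1-\Prob_{\pst}(\EE))\,\log\frac{1-\Prob_{\pst}(\EE)}{1-\Probp(\EE)} \;\ge\; (1-o(1))\log\frac{1}{\so(T^{a-1})} \;=\; (1-a)\log T - \so(\log T)\per
\end{equation*}
Since $a>0$ is arbitrary, we conclude $\sum_i \Expect[N_i(T)]\, D(\pst_i \Vert S_i q) \ge \log T - \so(\log T)$.

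The main obstacle is not any single inequality but keeping the two measures properly coupled: in particular, one must verify that the ``strong consistency under $q$'' assumption is legitimately applicable (the definition is uniform over $p\in\simplex$, which is exactly why the notion is stated that way) and that the likelihood ratio $L_T$ is a.s.\,finite, which follows from the standing assumption that the KL terms are finite. Once these are in place, the route through the binary-KL lower bound is essentially mechanical.
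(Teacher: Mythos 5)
Your proof is correct, but it follows a genuinely different route from the paper's. The paper runs the classical Lai--Robbins truncation argument: it fixes $\pmd\in\interior(\nonoptimal)$ with optimal action $\imd\neq 1$, introduces the empirical log-likelihood ratio $\hKL$, and decomposes the ``bad'' event $\ED_1=\{\sum_i \nib{i}D(S_i\pst\Vert S_i\pmd)<(1-\ep)\log T,\ \nib{\imd}<\sqrt{T}\}$ according to whether $\hKL$ is small (handled by a change of measure on that event plus consistency under $\pmd$) or large (handled by an almost-sure maximal-law-of-large-numbers argument for $\max_n \hKL_i(n)$), and finally converts $\Prob[\ED_1]=\so(1)$ into the expectation bound via Markov. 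You instead work at the level of expectations from the start: the divergence decomposition identifies $\sum_i\Expect[N_i(T)]D(\pst_i\Vert S_i q)$ with the KL divergence between the two laws of the history, and the binary-KL data-processing inequality applied to $\EE=\{N_1(T)\le T/2\}$ (small under $\pst$ by consistency, large under $q$ because action $1$ is strictly suboptimal for every $q\in\nonoptimal$) gives the $\log T$ lower bound directly. Your route avoids the almost-sure convergence step entirely and needs no $\sqrt{T}$ threshold or $\ep$-slack, at the cost of invoking the Kaufmann--Capp\'e--Garivier inequality as a black box; the paper's route is more self-contained but longer. Two small points to tighten: (i) your dismissal of the case $D(\pst_i\Vert S_i q)=\infty$ assumes $\Expect[N_i(T)]>0$, which strong consistency alone does not guarantee --- if $\Expect[N_i(T)]=0$ that term contributes $0$ and should simply be dropped from both the sum and the likelihood ratio; (ii) you should note explicitly that $q\notin\cell{1}$ gives a strictly positive suboptimality gap $(L_1-L_{j^*(q)})^\top q>0$ for action $1$ under $q$, which is what licenses $\Expectp[N_1(T)]=\so(T^a)$. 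Both are one-line fixes, and the paper's own proof has comparable loose ends (it proves the bound only for $\pmd\in\interior(\nonoptimal)$ and leaves the extension to all of $\nonoptimal$ implicit).
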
%
Lemma \ref{lem:drawlower} can be interpreted as follows: for each round $t$, consistency requires the algorithm to make sure that the possible risk that  action $i \neq 1$ is optimal is smaller than $1/t$.
Large deviation principle \cite{dembozeitouni} states that, the probability that an opponent with strategy $q$ behaves like $\pst$ is roughly
$\exp{(-\sum_i N_i(t) D(\pst_i \Vert S_i q))}$. Therefore, we need to continue exploration of the actions until $\sum_i N_i(t) D(\pst_i \Vert S_i q) \sim \log{t}$ holds for any $q \in \nonoptimal$ to reduce the risk to $\exp{(-\log{t})} = 1/t$.

The proof of Lemma \ref{lem:drawlower} is in Appendix \ref{sec:regretlowerproof} in the supplementary material. 
Based on the technique used in Lai and Robbins \cite{LaiRobbins1985}, the proof considers a modified game in which another action $i \neq 1$ is optimal.
The difficulty in proving the lower bound in partial monitoring lies in that, the feedback structure can be quite complex: for example, to confirm the superiority of action $1$ over $2$, one might need to use the feedback from action $3 \notin \{1,2\}$.
Still, we can derive the lower bound by utilizing the consistency of the algorithm in the original and modified games.

We next derive a lower bound on the regret based on Lemma \ref{lem:drawlower}. 
Note that, the expectation of the regret can be expressed as $\Expect[\Regret{T}] = \sum_{i \neq 1} \Expect[N_i(t)] (L_i - L_1)^\top \pst$.
Let
\begin{align*}
\admq{j}{\{p_i\}}
&=
\biggl\{
\{r_i\}_{i\neq j}\in [0,\infty)^{N-1}: 
\inf_{q\in \closure(\cP_{j}^c): p_j = S_j q} \sum_{i} r_i D(p_i\Vert S_i q) \ge 1
\biggr\},
\end{align*}
where $\closure(\cdot)$ denotes a closure. Moreover, let
\begin{equation*}
 \optcone{j}(p,\{p_i\}) = \inf_{r_i\in\admq{j}{\{p_i\}}} \sum_{i \neq j} r_i (L_i - L_j)^\top p \com
\end{equation*}
the optimal solution of which is
\begin{align*}
\optset{j}{p,\{p_i\}}
=\biggl\{ \hspace{-0.3em} \{r_i\}_{i\neq j}\in \admq{j}{\{ p_i \}}:
\hspace{-0.1em} \sum_{i\neq j} r_i (L_i-L_j)^\top p
=
\optcone{j}(p,\{p_i\})
\biggr\}\per
\end{align*}

The value $\optcone{1}(p^*,\{p^*_i\}) \log{T}$ is the possible minimum regret for observations such that the minimum divergence of $\pst$ from any $q \in \nonoptimal$ is larger than $\log{T}$.
Using Lemma \ref{lem:drawlower} yields the following regret lower bound:
\begin{theorem} 
The regret of a strongly consistent algorithm is lower bounded as:
\begin{equation*}
\E[\regret(T)]\ge
\optcone{1}(p^*,\{p_i^*\}) \log T\n
-\so(\log T).
\end{equation*}
\label{thm:regretlower}
\vspace{-1.5em}
\end{theorem}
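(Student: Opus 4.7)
The plan is to derive Theorem~\ref{thm:regretlower} as a direct consequence of Lemma~\ref{lem:drawlower}. Using the regret decomposition
$$
\E[\regret(T)] = \sum_{i\neq 1} \E[N_i(T)](L_i-L_1)^\top p^*
$$
noted in the text, I normalize the draw counts by $\log T$ and set $r_i^T := \E[N_i(T)]/\log T$ for $i\neq 1$. The target inequality reduces to showing $\sum_{i\neq 1} r_i^T (L_i-L_1)^\top p^* \ge \optcone{1}(p^*,\{p_i^*\}) - \so(1)$, and this will follow once I demonstrate that $\{r_i^T\}$ is (asymptotically) an element of the admissible set $\admq{1}{\{p_i^*\}}$ and then apply the definition of $\optcone{1}$.

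The main step is to verify approximate admissibility of $\{r_i^T\}$. I specialize Lemma~\ref{lem:drawlower} to opponent strategies $q$ satisfying the constraint $S_1 q = p_1^*$ appearing in the definition of $\admq{1}$: for such $q$ the $i=1$ summand vanishes because $D(p_1^*\Vert S_1 q)=0$, so Lemma~\ref{lem:drawlower}, after division by $\log T$, becomes
$$
\sum_{i\neq 1} r_i^T D(p_i^*\Vert S_i q) \;\ge\; 1-\so(1),\qquad q\in\cP_1^c,\;\; S_1 q = p_1^*.
$$
After extending this inequality from $\cP_1^c$ to $\closure(\cP_1^c)$ (addressed below) and rescaling by $1/(1-\so(1))$, the vector $\{r_i^T/(1-\so(1))\}_{i\neq 1}$ lies in $\admq{1}{\{p_i^*\}}$.

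Plugging this into the definition of $\optcone{1}$,
$$
\frac{1}{1-\so(1)}\sum_{i\neq 1} r_i^T (L_i-L_1)^\top p^* \;\ge\; \optcone{1}(p^*,\{p_i^*\}).
$$
Rearranging and multiplying by $\log T$ yields $\E[\regret(T)] \ge (1-\so(1))\optcone{1}(p^*,\{p_i^*\})\log T$. Since $\optcone{1}(p^*,\{p_i^*\})$ is a constant independent of $T$, the multiplicative correction is absorbed into an additive $-\so(\log T)$ term, giving the claimed bound.

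The delicate step I anticipate is extending the infimum bound from $\cP_1^c$ to $\closure(\cP_1^c)$, since the latter is the domain in the definition of $\admq{1}$. Fortunately $D(p\Vert\cdot)$ is continuous as a map into $[0,+\infty]$ and the constraint $S_1 q = p_1^*$ is a closed linear condition, so any boundary point $q_0\in\closure(\cP_1^c)\setminus\cP_1^c$ with $S_1 q_0 = p_1^*$ can be approximated, by a standard convex-geometric perturbation that stays within the affine slice $\{q : S_1 q = p_1^*\}$, by a sequence $q_n\in\cP_1^c$, and the inequality passes to the limit by continuity. Every other step is routine algebra once this density argument is in place.
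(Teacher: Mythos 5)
Your overall strategy---divide the regret decomposition by $\log T$, show the normalized counts $r_i^T=\Expect[N_i(T)]/\log T$ are (approximately) admissible, and invoke the definition of $\optcone{1}$---is the right idea, but there is a genuine gap at the central step. To conclude that $\{r_i^T/(1-\so(1))\}_{i\neq 1}\in\admq{1}{\{p_i^*\}}$ you need the inequality $\sum_{i\neq 1} r_i^T D(p_i^*\Vert S_i q)\ge 1-\so(1)$ to hold with a \emph{single} $\so(1)$ sequence valid \emph{uniformly} over all $q$ in the constraint set $\mathcal{S}=\{q\in\closure(\nonoptimal):S_1q=p_1^*\}$, since membership in $\admq{1}{\{p_i^*\}}$ is defined through an infimum over that set. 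Lemma~\ref{lem:drawlower} does not deliver this: its proof fixes one alternative $\pmd$ and one $\epsilon$, and the $\so(\log T)$ error depends on $\pmd$ (through the consistency rate $\Expectp[T-\nib{\imd}]$ in the modified game and the almost-sure convergence of the log-likelihood ratios). So the bound is pointwise in $q$, and the infimum over $q$ of $\sum_i r_i^T D(p_i^*\Vert S_i q)$ could still drift below $1$ if the near-minimizing $q$ moves with $T$. Your proposal silently promotes the pointwise statement to a uniform one.

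This is exactly the difficulty the paper's proof is built to avoid: it argues by contradiction along a sequence $T_1<T_2<\cdots$ on which the claimed bound fails, extracts for each $T_t$ a witness $q'_t\in\mathcal{S}$ violating admissibility, uses compactness of $\mathcal{S}$ to pass to a convergent subsequence $q'_{t_u}\to q'$, and then applies Lemma~\ref{lem:drawlower} at the single limit point $q'$ together with lower semicontinuity of $D(p\Vert\cdot)$ to reach a contradiction. That same lower-semicontinuity step also disposes of the closure issue you flag as the ``delicate step'' (and note $D(p\Vert\cdot)$ is only lower semicontinuous, not continuous, into $[0,+\infty]$, so your density-plus-continuity argument needs the semicontinuity form anyway). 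In short: the secondary issue you worried about is handled by a standard device, but the primary issue---uniformity in $q$---is unaddressed, and repairing it essentially forces you into the paper's compactness/subsequence argument.
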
%
From this theorem, we can naturally
measure the harshness of the instance by $\optcone{1}(p^*,\{p_i^*\})$,
whereas the past studies (e.g., Vanchinathan et al.\,\cite{efficientpm})
ambiguously define the harshness as the closeness to the boundary of the cells.
Furthermore, we show in Lemma \ref{lem_opt} in the Appendix that
$\optcone{1}(p^*,\{p_i^*\})=\lo(
N/\kyorim{p^*-\cP_1^c}^{2})$:
 the regret bound has at most quadratic dependence on $\kyorim{p^*-\cP_1^c}$, which is defined in Appendix \ref{append_main} as the closeness of $\pst$ to the boundary of the optimal cell.

\vspace{-0.7em}
\section{PM-DMED Algorithm}
\label{sec:pmdmed}
\vspace{-1em}

In this section, we describe the partial monitoring deterministic minimum empirical divergence (PM-DMED) algorithm, which is inspired by DMED \cite{HondaDMED} for solving the multi-armed bandit problem. 
Let $\phat_i(t) \in [0,1]^A$ be the empirical distribution of the symbols under the selection of action $i$. Namely, the $k$-th element of $\phat_i(t)$ is $(\sum_{t'=1}^{t-1} \Ind[i(t') = i \cap h_{i(t'), j(t')} = k]) / (\sum_{t'=1}^{t-1} \Ind[i(t')=i])$. We sometimes omit $t$ from $\phat_i$ when it is clear from the context.
Let the empirical divergence of $q \in \simplex$ be
$\sum_{i \in \actions} N_i(t) D(\phat_i(t) \Vert S_i q)$, the exponential of which can be considered as a likelihood that $q$ is the opponent's strategy.

The main routine of PM-DMED is in Algorithm \ref{alg_dmedloop}. 
At each loop, the actions in the current list $Z_C$ are selected once. The list for the actions in the next loop $Z_N$ is determined by the subroutine in Algorithm \ref{alg_pmdmed}. 
The subroutine checks whether the empirical divergence of each point $q \in \nonoptimal$ is larger than $\log{t}$ or not (Eq.\,\eqref{base_cond_tansaku}). 
If it is large enough, it exploits the current information by selecting $\ihat(t)$, the optimal action based on the estimation $\phat(t)$ that minimizes the empirical divergence.
Otherwise, it selects the actions with the number of observations below the minimum requirement for making the empirical divergence of each suboptimal point $q \in \nonoptimal$ larger than $\log{t}$.

Unlike the $N$-armed bandit problem in which a reward is associated with an action, in the partial monitoring problem, actions, outcomes, and feedback signals can be intricately related.
Therefore, we need to solve a non-trivial optimization to run PM-DMED.
Later in Section \ref{sec:experiment}, we discuss a practical implementation of the optimization.

\begin{minipage}[t]{.49\linewidth}
\null 
\vspace{-2.5em}
\begin{algorithm}[H]
\caption{Main routine of PM-DMED and PM-DMED-Hinge}\label{alg_dmedloop}
\begin{algorithmic}[1]
\STATE \textbf{Initialization:} select each action once.
\STATE $Z_C, Z_R \leftarrow [N], Z_N \leftarrow \emptyset$.
\WHILE{$t \le T$}
  \FOR{$i(t) \in Z_C$ in an arbitrarily fixed order}
    \STATE Select $i(t)$, and receive feedback.
    \STATE $Z_R \leftarrow Z_R \setminus \{i(t)\}$.
    \STATE Add actions to $Z_N$ in accordance with $\begin{cases}
      \text{Algorithm \ref{alg_pmdmed}} & \text{(PM-DMED)} \\
      \text{Algorithm \ref{alg_pmdmedhinge}} & \text{(PM-DMED-Hinge)} \\
      \end{cases}$.
    \STATE $t \leftarrow t+1$.
  \ENDFOR
  \STATE $Z_C, Z_R \leftarrow Z_N$, $Z_N \leftarrow \emptyset$.
\ENDWHILE 
\end{algorithmic}
\end{algorithm}%

\end{minipage}%
\begin{minipage}[t]{.02\linewidth}
\vspace{1em}\hspace{3em}
\end{minipage}%
\begin{minipage}[t]{.49\linewidth}
\null
\vspace{-2.5em}
\begin{algorithm}[H]
\caption{PM-DMED subroutine for adding actions to $Z_N$ (without duplication).}\label{alg_pmdmed}
\begin{algorithmic}[1]
\STATE \textbf{Parameter:} $c>0$.
\STATE Compute an arbitrary $\phat(t)$ such that
\vspace{-0.3em}
\begin{equation}
  \phat(t) \hspace{-0.1em} \in \hspace{-0.1em} \argmin_{q}\sum_iN_i(t) D(\phat_i(t)\Vert S_i q) \label{base_def_phat}
\vspace{-0.3em}
\end{equation}
and let $\ihat(t) = \argmin_{i} L_i^\top \phat(t)$.
\STATE If $\ihat(t) \notin Z_R$ then put $\ihat(t)$ into $Z_N$.
\STATE If there are actions $i \notin Z_R$ such that
\vspace{-0.3em}
\begin{align}
&N_i(t)< c \sqrt{\log t}\label{ineq:halflog}
\end{align}
then put them into $Z_N$.
\STATE If
\vspace{-0.3em}
\begin{equation}
  \{N_i(t)/\log t\}_{i \neq \ihat(t)} \notin \admq{\ihat(t)}{\{\phat_i(t)\}}
  \label{base_cond_tansaku}
\vspace{-0.3em}
\end{equation}
then compute some
\vspace{-0.3em}
\begin{align}
\label{base_r_seisitu}
\{r_i^*\}_{i\neq \ihat(t)}
\in \optset{\ihat(t)}{\phat(t),\{\phat_i(t)\}}
\vspace{-0.3em}
\end{align}
and put all actions $i$ such that $i \notin Z_R$ and $r_i^*>N_i(t)/\log t$ into $Z_N$.
\end{algorithmic}
\end{algorithm}%
\end{minipage}

\textbf{Necessity of $\sqrt{\log{T}}$ exploration:}
PM-DMED tries to observe each action to some extent (Eq.\,\eqref{ineq:halflog}), which is necessary for the following reason:
consider a four-state game characterized by
\begin{equation*}
  L = \left(
    \begin{array}{cccc}
      0 & 1 & 1 & 0 \\
      10 & 1 & 0 & 0 \\
      10 & 0 & 1 & 0 \\
      11 & 11 & 11 & 11
    \end{array}
  \right) \text{, \hspace{1em}}
  H = \left(
    \begin{array}{cccc}
      1 & 1 & 1 & 1 \\
      1 & 2 & 2 & 3 \\
      1 & 2 & 2 & 3 \\
      1 & 1 & 2 & 2 
    \end{array}
  \right) \text{, and \hspace{1em}} \pst = (0.1, 0.2, 0.3, 0.4)^{\top}.
\end{equation*}
The optimal action here is action $1$, which does not yield any useful information.
By using action $2$, one receives three kinds of symbols from which one can estimate $(\pst)_1$, $(\pst)_2 + (\pst)_3$, and $(\pst)_4$, where $(\pst)_j$ is the $j$-th component of $\pst$.
From this, an algorithm can find that $(\pst)_1$ is not very small and thus the expected loss of actions $2$ and $3$ is larger than that of action $1$.
Since the feedback of actions $2$ and $3$ are the same, one may also use action $3$ in the same manner. However, the loss per observation is $1.2$ and $1.3$ for actions $2$ and $3$, respectively, and thus it is better to use action $2$. This difference comes from the fact that $(\pst)_2 = 0.2 < 0.3 = (\pst)_3$. Since an algorithm does not know $\pst$ beforehand, it needs to observe action $4$, the only source for distinguishing $(\pst)_2$ from $(\pst)_3$.
Yet, an optimal algorithm cannot select it more than $\Omega(\log{T})$ times because it affects the $\lo(\log{T})$ factor in the regret. 
In fact, $\lo((\log{T})^a)$ observations of action $4$ with some $a>0$ are sufficient to be convinced that $(\pst)_2 < (\pst)_3$ with probability $1-\so(1/T^{\mathrm{poly}(a)})$. 
For this reason, PM-DMED selects each action $\sqrt{\log{t}}$ times.

\section{Experiment}
\label{sec:experiment}

\begin{figure*}[t!]
\vspace{-1em}
\begin{center}
  \setlength{\subfigwidth}{.32\linewidth}
  \addtolength{\subfigwidth}{-.32\subfigcolsep}
  \begin{minipage}[t]{\subfigwidth}
  \centering
 \subfigure[three-states, benign]{
 \includegraphics[scale=0.3]{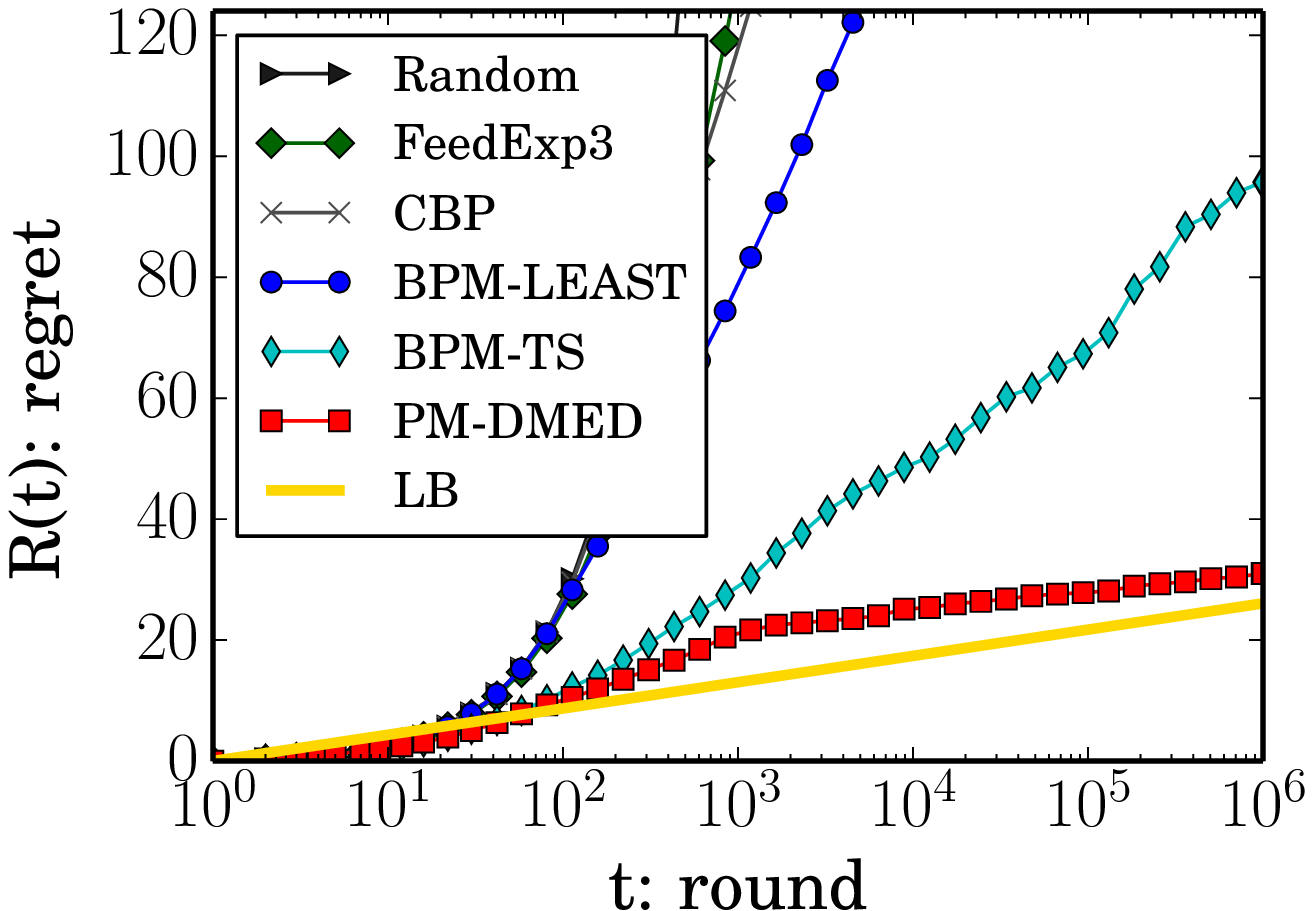}
 }
 \vspace{-0.501em}
 \end{minipage}\hfill
  \begin{minipage}[t]{\subfigwidth}
  \centering
 \subfigure[three-states, intermediate]{\includegraphics[scale=0.3]{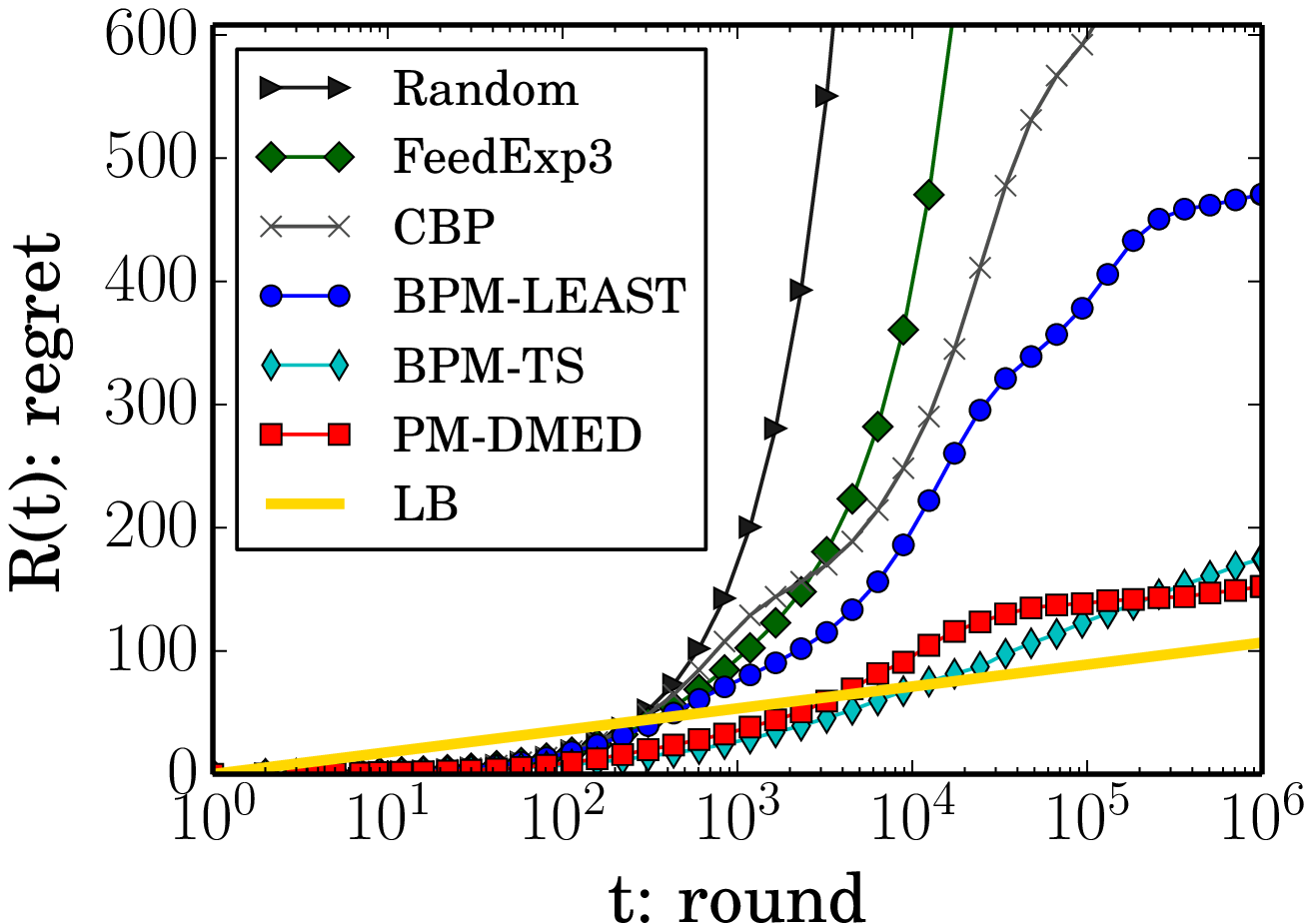}
 }
 \vspace{-0.501em}
  \end{minipage}\hfill
  \begin{minipage}[t]{\subfigwidth}
  \centering
 \subfigure[three-states, harsh]{\includegraphics[scale=0.3]{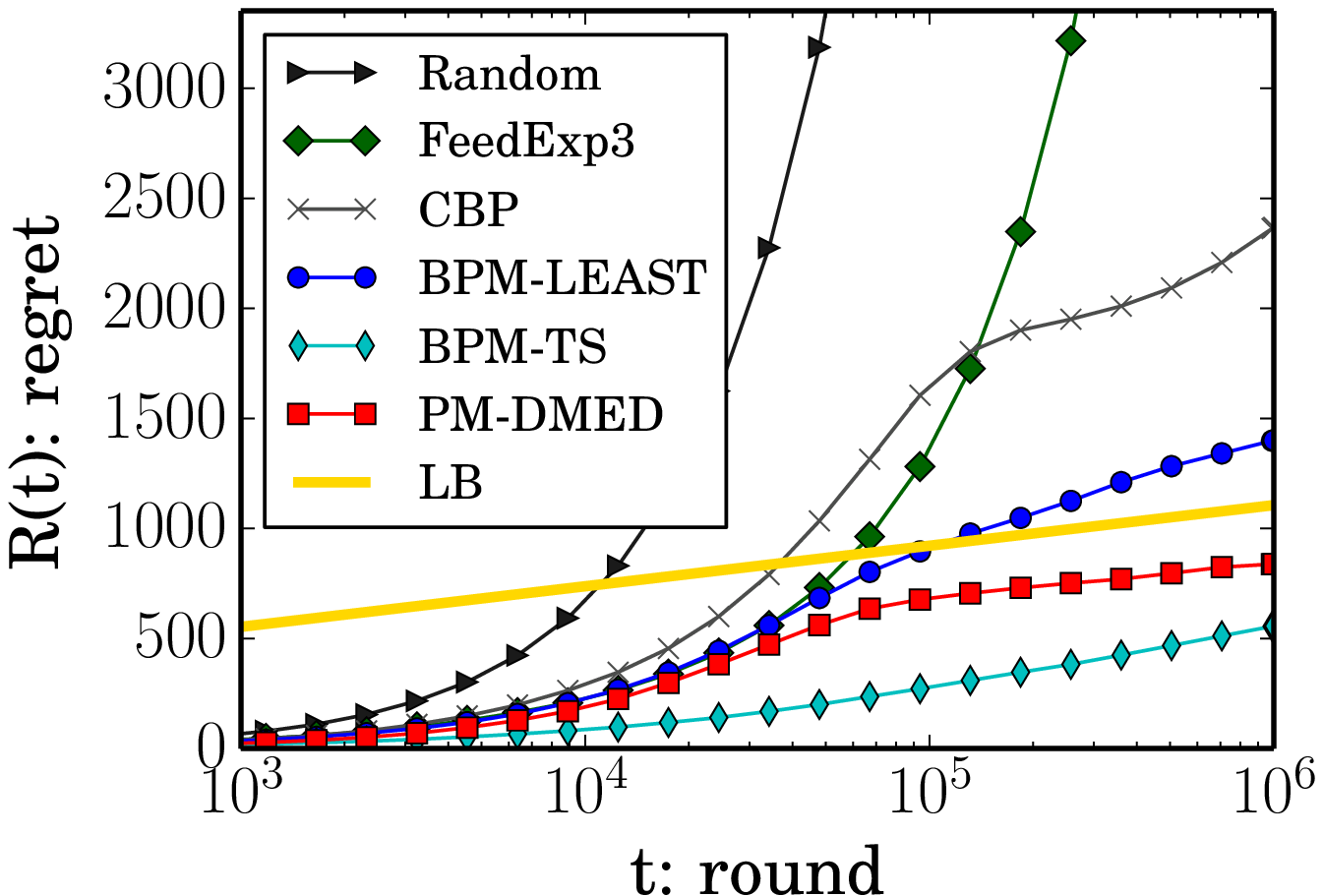}
 }
 \vspace{-0.501em}
  \end{minipage}
  \begin{minipage}[t]{\subfigwidth}
  \centering
 \subfigure[dynamic pricing, benign]{\includegraphics[scale=0.3]{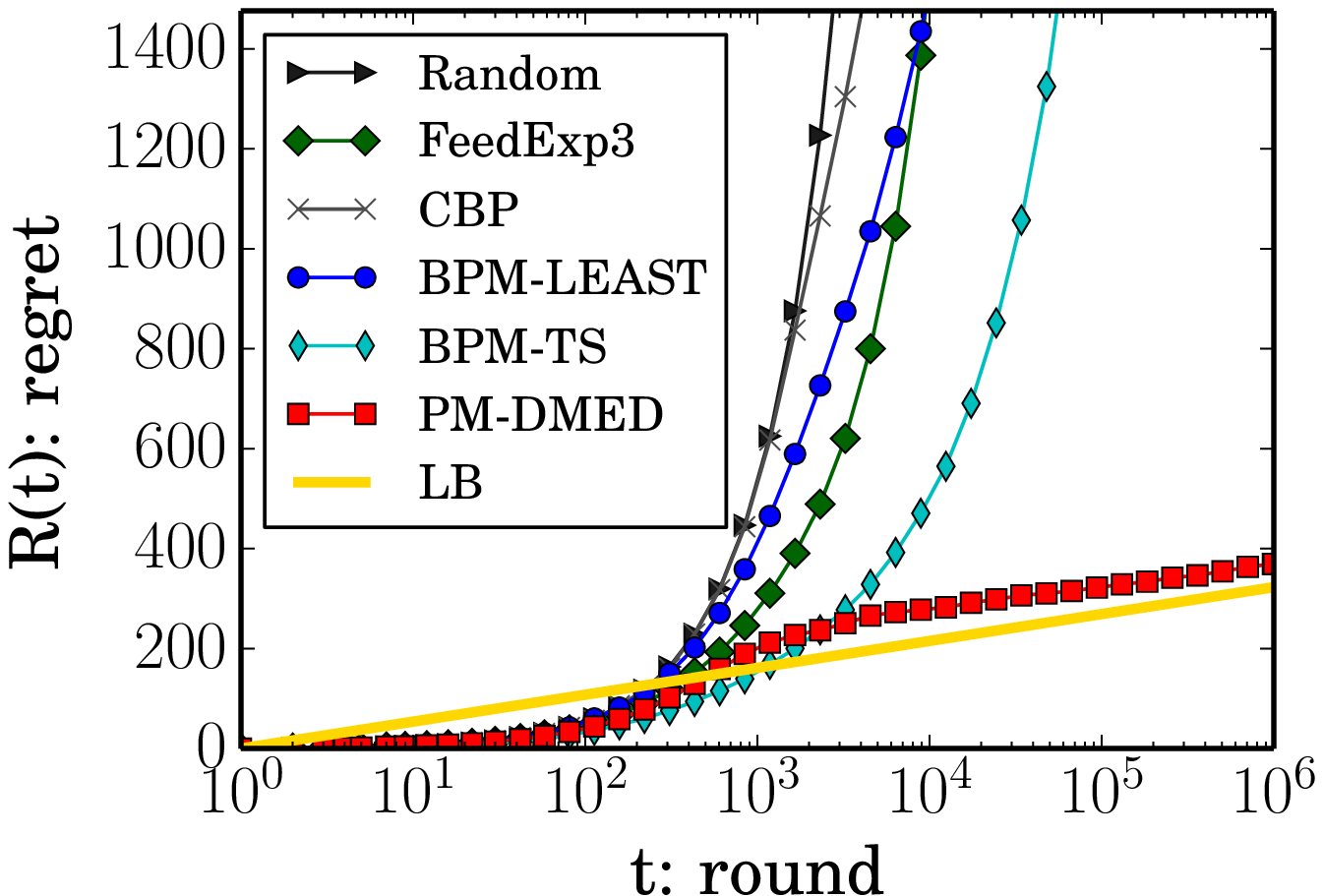}
 }
 \vspace{-0.501em}
 \end{minipage}\hfill
  \begin{minipage}[t]{\subfigwidth}
  \centering
 \subfigure[dynamic pricing, intermediate]{\includegraphics[scale=0.3]{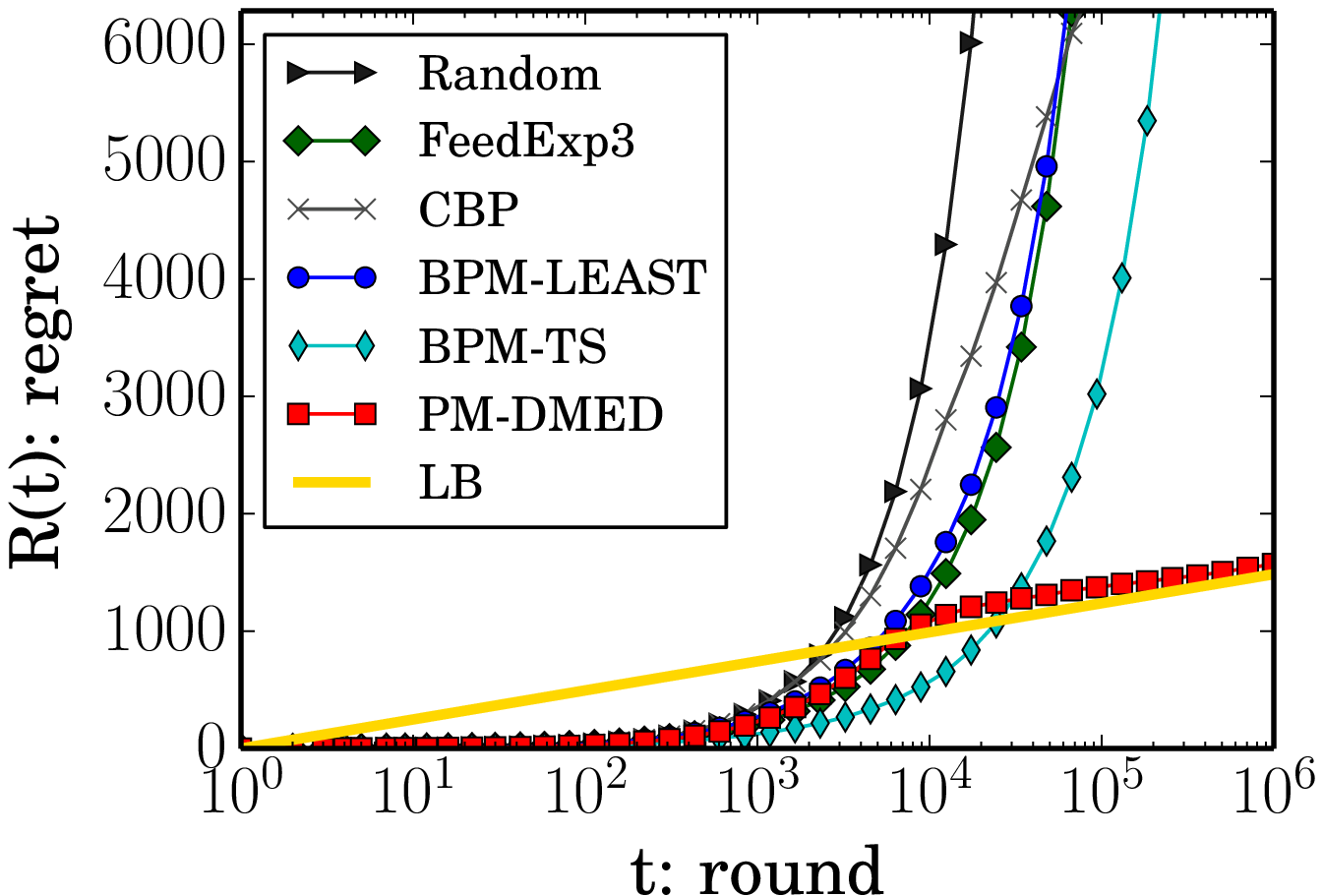}
  }
 \vspace{-0.501em}
  \end{minipage}\hfill
  \begin{minipage}[t]{\subfigwidth}
  \centering
 \subfigure[dynamic pricing, harsh]{\includegraphics[scale=0.3]{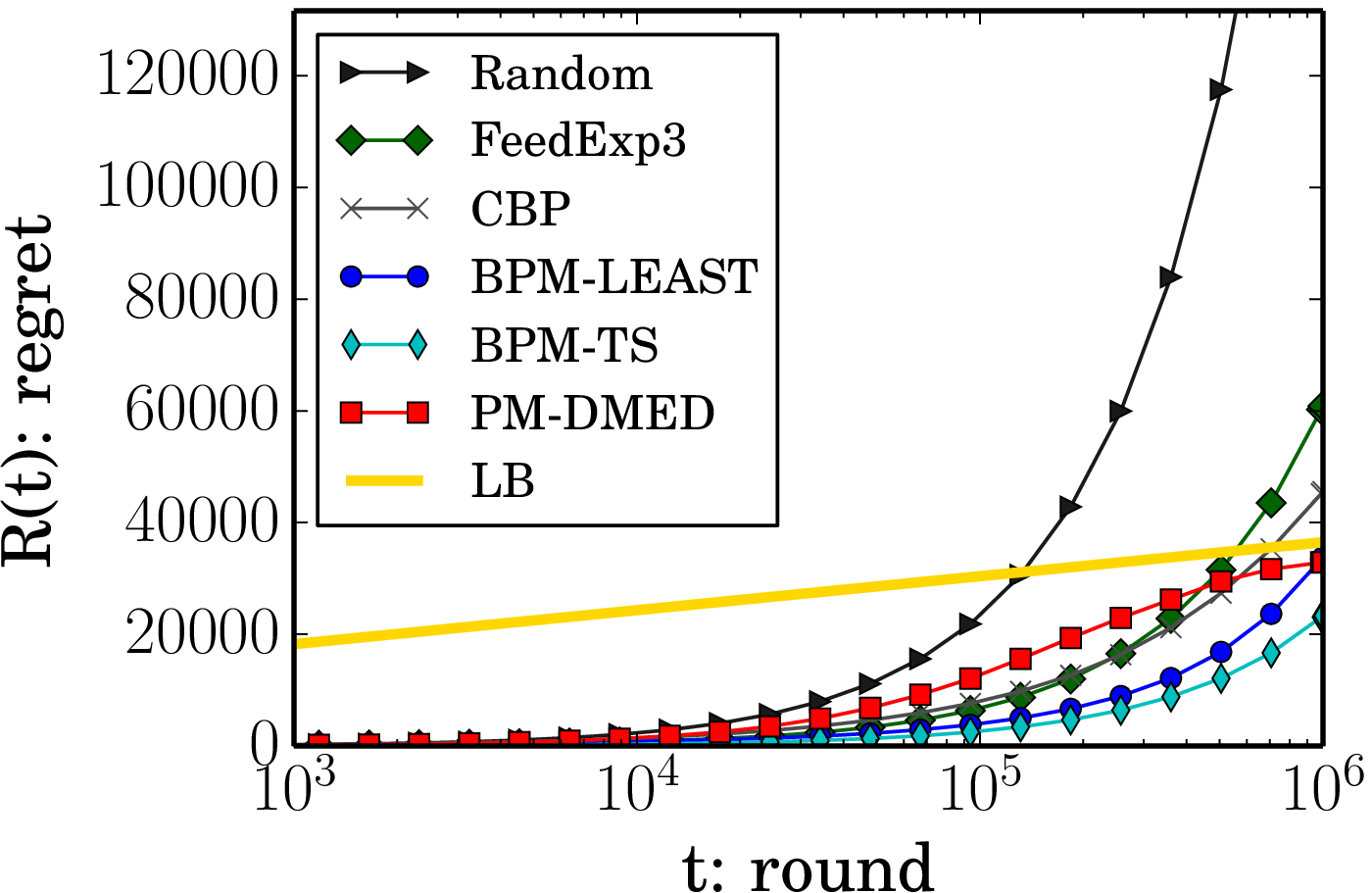}
 }
 \vspace{-0.501em}
  \end{minipage}\hfill
  \begin{minipage}[t]{\subfigwidth}
  \centering
 \subfigure[four-states]{\includegraphics[scale=0.3]{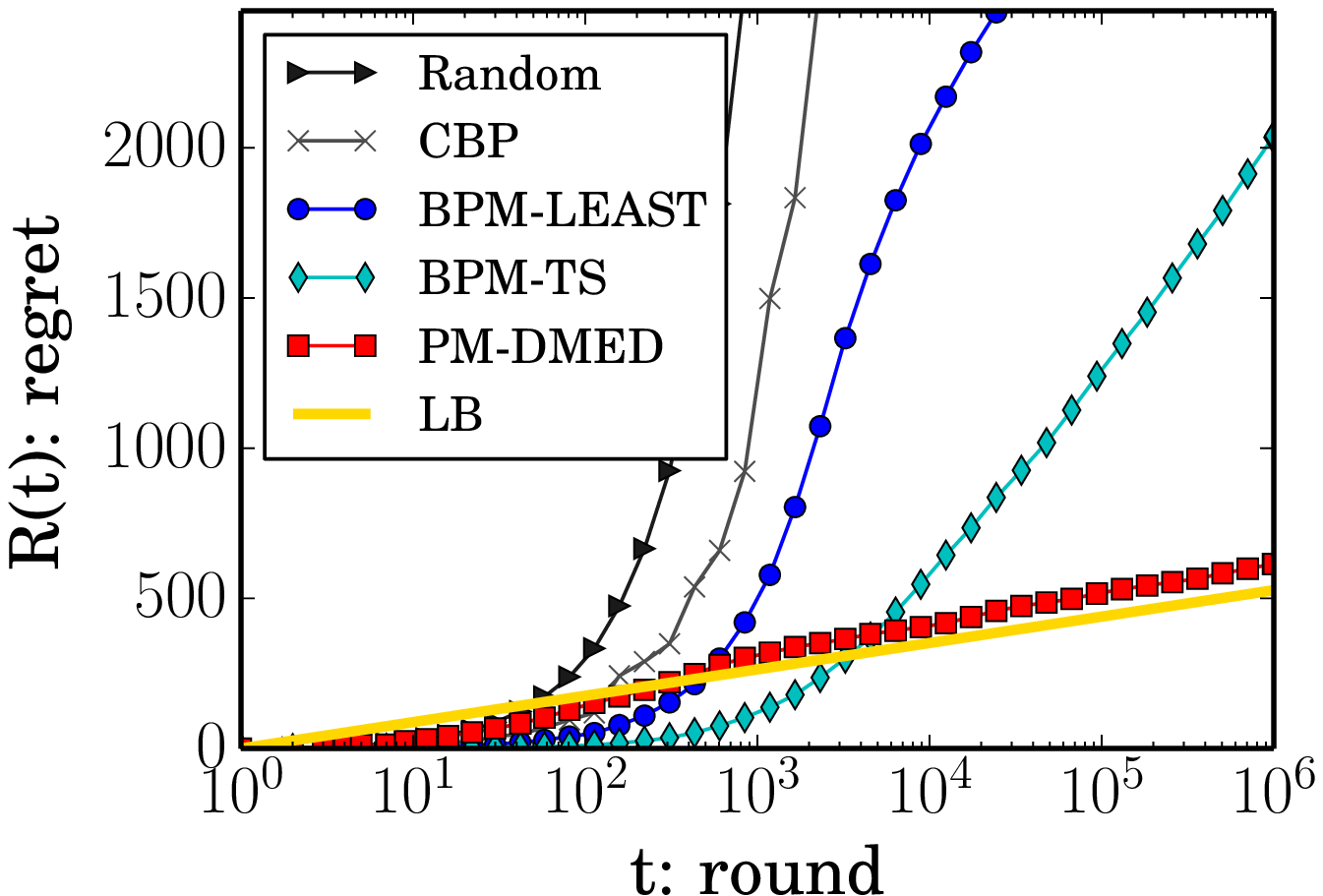}
 }
 \vspace{-0.501em}
  \end{minipage}\hfill
\end{center}
\vspace{-1em}
  \caption{Regret-round semilog plots of algorithms. The regrets are averaged over 100 runs. LB is the asymptotic regret lower bound of Theorem \ref{thm:regretlower}.}
\vspace{-1.0em}
 \label{fig:regret1}
\end{figure*}%

Following Bart{\'{o}}k et al.\,\cite{bartokicml12}, we compared the performances of algorithms in three different games: the four-state game (Section \ref{sec:pmdmed}), a three-state game and dynamic pricing. Experiments on the $N$-armed bandit game was also done, and the result is shown in Appendix \ref{subsec:banditexperiment}.

The three-state game, which is classified as easy in terms of the minimax regret, is characterized by:
\vspace{-0.1em}
\begin{equation*}
  L = \left(
    \begin{array}{ccc}
      1 & 1 & 0 \\
      0 & 1 & 1 \\
      1 & 0 & 1
    \end{array}
  \right) \text{\hspace{1em} and \hspace{1em}}
  H = \left(
    \begin{array}{ccc}
      1 & 2 & 2 \\
      2 & 1 & 2 \\
      2 & 2 & 1
    \end{array}
  \right).
\vspace{-0.1em}
\end{equation*}
The signal matrices of this game are,
\vspace{-0.1em}
\begin{equation*}
  \signali{1} = \left(
    \begin{array}{ccc}
      1 & 0 & 0 \\
      0 & 1 & 1
    \end{array}
  \right),  \text{\hspace{1em}}
  \signali{2} = \left(
    \begin{array}{ccc}
      0 & 1 & 0 \\
      1 & 0 & 1
    \end{array}
  \right),  \text{ and }
  \signali{3} = \left(
    \begin{array}{ccc}
      0 & 0 & 1 \\
      1 & 1 & 0
    \end{array}
  \right).
\vspace{-0.1em}
\end{equation*}

Dynamic pricing, which is classified as hard in terms of the minimax regret, is a game that models a repeated auction between a seller (learner) and a buyer (opponent).
At each round, the seller sets a price for a product, and at the same time, the buyer secretly sets a maximum price he is willing to pay. The signal is ``buy'' or ``no-buy'', and the seller's loss is either a given constant (no-buy) or the difference between the buyer's and the seller's prices (buy). The loss and feedback matrices are:
\begin{equation*}
  L = \left(
    \begin{array}{cccc}
      0      &    1   & \ldots & N-1 \\
      c      &    0   & \ldots & N-2 \\
      \vdots & \ddots & \ddots & \vdots \\
      c      & \ldots &    c   &     0
    \end{array}
  \right) \text{\hspace{1em} and \hspace{1em}}
  H = \left(
    \begin{array}{cccc}
      2      &    2   & \ldots &    2   \\
      1      &    2   & \ldots &    2   \\
      \vdots & \ddots & \ddots & \vdots \\
      1      & \ldots &    1   &    2
    \end{array}
  \right), 
\end{equation*}
where signals $1$ and $2$ correspond to no-buy and buy.
The signal matrix of action $i$ is 
\begin{equation*}
\vspace{-0.5em}
   \signali{i} = \Bigl( \overbrace{ 
    \begin{array}{cc}
      1 \hspace{1em}\ldots\hspace{1em} 1  \\
      0 \hspace{1em}\ldots\hspace{1em}  0  
    \end{array}
  }^{i-1} 
  \overbrace{ 
    \begin{array}{cc}
      0 \hspace{1em}\ldots\hspace{1em} 0  \\
      1 \hspace{1em}\ldots\hspace{1em} 1   
    \end{array}
  }^{M-i+1} \Bigr)
.
\end{equation*}
Following Bart{\'{o}}k et al.\,\cite{bartokicml12}, we set $N=5, M=5$, and $c=2$.

In our experiments with the three-state game and dynamic pricing, we tested three settings regarding the harshness of the opponent:
 at the beginning of a simulation,
we sampled 1,000 points uniformly at random from $\simplex$,
  then sorted them by $\optcone{1}(p^*,\{p_i^*\})$.
We chose the top 10\%, 50\%, and 90\% harshest ones as the opponent's strategy in the harsh, intermediate, and benign settings, respectively. 

We compared Random, FeedExp3 \cite{PiccolboniChristian2001}, CBP \cite{bartokicml12} with $\alpha=1.01$, BPM-LEAST, BPM-TS \cite{efficientpm}, and PM-DMED with $c=1$.
Random is a naive algorithm that selects an action uniformly random.
FeedExp3 requires a matrix $G$ such that $H^\top G = L^\top$, and thus one cannot apply it to the four-state game. CBP is an algorithm of logarithmic regret for easy games. The parameters $\eta$ and $f(t)$ of CBP were set in accordance with Theorem 1 in their paper. BPM-LEAST is a Bayesian algorithm with $\tilde{O}(\sqrt{T})$ regret for easy games, and BPM-TS is a heuristic of state-of-the-art performance. The priors of two BPMs were set to be uninformative to avoid a misspecification, as recommended in their paper.

The computation of $\phat(t)$ in \eqref{base_def_phat} and the evaluation of the condition in \eqref{base_cond_tansaku} involve convex optimizations, which were done with Ipopt \cite{ipopt}. 
Moreover, obtaining $\{r_i^*\}$ in \eqref{base_r_seisitu} is classified as a linear semi-infinite programming (LSIP) problem, a linear programming (LP) with finitely many variables and infinitely many constraints. Following the optimization of BPM-LEAST \cite{efficientpm}, we resorted to a finite sample approximation and used the Gurobi LP solver \cite{gurobi} in computing $\{r_i^*\}$: at each round, we sampled 1,000 points from $\simplex$, and relaxed the constraints on the samples.
To speed up the computation, we skipped these optimizations in most rounds with large $t$ and used the result of the last computation.
The computation of the coefficient $\optcone{1}(p^*,\{p_i^*\})$ of the regret lower bound (Theorem \ref{thm:regretlower}) is also an LSIP, which was approximated by 100,000 sample points from $\nonoptimal$. 

The experimental results are shown in Figure \ref{fig:regret1}. In the four-state game and the other two games with an easy or intermediate opponent, PM-DMED outperforms the other algorithms when the number of rounds is large. 
 In particular, in the dynamic pricing game with an intermediate opponent, the regret of PM-DMED at $T=10^6$ is ten times smaller than those of the other algorithms.
Even in the harsh setting in which the minimax regret matters, PM-DMED has some advantage over all algorithms except for BPM-TS. 
With sufficiently large $T$, the slope of an optimal algorithm should converge to LB. In all games and settings, the slope of PM-DMED converges to LB, which is empirical evidence of the optimality of PM-DMED.

\vspace{-0.8em}
\section{Theoretical Analysis}
\vspace{-1em}
\label{sec:analysis}

\begin{algorithm}[t]
\caption{PM-DMED-Hinge subroutine for adding actions to $Z_N$ (without duplication).}\label{alg_pmdmedhinge}
\begin{algorithmic}[1]
\STATE \textbf{Parameters:} $c>0$, $f(n) = b n^{-1/2}$ for $b>0$, $\alpha(t) = a / (\log\log t)$ for $a>0$.
\STATE Compute arbitrary $\phat(t)$ which satisfies
\vspace{-0.3em}
\begin{equation}
\phat(t) \in \argmin_{q} \sum_i N_i(t) (D(\phat_i(t)\Vert S_i q)-f(N_i(t)))_+\label{def_phat}
\vspace{-0.3em}
\end{equation}
and let $\ihat(t) = \argmin_{i} L_i^\top \phat(t)$.
\STATE If $\ihat(t) \notin Z_R$ then put $\ihat(t)$ into $Z_N$.
\STATE If 
\vspace{-0.3em}
\begin{equation}
 \phat(t) \notin \cP_{\ihat(t),\al(t)}
 \label{closetoboundary}
\vspace{-0.3em}
\end{equation}
or there exists an action $i$ such that
\vspace{-0.3em}
\begin{equation}
D(\phat_i(t)\Vert S_i \phat(t))> f(N_i(t))\label{zyouken_syuusoku}
\vspace{-0.3em}
\end{equation}
then put all actions $i\notin Z_R$ into $Z_N$.
\STATE If there are actions $i$ such that
\vspace{-0.3em}
\begin{equation}
 N_i(t)< c \sqrt{\log t}\label{cond_sqrtlog}
\vspace{-0.3em}
\end{equation}
then put the actions not in $Z_R$ into $Z_N$.
\STATE If
\vspace{-0.3em}
\begin{equation}
\{N_i(t)/\log t\}_{i\neq \ihat(t)}\notin \adm{\ihat(t)}{\{\phat_i(t),\,f(N_i(t))\}}
\vspace{-0.3em}
\label{cond_tansaku}
\end{equation}
then compute some
\vspace{-0.3em}
\begin{align}
\{r_i^*\}_{i\neq \ihat(t)}\in \optset{\ihat(t)}{\phat(t), \{\phat_i(t),\,f(N_i(t))\}}\label{r_seisitu}
\vspace{-0.3em}
\end{align}
and put all actions such that $i \notin Z_R$ and $r_i^*>N_i(t)/\log t$ into $Z_N$.
If such $r_i^*$ is infeasible then put all action $i\notin Z_R$ into $Z_N$.
\end{algorithmic}
\end{algorithm}%
Section \ref{sec:experiment} shows that the empirical performance of PM-DMED is very close to the regret lower bound in Theorem \ref{thm:regretlower}. Although the authors conjecture that PM-DMED is optimal, it is hard to analyze PM-DMED.
The technically hardest part arises from the case in which the divergence of each action is small but not yet fully converged. To circumvent this difficulty, we can introduce a discount factor.
Let 
\vspace{-0.2em}
\begin{equation}
\adm{j}{\{p_i,\de_i\}}
\hspace{-0.2em} = \hspace{-0.3em}
\biggl\{
\{r_i\}_{i\neq j}\in [0,\infty)^{N-1}:
\hspace{-0.5em}\inf_{q\in \closure(\cP_{j}^c):
D(p_j\Vert S_j q)\le \de_j}
\sum_{i} r_i(D(p_i\Vert S_i q)-\de_i)_+\ge 1
\biggr\},
\label{feasible}
\vspace{-0.2em}
\end{equation}
where $(X)_+ = \max(X, 0)$.
Note that $\adm{j}{\{p_i,\de_i\}}$ in \eqref{feasible} is
a natural generalization of $\admq{j}{\{p_i\}}$ in Section \ref{sec:pmdmed}
in the sense that $\adm{j}{\{p_i,0\}} = \admq{j}{\{p_i\}}$.
Event $\{N_i(t)/\log t\}_{i \neq 1} \in \adm{1}{\{\phat_i(t),\de_i\}}$ means that
the number of observations $\{N_i(t)\}$ is enough to ensure that the ``$\{\de_i\}$-discounted'' empirical divergence of each $q \in \nonoptimal$ is larger than $\log{t}$.
Analogous to $\adm{j}{\{p_i,\de_i\}}$, we define
\vspace{-0.2em}
\begin{align*}
\optc{j}{p,\{p_i,\de_i\}}&=
\inf_{\{r_i\}_{i\neq j}\in \adm{j}{\{p_i,\de_i\})}}
\sum_{i\neq j} r_i (L_j-L_i)^\top p
\vspace{-0.2em}
\end{align*}
and its optimal solution by
\vspace{-0.2em}
\begin{align*}
\optset{j}{p,\{p_i,\de_i\}}&=\biggl\{
\{r_i\}_{i\neq j}\in \adm{j}{\{p_i,\de_i\}}:
\sum_{i\neq j} r_i (L_j-L_i)^\top p
=
\optc{j}{p,\{p_i,\de_i\}}
\biggr\}\per\n
\vspace{-0.2em}
\end{align*}
We also define
$\cP_{i,\al}=\{p\in \simplex: L_i^\top p + \al \le \min_{j\neq i}L_j^\top p \}
$, the optimal region of action $i$ with margin.
PM-DMED-Hinge shares the main routine of Algorithm \ref{alg_dmedloop} with PM-DMED and lists the next actions by Algorithm \ref{alg_pmdmedhinge}.  
Unlike PM-DMED, it (i) discounts $f(N_i(t))$ from the empirical divergence $D(\phat_i(t)\Vert S_i q)$. 
Moreover, (ii) when $\phat(t)$ is close to the cell boundary, it encourages more exploration to identify the cell it belongs to by Eq.\,\eqref{closetoboundary}.

\begin{theorem}
Assume the following regularity conditions hold for $p^*$.
(1) $\optset{1}{p,\{p_i,\de_i\}}$ is unique at $p=p^*,p_i=S_i p^*,\,\de_i=0$.
Moreover, (2) for $\setd_{\de}=\{q :D(p_1^* \Vert S_1 q)\le \de\}$,
it holds that
$\closure(\interior(\cP_1^c)\cap \setd_{\de})=\closure(\closure(\cP_1^c)\cap \setd_{\de})$
for all $\de\ge 0$ in some neighborhood of $\de=0$,
where
$\closure(\cdot)$ and $\interior(\cdot)$ denote the closure and the interior,
respectively.
Then, 
\begin{align*}
\E[\regret(T)]\le
\optc{1}{p^*,\{p^*_i\}} \log T\n
+\so(\log T)\per
\end{align*}
\label{thm_main}
\end{theorem}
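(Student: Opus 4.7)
The strategy is to write $\E[\regret(T)]=\sum_{i\neq 1}\De_i\E[N_i(T+1)]$ and to upper-bound each $\E[N_i(T+1)]$ by $r_i^*\log T+\so(\log T)$ for some $\{r_i^*\}\in\optset{1}{p^*,\{p_i^*,0\}}$; summing then yields the claimed bound. The hinge $f(n)=bn^{-1/2}$ and the margin $\al(t)=a/\log\log t$ are designed so that on a high-probability ``good'' event the algorithm behaves as if it knew the true $p^*$, while bad events contribute only $\so(\log T)$.

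\textbf{Concentration and cell identification.} First I would establish a uniform-in-time concentration bound showing that the event $\mathcal{G}(t)=\bigcap_{i\in\actions}\{D(\phat_i(t)\Vert \signali{i}p^*)\le f(N_i(t))\}$ fails only with probability summable in $t$, once \eqref{cond_sqrtlog} ensures $N_i(t)\ge c\sqrt{\log t}$. On $\mathcal{G}(t)$, $p^*$ is feasible for the hinge-penalized minimization in \eqref{def_phat} with objective value $0$, so $\phat(t)$ must satisfy $D(\phat_i(t)\Vert \signali{i}\phat(t))\le f(N_i(t))$ for every $i$, and condition \eqref{zyouken_syuusoku} is not triggered. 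Combined with $\phat(t)\to p^*\in\interior(\cell{1})$ and $\al(t)\to 0$, the margin test \eqref{closetoboundary} is eventually passed by action $1$, so after a random but almost-surely finite time $T_0$ one has $\ihat(t)=1$ and the only rules that can add a suboptimal action $i\neq 1$ to $Z_N$ are \eqref{cond_sqrtlog} and \eqref{cond_tansaku}.

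\textbf{LP continuity and counting.} The core lemma is a Berge-type maximum-theorem argument: the regularity assumptions (1) uniqueness of the optimizer at $(p^*,\{p_i^*\},\{0\})$ and (2) $\closure(\interior(\nonoptimal)\cap\setd_\de)=\closure(\closure(\nonoptimal)\cap\setd_\de)$ near $\de=0$ together imply joint continuity of the optimal value $\optc{1}{p,\{p_i,\de_i\}}$ and upper semi-continuity of the optimal-solution correspondence at the target point. Hence any measurable selector $\{r_i^*(t)\}\in\optset{\ihat(t)}{\phat(t),\{\phat_i(t),f(N_i(t))\}}$ satisfies $r_i^*(t)\to r_i^*$ for some $\{r_i^*\}\in\optset{1}{p^*,\{p_i^*,0\}}$. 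On the good event, whenever action $i\neq 1$ is placed into $Z_N$ via \eqref{cond_tansaku} one has $N_i(t)/\log t<r_i^*(t)\le r_i^*+\so(1)$, and since Algorithm \ref{alg_dmedloop} adds each action at most once per loop of bounded length, we obtain $N_i(T+1)\le(r_i^*+\so(1))\log T+\lo(\sqrt{\log T})$, where the sublogarithmic term absorbs \eqref{cond_sqrtlog}.

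\textbf{Bad events and main obstacle.} The contribution to the regret from $\mathcal{G}(t)^c$, from $\{\ihat(t)\neq 1\}$, and from $\{\phat(t)\notin \cP_{1,\al(t)}\}$ is handled by showing each of these events has probability $\so(1/t)$ uniformly for $t\ge T_0$, so their aggregate contribution is $\so(\log T)$; the role of \eqref{closetoboundary} is precisely that whenever $\phat(t)$ drifts close to a cell boundary, every action is added to $Z_N$, which refreshes the estimator and prevents lock-in to a wrong cell, with the slow schedule $\al(t)=a/\log\log t$ keeping this correction at lower order. The main obstacle is the LP-continuity step: the feasible set in \eqref{feasible} depends on $(p_j,\de_j)$ through a constraint indexed by the non-convex closed set $\closure(\nonoptimali{j})\cap\{q:D(p_j\Vert \signali{j}q)\le\de_j\}$, and at $\de=0$ this set can degenerate because the level set collapses to a lower-dimensional slice of the cell boundary; Assumption (2) is the minimal topological regularity needed to rule out a Hausdorff-discontinuous jump, and turning it into a quantitative rate $r_i^*(t)-r_i^*=\so(1)$ simultaneously in the cost coefficients, the constraint parameters $\phat_i(t)$, and the discount terms $f(N_i(t))$ is the delicate step that motivates introducing the hinge $f$ in the first place.
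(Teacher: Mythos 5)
Your overall architecture (a good event yielding the $r_i^*\log T$ term via continuity of the LSIP solution, plus bad events contributing $\so(\log T)$) matches the paper's, and your identification of a Berge/Hogan-type continuity of $\optset{1}{p,\{p_i,\de_i\}}$ as the purpose of the two regularity conditions is exactly Theorem \ref{thm_conti}; the counting step for the main term also matches Lemma \ref{lem_main}. The genuine gap is in your treatment of the bad events. You claim that $\{\ihat(t)\neq 1\}$ and the failures of concentration each have probability $\so(1/t)$ uniformly in $t$, so that summing over $t$ gives $\so(\log T)$. This is unjustified and in general false: the informative actions are sampled adaptively, possibly only $\Theta(\sqrt{\log t})$ or $\Theta(r_i^*\log t)$ times, so the probability that $\phat(t)$ falls into a wrong cell at round $t$ can be of order $\e^{-\Theta(\sqrt{\log t})}$ or $t^{-\epsilon}$ with $\epsilon<1$, and summing such terms over $t\le T$ yields something far larger than $\log T$. (Relatedly, an almost-surely finite $T_0$ after which $\ihat(t)=1$ does not by itself bound the \emph{expected} regret.)

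The paper closes this hole with a self-correcting counting argument (Lemma \ref{lem_kantigai}) rather than a per-round probability bound: it conditions on the realized empirical distributions $\phat_{i,n_i}$ and counts $\{n_i\}$, observes that if the hinged empirical divergence $\sum_i n_i(D(\phat_{i,n_i}\Vert S_ip^*)-f(n_i))_+$ equals $x$, then the wrong-cell event with these counts forces condition \eqref{cond_tansaku} to fire once $t\ge \e^{x}$, so some $N_i$ must increase within $N$ rounds and that configuration can persist for at most $\e^{x}+N$ rounds; the expectation of $\e^{x}$ over the randomness of the observations is then $\lo(1)$ per configuration by the method-of-types bound \eqref{ldpdisc}, and the sum over configurations converges because $n_i f(n_i)=\Theta(\sqrt{n_i})$. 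This interplay between the exploration rule and the deviations of the empirical distributions is the technically hardest part of the proof (and is why the hinge $f$ and the $c\sqrt{\log t}$ floor are calibrated as they are); your proposal assumes it away. The same per-configuration counting, not a $\so(1/t)$ union bound, is also what handles the near-boundary event \eqref{closetoboundary} in Lemma \ref{lem_marginal}.
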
%
We prove this theorem in Appendix \ref{append_main}.
Recall that $\optset{1}{p,\{\phat_i(t),\de_i\}}$ is the set of
optimal solutions of an LSIP.
In this problem, KKT conditions and the duality theorem apply as in the case of finite constraints; thus, we can check whether Condition 1 holds or not
for each $p^*$ (see, e.g., Ito et al.\,\cite{ito_semi_infinite} and references therein).
Condition 2 holds in most cases, and
an example of an exceptional case is shown in Appendix \ref{sec:cornercase}.

Theorem \ref{thm_main} states that PM-DMED-Hinge has a regret upper bound that matches the lower bound of Theorem \ref{thm:regretlower}.

\begin{corollary} (Optimality in the $N$-armed bandit problem)
In the $N$-armed Bernoulli bandit problem, the regularity conditions in Theorem \ref{thm_main} always hold. Moreover, the coefficient of the leading logarithmic term in the regret bound of the partial monitoring problem is equal to the bound given in Lai and Robbins \cite{LaiRobbins1985}. Namely,
$\optcone{1}(p^*,\{p_i^*\}) = \sum_{i \neq 1}^N (\Delta_i/d(\mu_i \Vert \mu_1))$,
\label{cor:bandit}%
where $d(p \Vert q) = p \log{(p/q)} + (1-p) \log{((1-p)/(1-q))}$ is the KL-divergence between Bernoulli distributions.
\vspace{-0.2em}
\end{corollary}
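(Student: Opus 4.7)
The plan is to embed the $N$-armed Bernoulli bandit as a partial monitoring instance with outcome space $\{0,1\}^N$ (so $M=2^N$), loss $l_{i,j} = 1 - x_i^{(j)}$, and feedback $h_{i,j} = x_i^{(j)}$, and then to evaluate $\optcone{1}$ in closed form. Under this embedding $S_i q$ depends on $q$ only through the arm-$i$ marginal $\nu_i = \E_q[X_i]$, so $D(p_i^*\Vert S_iq) = d(\mu_i\Vert\nu_i)$; likewise $(L_i-L_j)^\top q = \nu_j-\nu_i$. Any joint $q$ may therefore be replaced by its product counterpart without changing any quantity in the LSIP, and the problem is fully reparameterized by the mean vector $(\nu_1,\ldots,\nu_N)\in[0,1]^N$.

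Next I would compute $\admq{1}{\{p_i^*\}}$ explicitly. The constraint $S_1q = p_1^*$ forces $\nu_1=\mu_1$, killing the arm-$1$ term of $\sum_i r_i D(p_i^*\Vert S_iq)$, while $q\in\closure(\cP_1^c)$ forces $\nu_{i^*}\ge \mu_1$ for some $i^*\neq 1$. Since the other marginals vary independently, the infimum is attained by raising exactly one $\nu_{i^*}$ to $\mu_1$ and leaving the remaining $\nu_i$ at $\mu_i$ (contributing cost $0$), giving
\[
\inf_{q\in\closure(\cP_1^c):\, S_1q = p_1^*}\sum_{i}r_i D(p_i^*\Vert S_iq) \;=\; \min_{i^*\neq 1} r_{i^*}\, d(\mu_{i^*}\Vert\mu_1).
\]
Requiring this to be $\ge 1$ reduces $\admq{1}{\{p_i^*\}}$ to the box $\{r : r_i\, d(\mu_i\Vert\mu_1)\ge 1,\ \forall i\neq 1\}$. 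Since each $\Delta_i>0$, the linear objective $\sum_{i\neq 1}r_i\Delta_i$ is uniquely minimized at the vertex $r_i^* = 1/d(\mu_i\Vert\mu_1)$, which simultaneously verifies regularity condition~1 (uniqueness of $\optset{1}{p^*,\{p_i^*,0\}}$) and yields $\optcone{1}(p^*,\{p_i^*\}) = \sum_{i\neq 1}\Delta_i/d(\mu_i\Vert\mu_1)$, i.e.\ the Lai--Robbins coefficient.

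For regularity condition~2 I would use that $\cP_1^c = \{q:\exists j,\,\nu_j>\nu_1\}$ is open, so $\interior(\cP_1^c)=\cP_1^c$ and only the inclusion $\closure(\closure(\cP_1^c)\cap\setd_\de)\subseteq\closure(\cP_1^c\cap\setd_\de)$ is nontrivial. Any boundary point $q\in(\closure(\cP_1^c)\setminus\cP_1^c)\cap\setd_\de$ has some $i^*\neq 1$ with $\nu_{i^*}=\nu_1$; nudging $\nu_{i^*}$ upward by vanishing $\epsilon$ keeps $\nu_1$ (and hence $S_1 q$) fixed, so membership in $\setd_\de$ is preserved and the perturbed point enters $\cP_1^c$, producing the required approximating sequence uniformly in small $\de$. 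The main mildly subtle point is the initial reduction to product marginals, together with checking that the LSIP infimum is attained in the closure via the $\nu_{i^*}\downarrow\mu_1$ limit rather than requiring a strict inequality; both follow from the marginal-only dependence of $D(p_i^*\Vert S_i q)$ and the stability of $\closure(\cP_1^c)$ under the chosen perturbation. After these observations, every remaining step is elementary algebra on a box-constrained LP.
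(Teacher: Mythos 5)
Your proposal is correct and follows essentially the same route as the paper's proof: the same $2^N$-outcome embedding, the observation that $D(p_i^*\Vert S_iq)=d(\mu_i\Vert \hat{\mu}_i(q))$ reduces $\admq{1}{\{p_i^*\}}$ to the box $\{r_i\ge 1/d(\mu_i\Vert\mu_1)\}$ (using that the marginals vary independently), the unique vertex solution giving both Condition~1 and the Lai--Robbins constant, and the same upward perturbation of a tied marginal $\nu_{i^*}=\nu_1$ while holding $S_1q$ fixed to verify Condition~2.
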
%
Corollary \ref{cor:bandit}, which is proven in Appendix \ref{sec:banditproof}, states that 
PM-DMED-Hinge attains the optimal regret of the $N$-armed bandit if we run it on an $N$-armed bandit game represented as partial monitoring.

\textbf{Asymptotic analysis:} it is Theorem \ref{thm_conti} where we 
lose the finite-time property.
This theorem shows the continuity of the optimal solution set
$\optset{1}{p,\{p_i,\de_i\}}$ of $\optcone{j}(p,\{p_j\})$,
  which does not mention how close 
$\optset{1}{p,\{p_i,\de_i\}}$ is to
$\optset{1}{p^*,\{p_i^*,0\}}$
if $\max\{\kyorim{p-p^*},\max_i\kyorim{p_i-p_i^*},\max_i\de_i\}\le \de$
for given $\de$.
To obtain an explicit bound,
we need \textit{sensitivity analysis}, the theory of the robustness of the optimal value and the solution for small deviations of its parameters
(see e.g., Fiacco \cite{fiacco}).
In particular, the optimal solution of partial monitoring involves an infinite number of constraints, which makes the analysis quite hard.
For this reason, we will not perform a finite-time analysis.
Note that, the $N$-armed bandit problem is a special instance in which we can avoid solving the above optimization and a finite-time optimal bound is known.

\textbf{Necessity of the discount factor:} we are not sure whether discount factor $f(n)$ in PM-DMED-Hinge is necessary or not. We also empirically tested PM-DMED-Hinge: although it is better than the other algorithms in many settings, such as dynamic pricing with an intermediate opponent, it is far worse than PM-DMED. 
We found that our implementation, which uses the Ipopt nonlinear optimization solver, was sometimes inaccurate at optimizing \eqref{def_phat}: there were some cases in which the true $\pst$ satisfies $\forall_{i \in \actions} D(\phat_i(t)\Vert S_i p^*)-f(N_i(t)) = 0$, while the solution $\phat(t)$ we obtained had non-zero hinge values.
In this case, the algorithm lists all actions from \eqref{zyouken_syuusoku}, which degrades performance.
Determining whether the discount factor is essential or not is our future work.

\vspace{-1.0em}
\section*{Acknowledgements}
\vspace{-1.0em}

The authors gratefully acknowledge the advice of Kentaro Minami and sincerely thank the anonymous reviewers for their useful comments. This work was supported in part by JSPS KAKENHI Grant Number 15J09850 and 26106506.

\clearpage
\bibliographystyle{unsrt}
\bibliography{bibs/manual}

\begin{thebibliography}{10}

\bibitem{Littlestone:1994:WMA:184036.184040}
Nick Littlestone and Manfred~K. Warmuth.
\newblock The weighted majority algorithm.
\newblock {\em Inf. Comput.}, 108(2):212--261, February 1994.

\bibitem{LaiRobbins1985}
T.~L. Lai and Herbert Robbins.
\newblock Asymptotically efficient adaptive allocation rules.
\newblock {\em Advances in Applied Mathematics}, 6(1):4--22, 1985.

\bibitem{DBLP:conf/focs/KleinbergL03}
Robert~D. Kleinberg and Frank~Thomson Leighton.
\newblock The value of knowing a demand curve: Bounds on regret for online
  posted-price auctions.
\newblock In {\em FOCS}, pages 594--605, 2003.

\bibitem{DBLP:journals/jmlr/AgarwalBD10}
Alekh Agarwal, Peter~L. Bartlett, and Max Dama.
\newblock Optimal allocation strategies for the dark pool problem.
\newblock In {\em AISTATS}, pages 9--16, 2010.

\bibitem{DBLP:journals/tit/Cesa-BianchiLS05}
Nicol{\`{o}} Cesa{-}Bianchi, G{\'{a}}bor Lugosi, and Gilles Stoltz.
\newblock Minimizing regret with label efficient prediction.
\newblock {\em {IEEE} Transactions on Information Theory}, 51(6):2152--2162,
  2005.

\bibitem{DBLP:conf/icml/Zinkevich03}
Martin Zinkevich.
\newblock Online convex programming and generalized infinitesimal gradient
  ascent.
\newblock In {\em ICML}, pages 928--936, 2003.

\bibitem{DBLP:conf/colt/DaniHK08}
Varsha Dani, Thomas~P. Hayes, and Sham~M. Kakade.
\newblock Stochastic linear optimization under bandit feedback.
\newblock In {\em COLT}, pages 355--366, 2008.

\bibitem{PiccolboniChristian2001}
Antonio Piccolboni and Christian Schindelhauer.
\newblock Discrete prediction games with arbitrary feedback and loss.
\newblock In {\em COLT}, pages 208--223, 2001.

\bibitem{DBLP:journals/mor/Cesa-BianchiLS06}
Nicol{\`{o}} Cesa{-}Bianchi, G{\'{a}}bor Lugosi, and Gilles Stoltz.
\newblock Regret minimization under partial monitoring.
\newblock {\em Math. Oper. Res.}, 31(3):562--580, 2006.

\bibitem{DBLP:journals/jmlr/BartokPS11}
G{\'{a}}bor Bart{\'{o}}k, D{\'{a}}vid P{\'{a}}l, and Csaba Szepesv{\'{a}}ri.
\newblock Minimax regret of finite partial-monitoring games in stochastic
  environments.
\newblock In {\em COLT}, pages 133--154, 2011.

\bibitem{bartokicml12}
G{\'{a}}bor Bart{\'{o}}k, Navid Zolghadr, and Csaba Szepesv{\'{a}}ri.
\newblock An adaptive algorithm for finite stochastic partial monitoring.
\newblock In {\em ICML}, 2012.

\bibitem{bartokcolt13}
G{\'{a}}bor Bart{\'{o}}k.
\newblock A near-optimal algorithm for finite partial-monitoring games against
  adversarial opponents.
\newblock In {\em COLT}, pages 696--710, 2013.

\bibitem{efficientpm}
Hastagiri~P. Vanchinathan, G{\'{a}}bor Bart{\'{o}}k, and Andreas Krause.
\newblock Efficient partial monitoring with prior information.
\newblock In {\em NIPS}, pages 1691--1699, 2014.

\bibitem{auerfinite}
Peter Auer, Nicol{\'o} Cesa-bianchi, and Paul Fischer.
\newblock {Finite-time Analysis of the Multiarmed Bandit Problem}.
\newblock {\em Machine Learning}, 47:235--256, 2002.

\bibitem{GarivierKLUCB}
Aur{\'{e}}lien Garivier and Olivier Capp{\'{e}}.
\newblock The {KL-UCB} algorithm for bounded stochastic bandits and beyond.
\newblock In {\em COLT}, pages 359--376, 2011.

\bibitem{dembozeitouni}
Amir Dembo and Ofer Zeitouni.
\newblock {\em Large deviations techniques and applications}.
\newblock Applications of mathematics. Springer, New York, Berlin, Heidelberg,
  1998.

\bibitem{HondaDMED}
Junya Honda and Akimichi Takemura.
\newblock {An Asymptotically Optimal Bandit Algorithm for Bounded Support
  Models}.
\newblock In {\em COLT}, pages 67--79, 2010.

\bibitem{ipopt}
Andreas W{\"{a}}chter and Carl~D. Laird.
\newblock Interior point optimizer ({IPOPT}).

\bibitem{gurobi}
Gurobi~Optimization Inc.
\newblock Gurobi optimizer.

\bibitem{ito_semi_infinite}
S.~Ito, Y.~Liu, and K.~L. Teo.
\newblock A dual parametrization method for convex semi-infinite programming.
\newblock {\em Annals of Operations Research}, 98(1-4):189--213, 2000.

\bibitem{fiacco}
Anthony~V. Fiacco.
\newblock {\em Introduction to sensitivity and stability analysis in nonlinear
  programming}.
\newblock Academic Press, New York, 1983.

\bibitem{gravlai97}
T.L. Graves and T.L. Lai.
\newblock Asymptotically efficient adaptive choice of control laws in
  controlled {M}arkov chains.
\newblock {\em SIAM J. Contr. and Opt.}, 35(3):715--743, 1997.

\bibitem{DBLP:conf/colt/KaufmannCG14}
Emilie Kaufmann, Olivier Capp{\'{e}}, and Aur{\'{e}}lien Garivier.
\newblock On the complexity of {A/B} testing.
\newblock In {\em COLT}, pages 461--481, 2014.

\bibitem{bubeckthesis}
S{\'e}bastien Bubeck.
\newblock {\em {Bandits Games and Clustering Foundations}}.
\newblock Theses, {Universit{\'e} des Sciences et Technologie de Lille - Lille
  I}, June 2010.

\bibitem{hogan}
William~W. Hogan.
\newblock Point-to-set maps in mathematical programming.
\newblock {\em SIAM Review}, 15(3):591--603, 1973.

\bibitem{CoverThomasSnd}
Thomas~M. Cover and Joy~A. Thomas.
\newblock {\em Elements of Information Theory, Second Edition (Wiley Series in
  Telecommunications and Signal Processing)}.
\newblock Wiley-Interscience, 2006.

\end{thebibliography}
\clearpage

\appendix

\section{Case in which Condition 2 Does Not Hold}
\label{sec:cornercase}

\begin{figure}[t]
\begin{center}
\centerline{\includegraphics[scale=0.25]{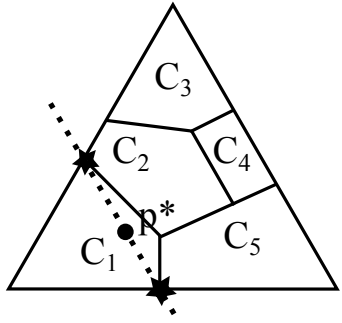}}
\caption{A corner case.}
\label{fig:celldecompexpection}
\end{center}
\end{figure}

Figure \ref{fig:celldecompexpection} is an example that Theorem \ref{thm_main} does not cover.
The dotted line is $\{q: p_1^* = S_1 q\}$,
which (accidentally) coincides with a line that makes the convex polytope of $\cP_1^c$.
In this case, Condition 2 does not hold because $\interior(\cP_1^c)\cap \setd_{0} = \emptyset$ whereas $\closure(\cP_1^c)\cap \setd_{0} \neq \emptyset$ (two starred points),
which means that a slight modification of $\pst$ changes
the set of cells that intersects with the dotted line discontinuously.
We exclude these unusual cases for the ease of analysis.

The authors consider that it is quite hard to give the optimal regret bound without such regularity conditions. In fact, many regularity conditions are assumed in Graves and Lai \cite{gravlai97}, where another generalization of the bandit problem is considered and the regret lower bound is expressed in terms of LSIP. In this paper, the regularity conditions are much simplified by the continuity argument in Theorem 6 but it remains an open problem to fully remove them.

\section{Proof: Regret Lower Bound}
\label{sec:regretlowerproof}

In this section, we prove Lemma \ref{lem:drawlower} and Theorem \ref{thm:regretlower}.

\begin{proof}[Proof of Lemma \ref{lem:drawlower}]
The technique here is mostly inspired from Theorem 1 in Lai and Robbins \cite{LaiRobbins1985}. 
The use of a $\sqrt{T}$ term is inspired from Kaufmann et al.\,\cite{DBLP:conf/colt/KaufmannCG14}.
Let $\pmd \in \interior{(\nonoptimal)}$ and $\imd \neq 1$ be the optimal action under the opponent's strategy $\pmd$.
We consider a modified partial monitoring game with its opponent's strategy is $\pmd$.

\noindent\textbf{Notation:}
Let $\Obsi^m \in [A]$ is the signal of the $m$-th observation of action $i$.
Let
\begin{equation*}
  \hKL_i(n) = \sum_{m=1}^{n} \log{\left(\frac{ (S_i \pst)_{\Obsi^m} }{(S_i \pmd)_{\Obsi^m} }\right)},
\end{equation*}
 and $\hKL = \sum_{i \in \actions} \hKL_i(\nib{i})$. 
 Let $\Probp$ and $\Expectp$ be the probability and the expectation with respect to the modified game, respectively.
Then, for any event $\EE$,
\begin{equation}
  \Probp[\EE] = \Expect\left[\Ind[\EE] \exp{\left(-\hKL\right)}\right] \label{ineq:changemeasure}
\end{equation}
holds.
Now, let us define the following events:
\begin{align*}
 \ED_1 & = \left\{ \sum_{i \in \actions} \nib{i} D(S_i \pst \Vert S_i\pmd) < (1-\epsilon) \log{T}, \nib{\imd} < \sqrt{T}  \right\}, \nn
 \ED_2 & = \left\{ \hKL \leq \left(1 - \frac{\epsilon}{2}\right) \log{T} \right\}, \nn
 \ED_{12} & = \ED_1 \cap \ED_2, \nn
 \ED_{1\backslash2} & = \ED_1 \cap \ED_2^c.
\end{align*}

\noindent\textbf{First step ($\Prob[\ED_{12}] = \so(1)$):} from \eqref{ineq:changemeasure},
\begin{align*}
 \Probp[\ED_{12}]
  & \geq \Expect\left[\Ind[\ED_{12}] \exp{\left(- \left(1 - \frac{\epsilon}{2}\right) \log{T}\right)}\right]  = T^{-(1-\epsilon/2)} \Prob[\ED_{12}].
\end{align*}
By using this we have
\begin{align}
 \Prob[\ED_{12}]
  & \leq T^{(1-\epsilon/2)}\Probp[\ED_{12}] \nn
  & \leq T^{(1-\epsilon/2)}\Probp\left[ \nib{\imd} < \sqrt{T} \right] \nn
  & = T^{(1-\epsilon/2)}\Probp\left[T - \nib{\imd} > T - \sqrt{T} \right] \nn
  & \leq T^{(1-\epsilon/2)}\frac{ \Expectp[T - \nib{\imd}] }{ T -  \sqrt{T}} \text{\hspace{2em} (by the Markov inequality)}. \label{ineq:edbound}
\end{align}
Since this algorithm is strongly consistent, $\Expectp[T - \nib{\imd} ] \rightarrow \so(T^a)$ for any $a>0$.
Therefore, the RHS of the last line of \eqref{ineq:edbound} is $\so(T^{a-\epsilon/2})$, which, by choosing sufficiently small $a$, converges to zero as $T \rightarrow \infty$. In summary, $\Prob[\ED_{12}] = \so(1)$.

\noindent\textbf{Second step ($\Prob[\ED_{1\backslash2}] = \so(1)$):}
we have
\begin{align*}
\lefteqn{
 \Prob[\ED_{1\backslash2}]
} \nn
   & = \Prob\left[\sum_{i \in \actions} \nib{i} D(S_i\pst \Vert S_i\pmd) < (1-\epsilon) \log{T}, \nib{\imd} < \sqrt{T}, \sum_{i \in \actions} \hKL_i(\nib{i}) > \left(1 - \frac{\epsilon}{2}\right) \log{T} \right].
\end{align*}
Note that 
\begin{align*}
  \max_{1 \leq n \leq N} \hKL_i(n) = \max_{1 \leq n \leq N} \sum_{m=1}^n \log{\left(\frac{(S_i \pst)_{\Obsi^m}}{ (S_i \pmd)_{\Obsi^m} }\right)},
\end{align*}
 is the maximum of the sum of positive-mean random variables, and thus converges to is average  (c.f., Lemma 10.5 in \cite{bubeckthesis}). Namely, 
\begin{equation*}
 \lim_{N \rightarrow \infty} \max_{1 \leq n \leq N} \frac{\hKL_i(n)}{N} \rightarrow D(S_i\pst \Vert S_i\pmd)
\end{equation*}
almost surely. Therefore, 
\begin{equation*}
  \lim_{T \rightarrow \infty} \frac{ \max_{\{ \nib{i} \} \in \Natural^{N}, \sum_{i \in \actions} \nib{i} D(S_i\pst \Vert S_i\pmd) < (1-\epsilon) \log{T} } \sum_{i \in \actions} \hKL_i(\nib{i}) }{\log{T}} \rightarrow 1-\epsilon
\end{equation*}
almost surely. 
By using this fact and $1-\epsilon/2 > 1 - \epsilon$, we have
\begin{equation*}
  \Prob\left[ \max_{\{ \nib{i} \} \in \Natural^{N}, \sum_{i \in \actions} \nib{i} D(S_i\pst \Vert S_i\pmd) < (1-\epsilon) \log{T} }\sum_{i \in \actions} \hKL_i(\nib{i}) > \left(1 - \frac{\epsilon}{2}\right) \log{T} \right] = \so(1).
\end{equation*}
In summary, we obtain $\Prob\left[ \ED_{1\backslash2} \right] = \so(1)$.

\noindent\textbf{Last step:} we here have
\begin{align*}
\ED_1 & 
 = \left\{ \sum_{i \in \actions} \nib{i} D(S_i\pst \Vert S_i\pmd) < (1-\epsilon) \log{T} \right\} \cap \left\{\nib{\imd} < \sqrt{T} \right\}
 \nn
& \supseteq \left\{ \sum_{i \in \actions} \nib{i} D(S_i\pst \Vert S_i\pmd) + \frac{(1-\epsilon) \log{T}}{\sqrt{T}} \nib{\imd} < (1-\epsilon) \log{T} \right\}, 
\end{align*}
where we used the fact that $\{A<C\} \cap \{B<C\} \supseteq \{A+B<C\}$ for $A,B>0$ in the last line. 
Note that, by using the result of the previous steps, $\Prob[\ED_1] = \Prob[\ED_{12}] + \Prob[\ED_{1\backslash2}] = \so(1)$. By using the complementary of this fact,
\begin{align*}
\Prob\left[ \sum_{i \in \actions} \nib{i} D(S_i\pst \Vert S_i\pmd) + \frac{(1-\epsilon) \log{T}}{\sqrt{T}} \nib{\imd} \geq (1-\epsilon) \log{T} \right] \geq \Prob[\ED_1^c] = 1-\so(1).
\end{align*}
Using the Markov inequality yields 
\begin{equation}
 \Expect\left[ \sum_{i \in \actions} \nib{i} D(S_i\pst \Vert S_i\pmd) + \frac{(1-\epsilon) \log{T}}{\sqrt{T}} \nib{\imd} \right]
  \geq (1-\epsilon) (1-\so(1)) \log{T}. \label{ineq:lowerlast}
\end{equation}
Because $\Expect[\nib{\imd}]$ is subpolynomial as a function of $T$ due to the consistency, the second term in LHS of \eqref{ineq:lowerlast} is $\so(1)$ and thus negligible.
Lemma \ref{lem:drawlower} follows from the fact that \eqref{ineq:lowerlast} holds for sufficiently small $\epsilon$ and arbitrary $\pmd \in \interior{(\nonoptimal)}$.
\end{proof}

\begin{proof}[Proof of Theorem \ref{thm:regretlower}]

Assume that
there exists $\delta > 0$ and 
a sequence
$T_1<T_2<T_3<\cdots$
such that for all $t$
\begin{equation*}
\E[\regret(T_t)]
<
(1-\delta)\optcone{1}(p^*,\{p_i^*\})\log T_t\com
\end{equation*}
that is,
\begin{align*}
\sum_{i\neq 1}\frac{\Expect[N_i(T_t)]}{(1-\delta)\log T_t} (L_i - L_1)^\top \pst
< \optcone{1}(p^*,\{p_i^*\})\per
\end{align*}
From the definition of $\optcone{1}$,
there exists $q'_t \in \{q \in \closure(\nonoptimal): p_1^* = S_j q\}=:\mathcal{S}$ such that 
\begin{equation*}
\sum_{i \neq 1} \frac{\Expect[N_i(T_t)]}{(1-\de)\log T_t} D(p_i^* \Vert S_i q'_t) < 1\per
\end{equation*}
Since $\mathcal{S}$ is compact, there exists a subsequence
$t_0<t_1<\cdots$ such that
$\lim_{u\to\infty} q'_{t_u}=q'$
for some $q'\in\mathcal{S}$.
Therefore from the lower semicontinuity of the divergence we obtain
\begin{align*}
1
&\ge \sum_{i \neq 1} \liminf_{u\to\infty}
\frac{\Expect[N_i(T_t)]}{(1-\de)\log T_t} D(p_i \Vert S_i q'_{t_u})\nn
&\ge \sum_{i \neq 1} \liminf_{t\to\infty}
\frac{\Expect[N_i(T_t)]}{(1-\de)\log T_t}
D(p_i \Vert S_i q')\nn
&= \sum_{i} \liminf_{t\to\infty}
\frac{\Expect[N_i(T_t)]}{(1-\de)\log T_t}
D(p_i \Vert S_i q')
\com
\end{align*}
which contradicts Lemma \ref{lem:drawlower}.

\end{proof}

\section{The $N$-armed Bandit Problem as Partial Monitoring}
\label{sec:banditproof}

In Section \ref{sec:analysis}, we have introduced PM-DMED-Hinge, an asymptotically optimal algorithm for partial monitoring. In this appendix, we prove that this algorithm also has an optimal regret bound of the $N$-armed bandit problem when we run it on an $N$-armed bandit game represented as an instance of partial monitoring. 

In the $N$-armed bandit problem, the learner selects one of $N$ actions (arms) and receives a corresponding reward. 
This problem can be considered as a special case of partial monitoring in which the learner directly observes the loss matrix.
For example, three-armed Bernoulli bandit can be represented by the following loss and feedback matrices, and the strategy:
\begin{equation}
\label{ineq:bandit}
  L = H = \left(
    \begin{array}{cccccccc}
      2 & 1 & 2 & 1 & 2 & 1 & 2 & 1 \\
      2 & 2 & 1 & 1 & 2 & 2 & 1 & 1 \\
      2 & 2 & 2 & 2 & 1 & 1 & 1 & 1 \\
    \end{array}
  \right) \text{, and } \pst = \left(
    \begin{array}{c}
      (1-\mu_1)(1-\mu_2)(1-\mu_3) \\
      \mu_1(1-\mu_2)(1-\mu_3) \\
      (1-\mu_1)\mu_2(1-\mu_3) \\
      \mu_1 \mu_2(1-\mu_3) \\
      (1-\mu_1)(1-\mu_2)\mu_3 \\
      \mu_1(1-\mu_2)\mu_3 \\
      (1-\mu_1)\mu_2\mu_3 \\
      \mu_1 \mu_2 \mu_3 \\
    \end{array}
  \right),
\end{equation}
where $\mu_1, \mu_2$, and $\mu_3$ are the expected rewards of the actions.
Signals $1$ and $2$ correspond to the rewards of $1$ and $0$ generated by the selected arm, respectively.
More generally, $N$-armed Bernoulli bandit is represented as an instance of partial monitoring in which the loss and feedback matrices are the same $N \times 2^N$ matrix  
\begin{equation*}
 l_{i,j} = h_{i,j} = \Ind[ (j-1 \mymod 2^i) < 2^{i-1}] + 1,
\end{equation*}
 where mod denotes the modulo operation. This problem is associated with $N$ parameters $\mu_1,\mu_2,\dots,\mu_N$ that correspond to the expected rewards of the actions. For the ease of analysis, we assume $\{\mu_i\}$ are in $(0,1)$ and different from each other. Without loss of generality, we assume $1>\mu_1 > \mu_2 > \dots > \mu_N>0$, and thus action $1$ is the optimal action. 
The opponent's strategy is 
\begin{equation*}
 \pst_j = \prod_{i \in [N]} (\mu_i + (1-2 \mu_i) \Ind[ (j-1 \mymod 2^i) < 2^{i-1}] ) \per
\end{equation*} 
Note that $\mu_i = (S_{i} \pst)_1$.

\begin{proof}[Proof of Corollary \ref{cor:bandit}]
In the following, we prove that the regularity conditions in Theorem \ref{thm_main} are always satisfied in the case of the $N$-armed bandit. During the proof we also show that $\optcone{1}(p^*,\{p_i^*\})$ is equal to the optimal constant factor of Lai and Robbins \cite{LaiRobbins1985}.

Because signal $1$ corresponds to the reward of $1$, we can define $\hatmu_i(q) = (S_i q)_1$, and thus
\begin{equation*}
  \cell{i} = \{q \in \simplex: \forall_{i' \neq i}\ \hatmu_i(q) \ge \hatmu_{i'}(q) \}.
\end{equation*}
First, we show the uniqueness of $\optset{1}{p,\{p_i,\de_i\}}$ at $p=p^*,\{p_i\}=S_i p^*,\,\de_i=0$.
It is easy to check 
\begin{equation*}
D(p^*_i\Vert S_i q) = d(\hatmu_i(\pst) \Vert \hatmu_i(q)) = d(\mu_i \Vert \hatmu_i(q)),
\end{equation*}
where $d(a \Vert b)$ is the KL divergence between two Bernoulli distributions with parameters $a$ and $b$. Then
\begin{align}
  \admq{1}{\{p_i^*\}} & = 
\left\{
\{r_i\}_{i\neq 1}\in [0,\infty)^{N-1}:
\inf_{q\in \closure(\cP_{1}^c): p_i^* = S_1 q }
\sum_{i} r_i D(p_i^*\Vert S_i q) \ge 1
\right\} \nn
& = \left\{
\{r_i\}_{i\neq 1}\in [0,\infty)^{N-1}:
\inf_{q\in \closure(\cP_{1}^c): \mu_1=\hatmu_1(q) }
\sum_{i} r_i D(p_i^*\Vert S_i q) \ge 1
\right\} \nn
  & = \left\{\{r_i\}_{i \neq 1}: r_i \ge \frac{1}{d(\mu_i \Vert \mu_1)} \right\}, \label{ineq:admqbandit}
\end{align}
where the last inequality follows from the fact that 
\begin{equation*}
 \{q \in \closure(\cP_{1}^c): \hatmu_1(q)=\mu_1 \} = \{q \in \simplex: \hatmu_1(q)=\mu_1, \exists_{i \neq 1} \hatmu_i(q) \ge \mu_1\}.
\end{equation*}
  By Eq.\,\eqref{ineq:admqbandit}, the regret minimizing solution is
\begin{align*}
  \optcone{1}(p^*,\{p_i^*\}) & = \sum_{i \neq 1} \frac{\Delta_i}{d(\mu_i \Vert \mu_1)},
\end{align*}
and 
\begin{align*}
  \optset{1}{p^*,\{p_i^*\}} & = \left\{\{r_i\}_{i \neq 1}: r_i = \frac{1}{d(\mu_i \Vert \mu_1)} \right\},
\end{align*}
which is unique.

Second, we show that $\closure(\interior(\cP_1^c)\cap \setd_{\de}) = \closure(\closure(\cP_1^c)\cap \setd_{\de})$ 
for sufficiently small $\delta \ge 0$.
Note that,
\begin{equation*}
 \closure(\cP_1^c) \cap \setd_{\de} = \{q \in \simplex: \exists_{i' \neq 1}\ \hatmu_1(q) \le \hatmu_{i'}(q), d(\mu_1 \Vert \hatmu_{1}(q)) \le \delta\}
\end{equation*}
and
\begin{equation*}
 \interior(\cP_1^c) \cap \setd_{\de} = \{q \in \simplex: \exists_{i' \neq 1}\ \hatmu_1(q) < \hatmu_{i'}(q), d(\mu_1 \Vert \hatmu_{1}(q)) \le \delta\}.
\end{equation*}
To prove
\begin{equation}
 \closure(\cP_1^c)\cap \setd_{\de} \subset \closure(\interior(\cP_1^c)\cap \setd_{\de}), \label{clint}
\end{equation}
 it suffices to show that, an open ball centered at any position in 
\begin{equation*}
  \{q \in \simplex: \exists_{i' \neq 1}\ \hatmu_1(q) = \hatmu_{i'}(q), d(\mu_1 \Vert \hatmu_{1}(q)) \le \delta\} \,\supset\, (\closure(\cP_1^c) \cap \setd_{\de} ) \setminus (\interior(\cP_1^c) \cap \setd_{\de})
\end{equation*}
contains a point in $\interior(\cP_1^c) \cap \setd_{\de}$. 
This holds because we can make a slight move towards the direction of increasing $\hatmu_{i'}$: we can always find $q'$ in an open ball centered at $q$ such that $\hatmu_{i'}(q') > \hatmu_{i'}(q)$ and $\hatmu_{1}(q') = \hatmu_{1}(q)$ because of (i) the fact that there always exists $q \in \simplex$ such that $\{q \in \simplex, \forall_{i \in [N]} \hatmu_i(q) = \mu_i\}$ for arbitrary $\{\mu_i\} \in (0,1)^N$ and (ii) the continuity of the $\hatmu_i$ operator. Therefore, any open ball centered at $q \in \closure(\cP_1^c)\cap \setd_{\de}$ contains an element of $\interior(\cP_1^c)\cap \setd_{\de}$, by which we obtain \eqref{clint}. By using \eqref{clint}, we have
\begin{equation}
 \closure(\closure(\cP_1^c)\cap \setd_{\de}) \subset \closure(\closure(\interior(\cP_1^c)\cap \setd_{\de})) = \closure(\interior(\cP_1^c)\cap \setd_{\de}), \label{clcleq}
\end{equation}
where we used the fact that $\closure(\closure(X)) = \closure(X)$.
Combining \eqref{clcleq} with the fact that $\closure(\closure(\cP_1^c)\cap \setd_{\de}) \supset \closure(\interior(\cP_1^c)\cap \setd_{\de})$ yields $\closure(\closure(\cP_1^c)\cap \setd_{\de}) = \closure(\interior(\cP_1^c)\cap \setd_{\de})$.

Therefore, in the $N$-armed Bernoulli bandit problem, the regularity conditions are always satisfied and $\optcone{1}(p^*,\{p_i^*\})$ matches the optimal coefficient of the logarithmic regret bound. From Theorem \ref{thm_main}, if we run PM-DMED-Hinge in this game, its expected regret is asymptotically optimal in view of the $N$-armed bandit problem.
\end{proof}

\subsection{Experiment}
\label{subsec:banditexperiment}

\begin{figure}[t]
\begin{center}
\includegraphics[scale=0.5]{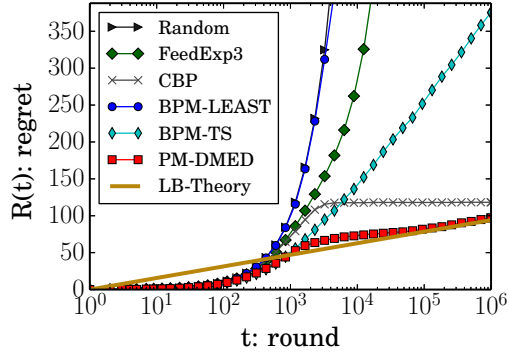}
\caption{Regret-round semilog plots of algorithms. The regrets are averaged over 100 runs. LB-Theory is the asymptotic regret lower bound of Lai and Robbins \cite{LaiRobbins1985}.}
\label{fig:bandit}
\end{center}
\end{figure}

We also assessed the performance of PM-DMED and other algorithms in solving the three-armed Bernoulli bandit game defined by \eqref{ineq:bandit} with parameters $\mu_1=0.4$, $\mu_2=0.3$, and $\mu_3=0.2$. The settings of the algorithms are the same as that of the main paper.
The results of simulations are shown in Figure \ref{fig:bandit}. 
LB-Theory is the regret lower bound of Lai and Robbins \cite{LaiRobbins1985},
that is, $\sum_{i \neq 1} \frac{\Delta_i \log{t}}{d(\mu_i \Vert \mu_1)}$.
The slope of PM-DMED quickly approaches that of LB-Theory, which is empirical evidence that PM-DMED has optimal performance in $N$-armed bandits.

\section{Optimality of PM-DMED-Hinge}\label{append_main}
In this appendix we prove Theorem \ref{thm_main}.
First we define distances among distributions.
For distributions $p_i,p_i'\in \mathcal{P}_{A}$ of symbols
we use the total variation distance
\begin{align*}
\kyoria{p_i-p_i'}=\frac{1}{2}\sum_{a=1}^A |(p_i)_a-(p_i')_a|\per\n
\end{align*}
For distributions $p,p'\in\simplex$ of outcomes, we identify
$p$ with the set $\{p':\forall i,\,S_ip=S_ip'\}$ and define
\begin{align*}
\kyorim{p-p'}=\max_{i}\kyoria{S_ip-S_ip'}.
\end{align*}
For $\mathcal{Q}\subset \simplex$ we define
\begin{align*}
\kyorim{p-\mathcal{Q}}=\inf_{p'\in\mathcal{Q}}\kyorim{p-p'}\per\n
\end{align*}

In the following, we use Pinsker's inequality given below many times.
\begin{align*}
D(p_i \Vert q_i)\ge 2\kyoria{p_i-q_i}^2\per\n
\end{align*}

Let
\begin{align*}
\rho_{i,L} &=\sup_{\lambda>0}\frac{1}{\lambda}
\min_{x\in \cP_{i,\lambda}}\kyorim{x-\cP_i^c}\com\nn
\nu_{i,L}&=\sup_{\lambda>0}\frac{1}{\lambda}
\max_{x\in \cP_{i,\lambda}^c}\kyorim{x-\cP_i^c}\per\nn
\end{align*}
Note that these two constants are positive from the global observability.

\subsection{Properties of regret lower bound}
In this section, we give Lemma \ref{lem_opt}
and Theorem \ref{thm_conti} that are about the functions
$\optc{j}{p,\,\{p_i,\de_i\}}$ and $\optset{j}{p,\{p_i,\de_i\}}$.
In the following, we always consider these functions
on $p\in \simplex$, $p_i\in\{S_ip:\mathrm{supp}(p)\subset \mathrm{supp}(p^*)\}$
and $\de_i\ge 0$, where $\mathrm{supp}(\cdot)$ denotes the support of the distribution.

We define 
\begin{equation*}
\Lmax = \max_{i',j'}l_{i',j'}\per
\end{equation*}
\begin{lemma}\label{lem_opt}
Let $p\in \cP_{j,\al}$ and $\{p_i,\de_i\}$ be satisfying
$\kyoria{p_i-S_i p}\le \norm{S}\al\rho_{j,L}/2$
and $\de_i\le (\norm{S}\al\rho_{j,L})^2/4$
for all $i$.
Then
\begin{align}
\optc{j}{p,\,\{p_i,\de_i\}}\le
\frac{4N\Lmax}{(\norm{S}\al\rho_{j,L})^2}\per\label{c_bound}
\end{align}
Furtheremore, $\adm{j}{\{p_i,\de_i\}}$ is nonempty and
\begin{align*}
\optset{j}{p,\{p_i,\de_i\}}\subset \left[
0,\,\frac{4N\Lmax}{(\norm{S}\rho_{j,L})^2\al^3}
\right]^{N-1}\per 
\end{align*}
\end{lemma}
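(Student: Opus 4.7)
\textbf{Proof plan for Lemma \ref{lem_opt}.} The plan is to exhibit an explicit feasible element of $\adm{j}{\{p_i,\de_i\}}$ to simultaneously establish its nonemptiness and bound $\optc{j}{p,\{p_i,\de_i\}}$ from above, and then to use the loss margin of $p \in \cP_{j,\al}$ to convert that bound into a componentwise bound on $\optset{j}{p,\{p_i,\de_i\}}$.

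The first step is to try the uniform candidate $r_i = R$ for every $i \ne j$ with $R = 4/(\norm{S}\al\rho_{j,L})^2$. The geometric input supplied by $\rho_{j,L}$ (combined with the piecewise-linear scaling of the nested polytopes $\{\cP_{j,\lambda}\}_{\lambda}$) is that $\kyorim{p-\cP_j^c} \ge \al\rho_{j,L}$ for every $p \in \cP_{j,\al}$. Fix an arbitrary $q \in \closure(\cP_j^c)$ with $D(p_j\Vert S_j q) \le \de_j$; then the max-over-$i$ definition of $\kyorim{\cdot}$ produces an index $i^*$ with $\kyoria{S_{i^*}(p-q)} \ge \al\rho_{j,L}$. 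I would first rule out $i^* = j$: Pinsker applied to $D(p_j\Vert S_j q) \le \de_j \le (\norm{S}\al\rho_{j,L})^2/4$ gives $\kyoria{p_j - S_j q} \le \norm{S}\al\rho_{j,L}/(2\sqrt{2})$, which combined with the assumption $\kyoria{p_j - S_j p} \le \norm{S}\al\rho_{j,L}/2$ forces $\kyoria{S_j(p-q)}$ strictly below $\al\rho_{j,L}$ via the triangle inequality, so $i^* \ne j$.

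With $i^* \ne j$ secured, the triangle inequality and the assumption $\kyoria{p_{i^*} - S_{i^*} p} \le \norm{S}\al\rho_{j,L}/2$ give $\kyoria{p_{i^*} - S_{i^*} q} \ge \al\rho_{j,L}/2$. Pinsker then yields $D(p_{i^*}\Vert S_{i^*} q) \ge (\norm{S}\al\rho_{j,L})^2/2$, and subtracting $\de_{i^*} \le (\norm{S}\al\rho_{j,L})^2/4$ leaves $(D(p_{i^*}\Vert S_{i^*} q) - \de_{i^*})_+ \ge (\norm{S}\al\rho_{j,L})^2/4$. Multiplying by $R$ shows that this single $i^*$-term already exceeds $1$, so the uniform $r$ is feasible. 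Its value is then bounded by $\sum_{i\ne j} R (L_i - L_j)^\top p \le R(N-1)\Lmax \le RN\Lmax = 4N\Lmax/(\norm{S}\al\rho_{j,L})^2$, establishing both the nonemptiness of $\adm{j}{\{p_i,\de_i\}}$ and the bound \eqref{c_bound} on $\optc{j}{p,\{p_i,\de_i\}}$.

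For the componentwise bound on $\optset{j}{p,\{p_i,\de_i\}}$, the margin condition $p \in \cP_{j,\al}$ gives $(L_i - L_j)^\top p \ge \al$ for every $i \ne j$. Hence any $\{r_i^*\} \in \optset{j}{p,\{p_i,\de_i\}}$ satisfies $\al \sum_{i\ne j} r_i^* \le \sum_{i\ne j} r_i^* (L_i - L_j)^\top p = \optc{j}{p,\{p_i,\de_i\}} \le 4N\Lmax/(\norm{S}\al\rho_{j,L})^2$, so each $r_i^* \le 4N\Lmax/((\norm{S}\rho_{j,L})^2 \al^3)$. The main obstacle is the bookkeeping of constants: the bounds on $\kyoria{p_i - S_i p}$ and $\de_i$ are tuned so that applying Pinsker twice (once to rule out $i^* = j$, once to produce the positive residue at $i^*$) leaves exactly the slack required for the factor of $4$; a looser version of either hypothesis would break the uniform feasibility argument and would force a coordinate-dependent choice of $r$, complicating the proof substantially.
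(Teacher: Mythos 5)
Your proof is correct and takes essentially the same route as the paper's: exhibit the uniform candidate $r_i=4/(\al\rho_{j,L})^2$ as a feasible element of $\adm{j}{\{p_i,\de_i\}}$ via the triangle inequality and Pinsker's inequality applied to the index attaining $\kyorim{p-q}\ge\al\rho_{j,L}$, then use the margin $(L_i-L_j)^\top p\ge\al$ from $p\in\cP_{j,\al}$ to turn the value bound into the componentwise bound on $\optset{j}{p,\{p_i,\de_i\}}$. Your additional step ruling out that the maximizing index equals $j$ (using the constraint $D(p_j\Vert S_jq)\le\de_j$ together with Pinsker) addresses a detail the paper's proof passes over silently, and is a welcome tightening rather than a deviation.
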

\begin{proof}[Proof of Lemma \ref{lem_opt}]
Since $\kyorim{p-\cP_1^c}\ge \al\rho_{j,L}$,
there exists $i = i(q)$ for any $q\in\cP_1^c$ such that
\begin{align*}
\kyoria{S_{i}q-S_{i}p}\ge
\norm{S}\al\rho_{j,L}\per\n
\end{align*}
For this $i$ we have
\begin{align*}
D(p_i\Vert S_i q)-\de_i
&\ge
2\kyoria{p_i-S_i q}^2-\de_i\nn
&\ge
2(\kyoria{S_iq-S_i p}-\kyoria{p_i-S_i p})_+^2-\de_i\nn
&\ge
(\norm{S}\al\rho_{j,L})^2/2-\de_i\nn
&\ge
(\norm{S}\al\rho_{j,L})^2/4\per\n
\end{align*}
Thus, by letting $r_i=4/(\norm{S}\al \rho_{j,L}\al)^2$ for all $i\neq j$
we have
\begin{align*}
\{r_i\}_{i\neq j}\in \adm{j}{\{p_i,\de_i\}}\com\n
\end{align*}
which implies \eqref{c_bound}.
On the other hand it holds for any $\{r_i^*\}_{i\neq j}\in \optset{j}{p,\,\{p_i,\de_i\}}$
from $p\in \cP_{j,\al}$
that
\begin{align*}
\optc{j}{p,\,\{p_i,\de_i\}}=
\sum_{i\neq j}r_i^*L_i^\top p
\ge \max_{i\neq j}r_i^* \al\n
\end{align*}
and therefore we have
\begin{align*}
\max_{i\neq j}r_i^*\le
\frac{4N\Lmax}{(\norm{S}\rho_{j,L})^2\al^3}\per\n
\end{align*}
\end{proof}

\begin{theorem}\label{thm_conti}
Assume that the regularity conditions in Theorem \ref{thm_main} hold.
Then the point-to-set map $\optset{1}{p,\{p_i,\de_i\}}$ is
(i) nonempty near $p=p^*,p_i=S_i p^*,\,\de_i=0$
and (ii) continuous at
$p=p^*,p_i=S_i p^*,\,\de_i=0$.
\end{theorem}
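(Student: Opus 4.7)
The plan is to apply a Berge-style maximum theorem to the semi-infinite program defining $\optc{1}{p, \{p_i, \de_i\}}$, with Condition 1 upgrading upper semicontinuity of the solution-set map to full continuity at the target point.

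For (i), I would leverage Lemma \ref{lem_opt}. Since $\ist = 1$ is the unique optimal action, $p^* \in \interior(\cell{1})$, hence $p^* \in \cP_{1,\al_0}$ for some $\al_0 > 0$. By continuity of the defining inequalities, there is a neighborhood $\mathcal{N}$ of $(p^*, \{S_i p^*\}, 0)$ on which $p \in \cP_{1,\al_0/2}$ and the smallness hypotheses of Lemma \ref{lem_opt} hold, yielding both $\adm{1}{\{p_i, \de_i\}} \neq \emptyset$ and a uniform box bound $\optset{1}{p, \{p_i, \de_i\}} \subset [0, C]^{N-1}$. The feasible set $\adm{1}{\{p_i, \de_i\}}$ is the intersection of the closed half-spaces $\{r : \sum_i r_i (D(p_i \Vert S_i q) - \de_i)_+ \ge 1\}$ indexed by admissible $q$, hence closed; its intersection with the box is compact, and the continuous linear objective attains its minimum there.

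For (ii), I would derive continuity of the feasible-set map $(\{p_i\}, \{\de_i\}) \mapsto \adm{1}{\{p_i, \de_i\}}$ from continuity of the constraint-set map
\[
K(p_1, \de_1) = \{q \in \closure(\cP_1^c) : D(p_1 \Vert S_1 q) \le \de_1\}.
\]
Upper semicontinuity (closed graph) of $K$ follows from joint lower semicontinuity of KL divergence together with closedness of $\closure(\cP_1^c)$. Lower semicontinuity of $K$ at $(S_1 p^*, 0)$ is where Condition 2 enters decisively: given $q \in K(S_1 p^*, 0) = \{q' \in \closure(\cP_1^c) : S_1 q' = S_1 p^*\}$, Condition 2 asserts $q \in \closure(\interior(\cP_1^c) \cap \setd_0)$, so $q$ can be approximated by interior points $\tilde q$ at which $D$ is continuous; a small convex combination of $\tilde q$ with a fixed deeply interior reference point yields $q^n \in K(p_1^n, \de_1^n)$ with $q^n \to q$. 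Lifting to $\phi(r, \{p_i, \de_i\}) = \inf_{q \in K(p_1, \de_1)} \sum_i r_i (D(p_i \Vert S_i q) - \de_i)_+$, compactness of $K$ and continuity of the integrand on the relevant region give continuity of $\phi$ at the target. Upper semicontinuity of $\adm{1}{\cdot}$ is then immediate from closedness of the level set $\{\phi \ge 1\}$, while lower semicontinuity follows by perturbing any $r \in \adm{1}{\{S_i p^*, 0\}}$ to $r + \epsilon \bm{1}$, which is strictly feasible at the target and remains feasible under small parameter perturbations.

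Applying Berge's maximum theorem on $\mathcal{N}$, the value $\optc{1}$ is continuous and $\optset{1}$ is upper semicontinuous at the target. By Condition 1, the set $\optset{1}{p^*, \{S_i p^*, 0\}}$ is a singleton, so upper semicontinuity at this point is equivalent to full Hausdorff continuity of the point-to-set map. The main obstacle is verifying lower semicontinuity of $K$ at $\de = 0$: Condition 2 is precisely what rules out the corner case (Appendix \ref{sec:cornercase}) in which $\closure(\cP_1^c)$ meets $\setd_0$ only along its topological boundary, which would otherwise block the construction of an approximating sequence $q^n \to q$ from interior points and thereby destroy the continuity of $\phi$.
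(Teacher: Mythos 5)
Your architecture matches the paper's almost exactly: Lemma \ref{lem_opt} supplies the compact box $[0,\xi]^{N-1}$ and nonemptiness for part (i); Condition 2 is used to pass from $\closure(\cP_1^c)\cap\setd_{\de_1}$ to $\interior(\cP_1^c)\cap\setd_{\de_1}$; continuity of the feasible-region map plus a Berge/Hogan-type maximum theorem gives continuity of the value and upper semicontinuity of the solution map; and Condition 1 (singleton solution set) upgrades this to full continuity. The paper establishes closedness and openness of the truncated feasible-set map $\admb{1}{\{p_i,\de_i\}}$ directly by sequence arguments and then cites Hogan's Corollary 8.1, whereas you route the argument through the inner constraint-set map $K(p_1,\de_1)$ and the value function $\phi$; these are equivalent in effect (closedness of $\admb{1}$ corresponds to lower semicontinuity of $K$, openness of $\admb{1}$ to lower semicontinuity of the inner value, which needs only upper semicontinuity of $K$), and you correctly locate Condition 2 in the lower-semicontinuity direction.

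There is, however, one step that fails as written: your construction of the approximating sequence $q^n\in K(p_1^n,\de_1^n)$ by taking a small convex combination of an interior point $\tilde q$ with a fixed reference point. This yields $S_1 q^n$ converging to $S_1\tilde q=S_1 p^*$, but the constraint requires $D(p_1^n\Vert S_1 q^n)\le\de_1^n$, and nothing forces $S_1 q^n$ to track the \emph{perturbed} distribution $p_1^n$ at the right rate. In particular, when $\de_1^n=0$ and $p_1^n\neq S_1 p^*$ the constraint demands that $S_1 q^n$ agree with $p_1^n$ exactly on its support, which a convex combination with a fixed reference point will not achieve; even for $\de_1^n>0$ the construction fails whenever $\de_1^n\to 0$ faster than $D(p_1^n\Vert S_1 p^*)$. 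The paper's remedy is the translation $q^{(m)}=q-p^*+\tp^{(m)}$, where $\tp^{(m)}$ satisfies $S_1\tp^{(m)}=p_1^{(m)}$ and $\kyorim{p^*-\tp^{(m)}}\to 0$: this gives $S_1 q^{(m)}=p_1^{(m)}$ exactly, hence $D(p_1^{(m)}\Vert S_1 q^{(m)})=0\le\de_1^{(m)}$ for every $m$, while $q^{(m)}$ stays in $\interior(\cP_1^c)$ for large $m$ because $q$ is an interior point (which is exactly what Condition 2 buys you). With that substitution your proof goes through.
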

See Hogan \cite{hogan}
for definitions of terms such as continuity of point-to-set maps.

\begin{proof}[Proof of Theorem \ref{thm_conti}]
Define
\begin{align*}
\admb{1}{\{p_i,\de_i\}}
&=
\left\{
\{r_i\}_{i\neq 1}\in [0,\xi]^{N-1}:
\inf_{q\in \closure(\cP_{1}^c):
D(p_1\Vert S_1 q)\le \de_1}
\sum_{i\neq 1} r_i(D(p_i\Vert S_i q)-\de_i)_+\ge 1
\right\}
\end{align*}
for
\begin{align*}
\xi=\frac{4N\Lmax}{(\norm{S}\rho_{1,L})^2(\max_{i\neq 1}L_i^\top p^*-L_1^\top p^*)^3}\per\n
\end{align*}
Note that $p^* \in \cP_{1,\al}$ for $\al \le \max_{i\neq 1}L_i^\top p^*-L_1^\top p^*$.
From Lemma \ref{lem_opt}, near $p=p^*,p_i=S_i p^*,\,\de_i=0$,
\begin{align*}
\admb{1}{\{p_i,\de_i\}}
&\supset
\optset{1}{p,\{p_i,\de_i\}}
\end{align*}
and
\begin{align*}
\optc{1}{p,\{p_i,\de_i\}}&=
\inf_{\{r_i\}_{i\neq 1}\in \admb{1}{\{p_i,\de_i\}}}
\sum_{i\neq 1} r_i (L_i-L_1)^\top p\n
\end{align*}
hold.
Since the function
\begin{align*}
\sum_{i} r_i(D(p_i\Vert S_i q)-\de_i)_+
\end{align*}
is continuous in $\{r_i\}$,
$\admb{1}{\{p_i,\de_i\}}$ is a closed set and therefore
$\optset{1}{p,\{p_i,\de_i\}}$ is nonempty
near $p=p^*,p_i=S_i p^*,\,\de_i=0$.

From the continuity of $D(p_i\Vert S_i q)$ at any $q$
such that $D(p_i\Vert S_i q)<\infty$,
we have
\begin{align*}
\inf_{q\in \closure(\cP_{1}^c)\cap \setd_{\de_1}}
\sum_{i\neq 1} r_i(D(p_i\Vert S_i q)-\de_i)_+
&=
\inf_{q\in \closure(\closure(\cP_{1}^c)\cap \setd_{\de_1})}
\sum_{i\neq 1} r_i(D(p_i\Vert S_i q)-\de_i)_+\nn
&=
\inf_{q\in \closure(\interior(\cP_{1}^c)\cap \setd_{\de_1})}
\sum_{i\neq 1} r_i(D(p_i\Vert S_i q)-\de_i)_+\nn
&=
\inf_{q\in \interior(\cP_{1}^c)\cap \setd_{\de_1}}
\sum_{i\neq 1} r_i(D(p_i\Vert S_i q)-\de_i)_+\per\n
\end{align*}
Thus, we have
\begin{align}
\admb{1}{\{p_i,\de_i\}}
&=
\left\{
\{r_i\}_{i\neq 1}\in [0,\xi]^{N-1}:
\inf_{q\in \interior(\cP_{1}^c):
D(p_1\Vert S_1 q)\le \de_1}
\sum_{i} r_i(D(p_i\Vert S_i q)-\de_i)_+\ge 1
\right\}\per\label{new_feasible}
\end{align}

Since the objective function $\sum_{i\neq j} r_i (L_i-L_j)^\top p$
is continuous in $\{r_i\}$ and $p$,
and \eqref{new_feasible} is compact,
now it suffices to show that
\eqref{new_feasible} is continuous
in $\{p_i,\de_i\}$ at $\{S_ip^*,\,0\}$
to prove the theorem from \cite[Corollary 8.1]{hogan}.

First we show that $\admb{1}{\{p_i,\de_i\}}$ is closed
at $\{S_ip^*,\,0\}$.
Consider $\{r_i^{(m)}\}_{i\neq 1}\in \admb{1}{\{p_i^{(m)},\de_i^{(m)}\}}$
for a sequence $\{p_i^{(m)},\de_i^{(m)}\}_i$ which converges to
$\{S_i p^*,\,0\}_i$ as $m\to\infty$.
We show that $\{r_i\}_{i\neq 1}\in \admb{1}{\{S_i p^*,0\}}$
if $r_i^{(m)}\to r_i$ as $m\to\infty$.

Take an arbitrary $q\in \interior(\cP_1^c)$ such that $D(S_1p^*\Vert S_1q)=0$.
Since $\kyoria{S_1p^*-p_1^{(m)}}\to 0$
and $p_1\in \{S_1p:\mathrm{supp}(p)\subset \mathrm{supp}(p^*)\}$,
there exists $\tp^{(m)}$ such that
$p_1^{(m)}=S_1\tp^{(m)}$ and $\kyorim{p^*-\tp^{(m)}}\to 0$.
Thus, from $q\in \interior(\cP_1^c)$,
it holds for sufficiently large $m$ that
$q^{(m)}=q-p^*+\tp^{(m)}\in \interior(\cP_1^c)$.
For this $q^{(m)}$ we have
\begin{align*}
D(p_1^{(m)}\Vert S_1q^{(m)})
&\le
D(S_1\tp^{(m)}\Vert S_1(q-p^*+\tp^{(m)}))=0\le \de_1\per\n
\end{align*}
that is, $q^{(m)}\in \interior(\cP_1^c)\cap \setd_{\de_1}$.
Therefore for sufficiently large $m$
we have
\begin{align*}
\sum_{i} r_i^{(m)}(D(p_i\Vert S_i q^{(m)})-\de_i^{(m)})_+\ge 0\per\n
\end{align*}
and, letting $m\to\infty$,
\begin{align*}
\sum_{i} r_iD(p_i\Vert S_i q)\ge 0\per\n
\end{align*}
This means that $\{r_i\}_{i\neq 1}\in \admb{1}{\{p_i,\de_i\}}$,
that is, $\admb{1}{\{p_i,\de_i\}}$ is closed
at $\{S_ip^*,\,0\}$.

Next we show that
$\admb{1}{\{p_i,\de_i\}}$ is open
at $\{S_ip^*,\,0\}$.
Consider $\{r_i\}_{i\neq 1}\in \admb{1}{\{S_ip^*,0\}}$
and a sequence $\{p_i^{(m)},\de_i^{(m)}\}_i$ which converges to
$\{S_i p^*,\,0\}_i$ as $m\to\infty$.
We show that there exists a sequence $\{r_i^{(m)}\}_{i\neq 1}\in \admb{1}{\{p_i^{(m)},\de_i^{(m)}\}}$
such that $r_i^{(m)}\to r_i$.

Consider the optimal value function
\begin{align}
v(\{p_i^{(m)},\de_i^{(m)}\})&=\inf_{q\in \closure(\cP_1^c) \cap \setd_{\de_1}}
\sum_{i} r_i(D(p_i^{(m)}\Vert S_i q)-\de_i^{(m)})_+\per\label{semiconti}
\end{align}
Since the feasible region of \eqref{semiconti} is closed at $p_i=S_ip^*,\,\de_i=0$ and
the objective function of \eqref{semiconti} is lower semicontinuous
in $q,\,\{p_i,\,\de_i\}$
we see that $v(\{p_i^{(m)},\de_i^{(m)}\})$ is lower semicontinuous from \cite[Theorem 2]{hogan}.
Therefore, for any $\ep>0$ there exists $m_0>0$ such that
for all $m\ge m_0$
\begin{align*}
v(\{p_i^{(m)},\de_i^{(m)}\})\ge (1-\ep)v(\{S_ip^*, 0\})\ge 1\n
\end{align*}
since $v(\{S_ip^*, 0\})\ge 1$ from $r_i\in \admb{1}{\{S_ip^*,0\}}$.
Thus, by letting $r_i^{(m)}:=r_i/(1-\ep)$
we have
\begin{align*}
\inf_{v\in \closure(\cP_1^c) \cap \setd_{\de_1}}
\sum_{i} r_i^{(m)}(D(p_i^{(m)}\Vert S_i q^{(m)})-\de_i^{(m)})_+\ge 1\com\n
\end{align*}
that is, $\{r_i^{(m)}\}_{i\neq 1}\in \admb{1}{\{p_i^{(m)},\de_i^{(m)}\}}$.
\end{proof}

\subsection{Regret analysis of PM-DMED-Hinge}
Let $\phat_{i,n} \in [0,1]^A$
be the empirical distribution of the symbols from the action $i$
when the action $i$ is selected $n$ times.
Then we have $\phat_i(t)=\phat_{i,N_i(t)}$.
Let $P_{i,n_i}(u)=\Pr[D(\phat_{i,n_i}\Vert S_i p^*)\ge u]$.
Then, from the large deviation bound on discrete distributions
(Theorem 11.2.1 in Cover and Thomas \cite{CoverThomasSnd}), we have
\begin{align}
P_{i,n_i}(u)\le (n_i+1)^{A}\e^{-n_i u}\per \label{ldpdisc}
\end{align}

We also define
\begin{equation*}
\sets(\{p_i,n_i\})=\{i\subset [N]: D(p_i\Vert S_i p^*)-f(n_i)> 0\}.
\end{equation*}

For
\begin{align}
0<\de\le \normss\kyorim{p^*-\cP_1^c}^2/8
\label{assump_delta}
\end{align}
define events
\begin{align}
\cA(t)&=\{\phat(t)\in \cP_{1}\}\nn
\cA'(t)&=\{\phat(t)\in \cP_{1,\al(t)}\}\nn
\variation(t)&=\bigcap_i\{\kyoria{\phat_i(t)-S_i p^*}\le \sqrt{\de}\}\nn
\trueD(t)&=\{\ihat(t)\notin \sets(\{\phat_i(t),N_i(t)\}),\,\sets(\{\phat_i(t),N_i(t)\})\neq \emptyset\}\nn
&=\left\{
D(\phat_{\ihat(t)}(t)\Vert S_{\ihat(t)}p^*)\le f(N_{\ihat(t)}(t)),\,\bigcup_{i}\{D(\phat_i(t)\Vert S_ip^*)> f(N_i(t))\}\right\}\nn
\empD(t)&=\bigcap_i\{D(\phat_i(t)\Vert S_i\phat(t))\le f(N_i(t))\}\nn
\other(t)&=\bigg\{
\max_if(N_i(t))\le \min\left\{2\de,\,(\norm{S}\rho_{1,L}\al(t))^2/4\right\},
\label{prop_e1}\\
&\qquad
\min_i N_i(t)\ge \max\{c\sqrt{\log t},\,(\log \log T)^{1/3}\},\,
2\nu_{1,L}\al(t)\le \kyorim{p^*-\cP_1^c}
\bigg\}\com \nonumber 
\end{align}
where we write $\{\mathcal{T},\,\mathcal{U}\}$ instead of
$\{\mathcal{T}\,\cap \mathcal{U}\}$ for events $\mathcal{T}$ and $\mathcal{U}$.

\begin{proof}[Proof of theorem \ref{thm_main}]
Since $\cA'(t)\subset\cA(t)$, the whole sample space is covered by
\begin{align}
\lefteqn{
\{\cA'(t),\variation(t)\}\scup\{\cA'(t),\variation^c(t)\}\scup
\{\cA(t),(\cA'(t))^c\}\scup \{\cA^c(t),\trueD(t)\}\scup
\{\cA^c(t),\trueD^c(t)\}
}\nn
&\subset
\{\cA'(t),\variation(t),\empD(t),\other(t)\}\scup\{\cA'(t),\variation^c(t),\empD(t),\other(t)\}\scup
\{\cA(t),(\cA'(t))^c,\empD(t),\other(t)\}\scup \{\cA^c(t),\trueD(t)\}\nn
&\quad\scup \{\cA^c(t),\trueD^c(t),\empD(t),\other(t)\}
\scup \empD^c(t)\scup \other^c(t)\per\label{space}
\end{align}

Let $\jt{i}{t}$ denote the event that
 action $i$ is newly added into the list $L_N$
at the $t$-th round
and
$\jtp{i}{t}\subset \jt{i}{t}$ denote the event that
$\jt{i}{t}$ occurred by Step 6 of Algorithm \ref{alg_pmdmedhinge}.
Note that if $\{\cA'(t),\,\empD(t),\,\other(t)\}$ occurred then
$\jt{i}{t}$ is equivalent to $\jtp{i}{t}$.
Combining this fact with \eqref{space} we can bound the regret as
\begin{align*}
\regret(T)
&\le
\sum_{i\neq 1}\De_i\sum_{t=1}^T
\idx{\jt{i}{t}}+N\nn
&\le
\sum_{i\neq 1}\De_i\sum_{t=1}^T
\bigg(
\idx{\jtp{i}{t},\cA'(t),\,\variation(t),\,\empD(t),\,\other(t)}
+
\idx{\jtp{i}{t},\cA(t),\,\variation^c(t),\,\empD(t),\,\other(t)}\nn
&\qquad\qquad
+\idx{\jt{i}{t},\cA(t),(\cA'(t))^c,\empD(t),\other(t)}
+\idx{\jt{i}{t},\cA^c(t),\trueD^c(t),\empD(t),\other(t)}\nn
&\qquad\qquad+\idx{\jt{i}{t},\empD^c(t)\scup \other^c(t)}\bigg)
+
\left(
\sum_{i\neq 1}\De_i
\right)
\sum_{t=1}^T\idx{\cA^c(t),\,\trueD(t)}+N\per\n
\end{align*}
The following Lemmas \ref{lem_main}--\ref{lem_other} bound the
expectation of each term and complete the proof.
\end{proof}

\begin{lemma}\label{lem_main}
Let $\{r_i^*\}_{i\neq 1}$ be the unique member of
$\optset{j}{p^*,\{S_i p^*,0\}}$.
Then there exists $\ep_{\de}>0$ such that
$\lim_{\de\to 0}\ep_{\de}=0$ and
for all $i\neq 1$
\begin{align*}
\sum_{t=1}^T
\idx{\jtp{i}{t},\,\cA'(t),\,\variation(t),\,\empD(t),\,\other(t)}
\le
(1+\ep_{\de})r_i^* \log T+1\per\n
\end{align*}
\end{lemma}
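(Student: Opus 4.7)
The strategy is to combine (i) a continuity argument showing that, under the good events, the LSIP optimum $r_i^*(t)$ computed in \eqref{r_seisitu} is close to the limiting value $r_i^*$, with (ii) a counting argument exploiting the fact that each addition of $i$ to $Z_N$ is eventually charged against a fresh pull of arm $i$.

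First I would verify that under $\cA'(t)\cap \variation(t)\cap \empD(t)\cap \other(t)$, the parameters driving \eqref{r_seisitu} lie in a $\lo(\sqrt{\de})$-neighborhood of $(p^*,\{S_ip^*,0\})$. From $\variation(t)$ we have $\kyoria{\phat_i(t)-S_ip^*}\le\sqrt{\de}$. From $\empD(t)$ and Pinsker's inequality, $\kyoria{\phat_i(t)-S_i\phat(t)}\le\sqrt{f(N_i(t))/2}\le\sqrt{\de}$, using $f(N_i(t))\le 2\de$ from $\other(t)$. A triangle inequality then gives $\kyorim{\phat(t)-p^*}\le 2\sqrt{\de}$, and of course $f(N_i(t))\le 2\de\to 0$. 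Moreover $\cA'(t)$ guarantees $\ihat(t)=1$, so the LSIP in \eqref{r_seisitu} is indeed indexed by action $1$. Hence the argument of $\optset{1}{\cdot,\{\cdot,\cdot\}}$ approaches $(p^*,\{S_ip^*,0\})$ as $\de\to 0$.

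Next I would apply Theorem \ref{thm_conti}. Since the regularity assumption of Theorem \ref{thm_main} makes $\optset{1}{p^*,\{S_ip^*,0\}}=\{r_i^*\}_{i\neq 1}$ a singleton, the continuity of the point-to-set map upgrades to pointwise uniform convergence of any selector: for every $\ep>0$ there is $\de_0>0$ such that whenever $\de\le\de_0$ and the good events hold, every element of $\optset{1}{\phat(t),\{\phat_i(t),f(N_i(t))\}}$ lies within $\ep$ of $r_i^*$. In particular the value $r_i^*(t)$ chosen by the algorithm satisfies $r_i^*(t)\le(1+\ep_\de)r_i^*$ for some $\ep_\de$ with $\lim_{\de\to 0}\ep_\de=0$.

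Finally I would perform the counting. Let $s_1<s_2<\cdots<s_K$ be the rounds in $[1,T]$ at which $\jtp{i}{t}$ holds together with the good events. Because the subroutine adds each action to $Z_N$ at most once per outer loop of Algorithm \ref{alg_dmedloop}, the $s_k$ lie in distinct loops. Each addition at $s_k$ puts $i$ into $Z_N$, so $i$ is selected in the very next loop, giving $N_i(s_{k+1})\ge N_i(s_k)+1$. With $N_i(s_1)\ge 1$ from the initialization step, induction yields $N_i(s_K)\ge K$. On the other hand the triggering condition in Step 6 forces $N_i(s_K)/\log s_K<r_i^*(s_K)$, and by the continuity step this is at most $(1+\ep_\de)r_i^*$, so $N_i(s_K)<(1+\ep_\de)r_i^*\log T$. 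Combining the two bounds gives $K\le(1+\ep_\de)r_i^*\log T+1$, which is the claimed inequality.

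The main obstacle is ensuring that the invocation of Theorem \ref{thm_conti} is valid along the trajectory produced by the algorithm: one must check that $\phat_i(t)$ lies in the admissible class $\{S_ip:\mathrm{supp}(p)\subset\mathrm{supp}(p^*)\}$ on which the continuity statement is formulated, and that the closure/interior regularity condition of the theorem can be used to get a $\de$-uniform bound for $\ep_\de$ that does not depend on $t$ or on which selector $r_i^*(t)$ the optimizer returns. The uniqueness hypothesis of Theorem \ref{thm_main} is the critical ingredient that allows set-valued continuity to translate into a single numerical inequality $r_i^*(t)\le(1+\ep_\de)r_i^*$ independent of how ties in the LSIP are broken.
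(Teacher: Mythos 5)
Your proposal is correct and follows essentially the same route as the paper's proof: show that under the four good events the arguments of the LSIP lie within $\lo(\sqrt{\de})$ of $(p^*,\{S_ip^*,0\})$ (the paper gets $\kyorim{\phat(t)-p^*}\le 2\sqrt{\de}$ by a contradiction argument, you get it directly by Pinsker plus the triangle inequality — same estimate), invoke Theorem \ref{thm_conti} together with the uniqueness hypothesis to obtain $r_i^*(t)\le(1+\ep_{\de})r_i^*$, and then count occurrences of $\jtp{i}{t}$ by noting that each one corresponds to a distinct value of $N_i(t)$ bounded by $(1+\ep_{\de})r_i^*\log T$. The paper phrases the final counting as a reindexing over $n=N_i(t)$ rather than an induction over occurrence times, but the content is identical.
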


\begin{lemma}\label{lem_sublog}
\begin{align*}
\E\left[
\sum_{t=1}^{T}
\idx{\jtp{i}{t},\,\cA'(t),\,\variation^c(t),\,\empD(t),\,\other(t)}
\right]
&=\so(\log T)\per\n
\end{align*}
\end{lemma}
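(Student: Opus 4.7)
}

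The strategy is to combine (i) a \emph{deterministic} upper bound on the number of rounds at which $\jtp{i}{t}$ can occur jointly with $\cA'(t),\empD(t),\other(t)$, obtained from Lemma~\ref{lem_opt}, with (ii) a \emph{uniform tail bound} on $\variation^c(t)$ under $\other(t)$ obtained from the large-deviation estimate \eqref{ldpdisc}. The product of the two will yield the claimed $\so(\log T)$ bound.

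For (i), I would first verify that under $\cA'(t),\empD(t),\other(t)$ the hypotheses of Lemma~\ref{lem_opt} hold with $j=1$, $p=\phat(t)$, $p_i=\phat_i(t)$, $\de_i=f(N_i(t))$ and $\al=\al(t)$: $\cA'(t)$ puts $\phat(t)\in\cP_{1,\al(t)}$ and forces $\ihat(t)=1$; Pinsker applied to $\empD(t)$ together with $f(N_i(t))\le(\normss\rho_{1,L}\al(t))^2/4$ from $\other(t)$ supplies the required closeness of $\phat_i(t)$ to $S_i\phat(t)$ and the required bound on $\de_i$. Lemma~\ref{lem_opt} then guarantees that every $\{r_i^*\}\in\optset{1}{\phat(t),\{\phat_i(t),f(N_i(t))\}}$ satisfies $r_i^*\le 4N\Lmax/((\normss\rho_{1,L})^2\al(t)^3)=O((\log\log t)^3)$. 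Since Step~6 of Algorithm~\ref{alg_pmdmedhinge} issues $\jtp{i}{t}$ only when $r_i^*>N_i(t)/\log t$, under the good events $N_i(t)< M^*:=O(\log T\,(\log\log T)^3)$ for every $t\le T$. Finally, because $Z_N$ is emptied only at outer-loop boundaries where its contents are transferred into $Z_C$, every newly added $i$ is selected once in the next outer loop, so two distinct $\jtp{i}{t}$ events must carry strictly increasing values of $N_i(t)$. Combined with the bound $N_i(t)<M^*$, this yields the deterministic estimate
\begin{equation*}
\sum_{t=1}^T\idx{\jtp{i}{t},\cA'(t),\empD(t),\other(t)}\le M^*.
\end{equation*}

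For (ii), $\other(t)$ enforces $\min_j N_j(t)\ge(\log\log T)^{1/3}$, so by a union bound over $j$ and the random value of $N_j(t)$, followed by Pinsker and \eqref{ldpdisc},
\begin{equation*}
P(\variation^c(t),\other(t))
\le N\sum_{m\ge(\log\log T)^{1/3}}(m+1)^A \e^{-2m\de}
= O\bigl((\log\log T)^{A/3}\e^{-2\de(\log\log T)^{1/3}}\bigr)
=:\ep_T.
\end{equation*}
The key feature of $\ep_T$ is that the stretched exponential decays faster than any polynomial in $\log\log T$, so $\ep_T=o((\log\log T)^{-3})$. Let $t_n$ denote the unique round (if any) at which $\jtp{i}{t_n}$ holds with $N_i(t_n)=n$ and the good events are satisfied. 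The distinctness observation in (i) gives
\begin{equation*}
\sum_{t=1}^T\idx{\jtp{i}{t},\cA'(t),\variation^c(t),\empD(t),\other(t)}
\le\sum_{n\le M^*}\idx{\variation^c(t_n),\other(t_n)},
\end{equation*}
and taking expectation yields $\E[\cdot]\le M^*\ep_T=O(\log T\,(\log\log T)^3)\cdot o((\log\log T)^{-3})=\so(\log T)$, as claimed.

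The main technical obstacle is the distinctness argument, which depends on a careful bookkeeping of how $Z_C,Z_R,Z_N$ evolve across outer loops of Algorithm~\ref{alg_dmedloop}; this is the only place the algorithmic structure, rather than generic concentration, is genuinely used. A secondary subtlety is that $t_n$ is a stopping time, so the tail estimate for $\variation^c(t_n)$ cannot be a pointwise bound on $P(\kyoria{\phat_{j,m}-S_jp^*}>\sqrt{\de})$ at a fixed $m$, but must be a union bound over all $m\ge(\log\log T)^{1/3}$, which is precisely what produces $\ep_T$.
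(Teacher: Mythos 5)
Your proposal is correct and follows essentially the same route as the paper: apply Lemma \ref{lem_opt} under $\cA'(t),\empD(t),\other(t)$ to cap $r_i^*$ by $O((\log\log T)^3)$, use the distinctness of $N_i(t)$ across $\jtp{i}{t}$ events to get the deterministic count $O(\log T(\log\log T)^3)$, and multiply by the union-bound tail probability $N\sum_{m\ge(\log\log T)^{1/3}}(m+1)^A\e^{-2m\de}$ from Pinsker and \eqref{ldpdisc}. The only cosmetic difference is that the paper factors out $\Pr[\bigcup_t\{\variation^c(t),\other(t)\}]$ once rather than summing per-index tail bounds, which yields the identical final estimate.
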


\begin{lemma}\label{lem_kantigai}
\begin{align*}
\E\left[
\sum_{t=1}^{T}
\idx{\cA^c(t),\,\trueD(t)}
\right]
&=\lo(1)\per\n
\end{align*}
\end{lemma}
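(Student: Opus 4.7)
The plan is to show $\E\bigl[\sum_t \idx{\cA^c(t), \trueD(t)}\bigr] = \lo(1)$ by reducing the event to a concentration failure of some empirical signal distribution and then bounding its total contribution. By the definition of $\trueD(t)$, the set $\sets(\{\phat_i(t), N_i(t)\})$ is nonempty, so there exists an action $i$ with $D(\phat_i(t) \Vert S_i p^*) > f(N_i(t))$. Applying a union bound over $i \in [N]$ reduces the claim to proving $\E\bigl[\sum_{t=1}^{T} \idx{\si_{i, N_i(t)}}\bigr] = \lo(1)$ for each $i$, where $\si_{i, n} := \{D(\phat_{i, n} \Vert S_i p^*) > f(n)\}$.

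Since $N_i(\cdot)$ is integer-valued and nondecreasing, I reindex the sum by the trajectory of $N_i$. Writing $U_i(n) := |\{t \le T: N_i(t) = n\}|$, one has $\sum_{t=1}^{T} \idx{\si_{i, N_i(t)}} = \sum_{n \ge 1} U_i(n) \idx{\si_{i, n}}$. The event $\si_{i, n}$ depends only on the first $n$ i.i.d.\,observations of action $i$, so by the large deviation bound \eqref{ldpdisc}, $\Pr[\si_{i, n}] \le (n + 1)^A e^{-b \sqrt{n}}$, which is summable in $n$. The waiting-time factor $U_i(n)$ can be controlled deterministically using the exploration rule \eqref{cond_sqrtlog}: once $t > e^{(n/c)^2}$ the condition $N_i(t) < c \sqrt{\log t}$ forces $i$ into $Z_N$ and thus into $Z_C$ on the next loop, so $i$ is selected within $O(N)$ more rounds, giving $U_i(n) \le e^{(n/c)^2} + O(N)$.

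The main obstacle is that this deterministic factor grows as $e^{(n/c)^2}$ while the concentration only decays as $e^{-b\sqrt{n}}$, so a direct product bound is not summable in $n$. To close this gap I would establish a sharper \emph{conditional} estimate on $\E[U_i(n) \idx{\si_{i, n}}]$: on $\si_{i, n}$, the empirical $\phat_{i, n}$ is far from $S_i p^*$, and I expect that the algorithm responds by re-selecting action $i$ much sooner than $e^{(n/c)^2}$, either because the hinge condition \eqref{zyouken_syuusoku} triggers (placing all actions, including $i$, into $Z_N$), because $\ihat(t) = i$ itself (placing $i$ into $Z_N$ via Step 3), or because condition \eqref{cond_tansaku} in Step 6 is violated so that $i$ is again placed into $Z_N$. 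A coupling between $\si_{i, n}$ and the algorithm's immediate subsequent action that yields an expected conditional waiting time polynomial in $n$ would produce the summable total $\sum_n \mathrm{poly}(n) (n+1)^A e^{-b\sqrt{n}} = \lo(1)$, completing the proof of the lemma.
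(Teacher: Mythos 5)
Your skeleton correctly identifies the crux --- the deterministic waiting-time bound $U_i(n)\le \e^{(n/c)^2}+O(N)$ from \eqref{cond_sqrtlog} is far too weak against the tail $\Pr[\si_{i,n}]\le (n+1)^A\e^{-b\sqrt n}$ --- but the repair you propose is a conjecture, not an argument, and it is not the mechanism that actually closes the gap. There is no reason the algorithm re-selects $i$ within $\mathrm{poly}(n)$ rounds on $\si_{i,n}$: that event only says $D(\phat_{i,n}\Vert S_ip^*)>f(n)=b/\sqrt n$, a vanishingly small deviation that need not trigger \eqref{zyouken_syuusoku} or force $\ihat(t)=i$. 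The paper's actual mechanism is a self-normalizing exponential-moment argument. Condition on the joint configuration $\sets(\{\phat_i(t),N_i(t)\})=\mathcal{G}$ and $\bigcap_{i\in\mathcal{G}}\{N_i(t)=n_i\}$, and set $x=\sum_{i\in\mathcal{G}}n_i(D(\phat_{i,n_i}\Vert S_ip^*)-f(n_i))_+$. The events $\cA^c(t)$ and the first half of $\trueD(t)$ (which your reduction discards) guarantee that $p^*$ is a \emph{feasible point} of the inner minimization in \eqref{cond_tansaku}, since $\ihat(t)\neq 1$ forces $p^*\in\closure(\cP_{\ihat(t)}^c)$ and $D(\phat_{\ihat(t)}(t)\Vert S_{\ihat(t)}p^*)\le f(N_{\ihat(t)}(t))$. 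Hence once $t\ge\e^x$ the exploration condition \eqref{cond_tansaku} fires, and comparing with \eqref{r_seisitu} shows some $i\in\mathcal{G}$ has $r_i^*\log t>n_i$, so the configuration survives at most $\e^x+N$ rounds. The waiting time is thus bounded by a \emph{random} quantity $\e^x$, and one computes $\E[\e^x\,\id[\bigcap_{i\in\mathcal{G}}\{D(\phat_{i,n_i}\Vert S_ip^*)\ge f(n_i)\}]]$ by integration by parts against the tail $P_{i,n_i}(u)\le(n_i+1)^A\e^{-n_iu}$: the factor $\e^{n_i(u_i-f(n_i))}$ cancels against $\e^{-n_iu_i}$ exactly, leaving $(1+n_iD_i)(n_i+1)^A\e^{-n_if(n_i)}$, which is summable since $n_if(n_i)=\Theta(\sqrt{n_i})$.

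A second structural problem with your reduction: by union-bounding down to a single action $i$ and counting $\sum_n U_i(n)\id[\si_{i,n}]$, you lose the ability to terminate the count. The paper only shows that \emph{some} action in $\mathcal{G}$ gets re-selected within $N$ rounds --- not necessarily your fixed $i$ --- so $N_i(t)$ may remain at $n$ indefinitely while the bad event persists, and $U_i(n)\id[\si_{i,n}]$ keeps accumulating. The correct bookkeeping is over joint configurations $(\mathcal{G},\{n_i\}_{i\in\mathcal{G}})$, each of which is destroyed once any one of its members is re-selected; summing $\e^x+N$ over all configurations and taking expectations then yields the $\lo(1)$ bound.
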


\begin{lemma}\label{lem_marginal}
\begin{align*}
\E\left[\sum_{t=1}^{T}\idx{\jt{i}{t},\,\cA(t),\,(\cA'(t))^c,\,\empD(t),\,\other(t)}
\right]
&=\lo(1)\per\n
\end{align*}
\end{lemma}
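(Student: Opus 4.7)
The strategy is first to show that, restricted to $\variation(t)$, the event in the indicator is empty for $\de$ small enough, and then to control the remaining contribution on $\variation^c(t)$ by combining the large-deviation bound \eqref{ldpdisc} with a loop-counting argument enabled by Step 4 of Algorithm \ref{alg_pmdmedhinge}.

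On $\cA(t)\cap(\cA'(t))^c\cap\other(t)$ we have $\phat(t)\in\cP_1\cap\cP_{1,\al(t)}^c$, so by the definition of $\nu_{1,L}$, $\kyorim{\phat(t)-\cP_1^c}\le\nu_{1,L}\al(t)$; together with $2\nu_{1,L}\al(t)\le\kyorim{p^*-\cP_1^c}$ from $\other(t)$, the triangle inequality for the point-to-set distance gives $\kyorim{\phat(t)-p^*}\ge\kyorim{p^*-\cP_1^c}/2$. Conversely, on $\variation(t)\cap\empD(t)\cap\other(t)$, Pinsker's inequality applied to $\empD(t)$ together with $f(N_i(t))\le 2\de$ (from $\other(t)$) yields $\kyoria{\phat_i(t)-S_i\phat(t)}\le\sqrt{\de}$, and $\variation(t)$ itself gives $\kyoria{\phat_i(t)-S_ip^*}\le\sqrt{\de}$, so by triangle inequality $\kyorim{\phat(t)-p^*}\le 2\sqrt{\de}$. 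For $\de$ small enough (per \eqref{assump_delta}) these two bounds are incompatible, so the indicator is dominated by $\idx{\jt{i}{t},\cA(t),(\cA'(t))^c,\variation^c(t),\other(t)}$.

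Decomposing $\variation^c(t)=\bigcup_{j\in[N]}\{\kyoria{\phat_{j,N_j(t)}-S_jp^*}>\sqrt{\de}\}$, the critical structural fact is that whenever $\cA(t)\cap(\cA'(t))^c$ holds, Step 4 of Algorithm \ref{alg_pmdmedhinge} inserts every action not already in $Z_R$ into $Z_N$, so every $j$ is played no later than the very next loop. Combined with the no-duplication rule (so $\jt{i}{t}=1$ at most once per loop), a short case analysis on whether $j\in Z_R$ at the first witnessing round gives the uniform bound $\sum_{t=1}^T\idx{\jt{i}{t},N_j(t)=n,\cA(t),(\cA'(t))^c}\le 2$ for all $j$ and $n$. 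Setting $n_0:=(\log\log T)^{1/3}$ (the lower bound on $N_j(t)$ in $\other(t)$),
\begin{equation*}
\sum_{t=1}^T\idx{\jt{i}{t},\cA(t),(\cA'(t))^c,\variation^c(t),\other(t)}
\le 2\sum_{j\in[N]}\sum_{n\ge n_0}\idx{\kyoria{\phat_{j,n}-S_jp^*}>\sqrt{\de}},
\end{equation*}
and Pinsker's inequality combined with \eqref{ldpdisc} bounds the expectation by $2N\sum_{n\ge n_0}(n+1)^A\e^{-2n\de}\to 0$ as $T\to\infty$, which is $\lo(1)$.

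The hard part is making the loop-counting claim rigorous: carefully tracking the evolution of $Z_C$, $Z_N$, and $Z_R$ across loops to show that, once Step 4 triggers at a round with $N_j(t)=n$, the action $j$ either completes its scheduled play in the current loop ($j\in Z_R$) or is forced into $Z_C$ of the next loop via $Z_N$, so at most two consecutive loops can still have $N_j(t)=n$ while $\cA(t)\cap(\cA'(t))^c$ persists.
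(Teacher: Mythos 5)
Your proof is essentially the paper's: the same lower bound $\kyorim{\phat(t)-p^*}\ge\kyorim{p^*-\cP_1^c}/2$ derived from $\cA(t)\cap(\cA'(t))^c\cap\other(t)$ via the definition of $\nu_{1,L}$, the same counting argument (when \eqref{closetoboundary} fires all actions are relisted, so the event occurs at most twice per value of the relevant counter --- the paper indexes by $\min_j N_j(t)=n$, you by $N_j(t)=n$ for each $j$), and the same large-deviation bound \eqref{ldpdisc}; the only cosmetic difference is that you route the contradiction through the split $\variation(t)/\variation^c(t)$, whereas the paper applies Pinsker and the reverse triangle inequality directly to conclude $\max_i D(\phat_i(t)\Vert S_i p^*)\ge \kyorim{p^*-\cP_1^c}^2/8$ without ever introducing $\variation(t)$. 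One constant needs fixing: on $\variation(t)\cap\empD(t)\cap\other(t)$ you obtain $\kyorim{\phat(t)-p^*}\le 2\sqrt{\de}$, but \eqref{assump_delta} only guarantees $2\sqrt{\de}\le \kyorim{p^*-\cP_1^c}/\sqrt{2}$, which does \emph{not} contradict the lower bound $\kyorim{p^*-\cP_1^c}/2$; you need the slightly stronger restriction $\de\le \kyorim{p^*-\cP_1^c}^2/16$ (harmless, since $\de$ is a free parameter ultimately sent to $0$, but as written the incompatibility does not follow from \eqref{assump_delta} alone --- the paper sidesteps this by paying only one $\sqrt{\de/2}$ term instead of two $\sqrt{\de}$ terms).
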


\begin{lemma}\label{lem_trueD}
\begin{align*}
\E\left[
\sum_{t=1}^{T}
\idx{\jt{i}{t},\,\cA^c(t),\,\trueD^c(t),\,\empD(t),\,\other(t)}
\right]
&=\lo(1)\per\n
\end{align*}
\end{lemma}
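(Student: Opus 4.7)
The plan is to split the complementary event $\trueD^c(t)$ according to its two defining disjuncts,
\[
\trueD^c(t) = \{\sets(\{\phat_i(t), N_i(t)\}) = \emptyset\} \cup \{\ihat(t) \in \sets(\{\phat_i(t), N_i(t)\})\},
\]
and to treat the two subcases separately.

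In the first subcase, $\sets = \emptyset$ forces $D(\phat_i(t)\Vert S_i p^*) \le f(N_i(t))$ for every $i$. Under $\empD(t)$, we also have $D(\phat_i(t)\Vert S_i \phat(t)) \le f(N_i(t))$ for every $i$. I would apply Pinsker's inequality to each and combine via the triangle inequality in total variation to obtain $\kyoria{S_i \phat(t) - S_i p^*} \le \sqrt{2 f(N_i(t))}$ for every $i$, and therefore $\kyorim{\phat(t) - p^*} \le \sqrt{2 \max_i f(N_i(t))}$. The bound on $\max_i f(N_i(t))$ provided by $\other(t)$, together with its condition $2\nu_{1,L}\al(t) \le \kyorim{p^*-\cP_1^c}$, makes this distance strictly smaller than $\kyorim{p^*-\cP_1^c}$, forcing $\phat(t) \in \cP_1$ and contradicting $\cA^c(t) = \{\phat(t) \notin \cP_1\}$. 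Hence this subcase contributes exactly zero to the sum.

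In the second subcase, $\ihat(t) \in \sets$ means the action $i' = \ihat(t)$ satisfies $D(\phat_{i'}(t)\Vert S_{i'} p^*) > f(N_{i'}(t))$, which is a large deviation event. The bound \eqref{ldpdisc} gives
\[
\Pr\bigl[D(\phat_{i',n}\Vert S_{i'} p^*) > f(n)\bigr] \le (n+1)^A \e^{-n f(n)} = (n+1)^A \e^{-b\sqrt{n}},
\]
which is summable in $n$ for any $b>0$. To convert this into a bound on the expected count of rounds satisfying $\jt{i}{t}$ in this regime, I would observe that Step 3 of Algorithm \ref{alg_pmdmedhinge} always puts $\ihat(t)=i'$ into $Z_N$, so action $i'$ is selected within $O(1)$ loops and $N_{i'}(t)$ keeps advancing; thus the number of rounds spent at each fixed value $N_{i'}(t)=n$ is bounded by a constant. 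Summing the tail bound over $n \ge (\log\log T)^{1/3}$ (the lower bound from $\other(t)$) and over the $N$ possible choices of $i' \in [N]$ yields a $T$-independent constant.

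The main obstacle is the bookkeeping in the second subcase: turning a pointwise LDP probability at sample size $n$ into a uniform bound on the expected count of rounds in which $\jt{i}{t}$ occurs while $N_{i'}(t)=n$. The crucial observation is that Step 3 of the subroutine schedules $i'$ for selection in the immediately following loop, preventing the algorithm from lingering at a single $n$-slot, so the per-slot count is $O(1)$. Combined with the summable LDP tail, this gives $\lo(1)$ and closes the argument.
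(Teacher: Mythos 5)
Your proposal is correct and follows essentially the same route as the paper's proof: the same two-case split of $\trueD^c(t)$, the same Pinsker-plus-triangle-inequality contradiction with $\cA^c(t)$ in the first case, and the same large-deviation bound combined with the ``at most a constant number of rounds per value of $N_{\ihat(t)}(t)$'' scheduling argument in the second. One small correction: the contradiction in the first case rests on the standing assumption $\de\le\kyorim{p^*-\cP_1^c}^2/8$ from \eqref{assump_delta} together with $\max_i f(N_i(t))\le 2\de$ under $\other(t)$, not on the condition $2\nu_{1,L}\al(t)\le\kyorim{p^*-\cP_1^c}$, which is the ingredient used in Lemma \ref{lem_marginal} rather than here.
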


\begin{lemma}\label{lem_empD}
\begin{align*}
\E\left[\sum_{t=1}^T\idx{\jt{i}{t},\,\empD^c(t)}
\right]
&=\lo(1)\per\n
\end{align*}
\end{lemma}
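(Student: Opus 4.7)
The plan is to show that $\empD^c(t)$ can occur in only $\lo(1)$ batches in expectation, and then invoke the fact that $\jt{i}{t}$ fires at most once per batch.

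First, I convert $\empD^c(t)$ into a concentration event measured against the true $p^*$. Since $\phat(t)$ is the minimizer in~\eqref{def_phat},
\[
\sum_i N_i(t)\bigl(D(\phat_i(t) \Vert S_i \phat(t)) - f(N_i(t))\bigr)_+ \le \sum_i N_i(t)\bigl(D(\phat_i(t) \Vert S_i p^*) - f(N_i(t))\bigr)_+\per
\]
On $\empD^c(t)$ the left-hand side is strictly positive, so there must exist some $j$ with $D(\phat_{j,N_j(t)} \Vert S_j p^*) > f(N_j(t))$; call such a $j$ a \emph{true cause} of $\empD^c$ at round $t$.

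Second, I exploit the structure of Algorithm~\ref{alg_pmdmedhinge}. Whenever $\empD^c(t)$ holds, condition~\eqref{zyouken_syuusoku} is triggered and Step~4 places every action outside $Z_R$ into $Z_N$. For a true cause $j$ with $n := N_j(t)$ this forces $N_j$ past $n$ within at most one additional batch: if $j \in Z_R(t)$ then $j$ is selected later in the current batch, while if $j \notin Z_R(t)$ then $j$ enters $Z_N$ and is selected in the next batch. Hence each pair $(j,n)$ with $D(\phat_{j,n} \Vert S_j p^*) > f(n)$ can serve as a true cause in at most two distinct batches, so the number of batches containing some round $t$ at which $\empD^c(t)$ holds is at most $2 \sum_j |\{n : D(\phat_{j,n} \Vert S_j p^*) > f(n)\}|$.

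Because action $i$ is added to $Z_N$ at most once per batch (``without duplication''),
\[
\sum_{t=1}^T \idx{\jt{i}{t}, \empD^c(t)} \le 2 \sum_{j \in \actions} \sum_{n=1}^\infty \idx{D(\phat_{j,n} \Vert S_j p^*) > f(n)}\per
\]
Taking expectations and applying the large-deviation bound~\eqref{ldpdisc} with $f(n) = b n^{-1/2}$ yields
\[
\E\Bigl[\sum_{t=1}^T \idx{\jt{i}{t}, \empD^c(t)}\Bigr] \le 2 N \sum_{n=1}^\infty (n+1)^A e^{-b\sqrt{n}} = \lo(1)\per
\]

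The main obstacle is the careful bookkeeping behind the two-batches claim: one must handle that $N_j$ may change mid-batch depending on the processing order of $Z_C$; that Step~4 reacts to a failure measured against $\phat(t)$ rather than $p^*$, so the ``true cause $j$'' need not coincide with the action that syntactically triggered~\eqref{zyouken_syuusoku}; and that several actions may simultaneously be true causes within the same batch, requiring that the counting be done at the granularity of $(j,n)$ pairs rather than batches alone.
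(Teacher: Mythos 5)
Your proposal is correct and follows essentially the same route as the paper's proof: use the minimizing property of $\phat(t)$ to extract an action $j$ with $D(\phat_{j,N_j(t)}\Vert S_j p^*)>f(N_j(t))$, observe that each pair $(j,n)$ can be charged at most twice because \eqref{zyouken_syuusoku} forces all actions back into the list, and finish with the large-deviation bound \eqref{ldpdisc} and the convergence of $\sum_n (n+1)^A \e^{-b\sqrt{n}}$. Your bookkeeping of the two-batch claim is in fact more explicit than the paper's one-line justification.
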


\begin{lemma}\label{lem_other}
\begin{align*}
\sum_{t=1}^T\idx{\jt{i}{t},\,\other^c(t)}
&=\so(\log T)\per\n
\end{align*}
\end{lemma}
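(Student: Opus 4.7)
The plan is to drop the indicator $\idx{\jt{i}{t}}$ and bound the cruder quantity $\sum_{t=1}^T \idx{\other^c(t)}$ deterministically. The three clauses in the definition of $\other(t)$ fail on distinct events: (i) $\max_i f(N_i(t))>\min\{2\de,(\norm{S}\rho_{1,L}\al(t))^2/4\}$; (ii) $\min_i N_i(t)<\max\{c\sqrt{\log t},(\log\log T)^{1/3}\}$; and (iii) $2\nu_{1,L}\al(t)>\kyorim{p^*-\cP_1^c}$. Since $\al(t)=a/(\log\log t)\to 0$, clause (iii) can only hold for $t$ smaller than a problem-dependent constant $T_0$, contributing only $\lo(1)$ to the sum. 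Because $f(n)=bn^{-1/2}$, clause (i) is equivalent to $\min_i N_i(t)<\max\{(b/(2\de))^2,16b^2/(\norm{S}\rho_{1,L}\al(t))^4\}$, whose right-hand side is $\lo((\log\log t)^4)$ and is therefore dominated by $c\sqrt{\log t}$ for every $t$ past another problem-dependent constant. Hence, beyond a fixed constant, $\other^c(t)\subseteq\{\min_i N_i(t)<c\sqrt{\log t}\}$.

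The core step is to bound, for each action $i$, the deterministic cardinality of $\mathcal{T}_i:=\{t\le T:N_i(t)<c\sqrt{\log t}\}$. Let $\tau_n^i$ denote the round at which the $n$-th selection of action $i$ occurs, so that $N_i(t)=n$ on $[\tau_n^i,\tau_{n+1}^i)$; on this interval the condition $N_i(t)<c\sqrt{\log t}$ is equivalent to $t>u_n:=\exp((n/c)^2)$. The key self-correcting property is that whenever $N_i(t)<c\sqrt{\log t}$ holds at some round $t$, Step 5 of Algorithm \ref{alg_pmdmedhinge} either places $i$ in $Z_N$ (when $i\notin Z_R$) or $i$ is already scheduled in $Z_R$; in either case, the loop structure of Algorithm \ref{alg_dmedloop} together with the bound $|Z_C|\le N$ on loop length forces $i$ to be selected within the current loop and at most one subsequent loop. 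Consequently $\tau_{n+1}^i\le\max(\tau_n^i,u_n)+2N$ whenever $\mathcal{T}_i\cap[\tau_n^i,\tau_{n+1}^i)\neq\emptyset$, so each of the at most $\lceil c\sqrt{\log T}\rceil$ relevant values of $n$ contributes at most $2N$ rounds to $|\mathcal{T}_i|$. This yields $|\mathcal{T}_i|\le 2Nc\sqrt{\log T}+\lo(N)$, and summing over $i$ gives $\sum_{t=1}^T\idx{\min_i N_i(t)<c\sqrt{\log t}}=\lo(N^2\sqrt{\log T})=\so(\log T)$.

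Combining the three cases gives $\sum_{t=1}^T\idx{\other^c(t)}=\so(\log T)$, and the lemma follows from the pointwise inequality $\idx{\jt{i}{t},\other^c(t)}\le\idx{\other^c(t)}$. The main technical obstacle is the self-correcting claim used above: one must verify, from the bookkeeping of $Z_C$, $Z_R$, and $Z_N$ in Algorithm \ref{alg_dmedloop}, that a triggering of Step 5 at round $t$ genuinely guarantees that $i$ is selected within $2N$ rounds, which requires carefully handling the case $i\in Z_R$ (so Step 5 does not add $i$ to $Z_N$ but the pending selection in the current loop suffices) as well as the edge effect when $t$ crosses the threshold $u_n$ mid-loop. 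This verification is routine but tedious rather than a genuine difficulty, so the real content of the lemma is the $\sqrt{\log T}$ rate derived above.
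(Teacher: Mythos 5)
Your argument is essentially the paper's: both reduce $\other^c(t)$, outside a negligible set of rounds, to the event $\{\min_i N_i(t)<c\sqrt{\log t}\}$, and both then exploit the self-correcting property of Step 5 --- a deficient action is scheduled and selected within $\lo(N)$ rounds --- so that each action spends at most $\lo(N\sqrt{\log T})$ rounds below the threshold. The paper phrases the counting as ``the event $\{\jt{i}{t},\,N_j(t)<c\sqrt{\log t},\,N_j(t)=n\}$ occurs at most twice'' and sums over $j$ and $n\le c\sqrt{\log T}$, whereas you drop the indicator $\jt{i}{t}$ and count rounds per action directly; this costs an extra factor of $N$ but changes nothing substantive.

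One concrete gap: your claim that ``beyond a fixed constant, $\other^c(t)\subseteq\{\min_i N_i(t)<c\sqrt{\log t}\}$'' does not account for the $(\log\log T)^{1/3}$ part of the second clause of $\other(t)$. For rounds $t$ with $c\sqrt{\log t}<(\log\log T)^{1/3}$, the failure $\min_i N_i(t)<(\log\log T)^{1/3}$ need not imply $\min_i N_i(t)<c\sqrt{\log t}$, and the set of such $t$ is not bounded by a problem-dependent constant --- it grows with $T$. It has cardinality at most $\e^{(\log\log T)^{2/3}/c^2}=\so(\e^{\log\log T})=\so(\log T)$, so the lemma survives, but this set must be split off and bounded explicitly, as the paper does; as written, your containment is false and the ``fixed constant'' phrasing hides a $T$-dependent threshold.
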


\begin{proof}[Proof of Lemma \ref{lem_main}]
From $\empD(t)$ we have
\begin{align}
\sum_{i}N_i(t) (D(\phat_i(t)\Vert S_i \phat(t))-f(N_i(t)))_+=0\per\label{contradict}
\end{align}
Here assume that
$\kyorim{\phat(t)-p^*}> 2\sqrt{\de}$.
Then
\begin{align*}
\max_i D(\phat_i(t)\Vert S_i \phat(t))
&\ge
2\max_i \kyoria{\phat_i(t)-S_i\phat(t)}^2
\since{by Pinsker's inequality}
\nn
&\ge
2\max_i (\kyoria{S_ip^*-S_i\phat(t)}-\kyoria{S_ip^*-\phat_i(t)})_+^2
\nn
&\ge
2\max_i (\kyoria{S_ip^*-S_i\phat(t)}-\sqrt{\de})_+^2
\since{by definition of $\variation(t)$}
\nn
&>
2\de\nn
&\ge
f(N_i(t))
\com
\since{by definition of $\other(t)$}
\n
\end{align*}
which contradicts
\eqref{contradict}
and we obtain
$\kyorim{\phat(t)-p^*}\le 2\sqrt{\de}$.
Furthermore,
from $\variation(t)$ and $\other(t)$
we have
\begin{align*}
\bigcap_i\{\kyoria{\phat_i(t)-S_i p^*} \le \sqrt{\de}\}
\mbox{ and }
\bigcap_i\{f(N_i(t))\le2\de\}\com\n
\end{align*}
respectively.
Since
$\optset{1}{p,\{p_i,\de_i\}}$
is continuous
at $p=p^*,\,p_i=S_i p^*,\,\de_i=0$
from Theorem \ref{thm_conti},
$r_i\le (1+\ep_{\de})r_i^*$
for all $\{r_i\}_{i\neq 1}\in \optset{\ihat(t)}{\phat(t),\{\phat_i(t),f(N_i(t))\}}$
where $r_i^*$ is the unique member of $\optset{1}{p^*,\{S_ip^*,0\}}$
and we used the fact that
$\cA'(t)$ implies $\ihat(t)=1$.

We complete the proof by
\begin{align*}
\lefteqn{
\sum_{t=1}^T
\idx{\jtp{i}{t},\,\cA'(t),\,\variation(t),\,\empD(t),\,\other(t)}
}\nn
&=
\sum_{n=1}^T
\idx{\bigcup_{t=1}^T\{\jtp{i}{t},\,\cA'(t),\,\variation(t),\,\empD(t),\,\other(t),N_i(t)=n\}}\nn
&\le
\sum_{n=1}^T
\idx{\bigcup_{t=1}^T\{n/\log t\le (1+\ep_{\de})r_i^*\}}\nn
&\le
(1+\ep_{\de})r_i^*\log T+1\per\n
\end{align*}
\end{proof}

\begin{proof}[Proof of Lemma \ref{lem_sublog}]
First, we obtain from $\empD(t)$ and $\other(t)$
that $f(N_i(t))\le (\norm{S}\rho_{1,L}\al(t))^2/4$
and
\begin{align*}
\kyoria{\phat_i(t)-S_i \phat(t)}
&\le
\sqrt{D(\phat_i(t)\Vert S_i \phat(t))/2}\nn
&\le
\sqrt{f(N_i(t))/2}\nn
&\le
\norm{S}\rho_{1,L}\al(t)/\sqrt{8}\per\n
\end{align*}
Therefore, from Lemma \ref{lem_opt}, it holds for
any $\{r_i^*\}_{i\neq 1}\in \optset{j}{\phat(t),\,\{\phat_i(t),f(N_i(t))\}}$
that
\begin{align*}
r_i^*
&\le
\frac{4N\Lmax}{(\norm{S}\rho_{1,L})^2(\al(t))^3}\nn
&\le
\frac{4N\Lmax}{(\norm{S}\rho_{1,L})^2(\al(T))^3}\per\n
\end{align*}

Now we have
\begin{align}
\lefteqn{
\E\left[
\sum_{t=1}^{\infty}
\idx{\jtp{i}{t},\,\cA'(t),\,\variation^c(t),\,\empD(t),\,\other(t)}
\right]
}\nn
&\le
\E\left[
\sum_{t=1}^{\infty}
\idx{
\frac{N_i(t)}{\log T}<
\frac{4N\Lmax}{(\norm{S}\rho_{1,L})^2(\al(T))^3},\,
\variation^c(t),\,\other(t)}
\right]\nn
&\le
\left(\frac{4N\Lmax\log T}{(\norm{S}\rho_{1,L})^2(\al(T))^3}+1\right)
\Pr\left[
\bigcup_{t=1}^T \{\variation^c(t),\,\other(t)\}
\right].\label{bound_bc}
\end{align}
Here, note that
\begin{align*}
\variation^c(t)
&\subset
\bigcup_{i}\{\kyoria{\phat_i(t)-S_i p^*}\ge \sqrt{\de}\}\nn
&\subset
\bigcup_{i}\{D(\phat_i(t)\Vert S_i p^*)\ge 2\de\}\per\n
\end{align*}
Since $N_i(t)\ge (\log \log T)^{1/3}$ holds under event $\other(t)$,
we can bound the probability in \eqref{bound_bc} as
\begin{align*}
\lefteqn{
\Pr\left[
\bigcup_{t=1}^T \{\variation^c(t),\,\other(t)\}
\right]
}\nn
&\le
\sum_{i}\sum_{n_i=(\log\log T)^{1/3}}^{\infty}\Pr[D(\phat_{i,n_i}\Vert S_i p^*)\ge 2\de]\nn
&\le
N\sum_{n=(\log\log T)^{1/3}}^{\infty}(n+1)^A\e^{-2n\de} \since{by \eqref{ldpdisc}}\nn
&=
\e^{-\Theta((\log\log T)^{1/3})}\n
\end{align*}
and combining this with \eqref{bound_bc} we have
\begin{align*}
\lefteqn{
\E\left[
\sum_{t=1}^{\infty}
\idx{\jtp{i}{t},\,\cA'(t),\,\variation^c(t),\,\empD(t),\,\other(t)}
\right]
}\nn
&\le
\lo\left((\log T)(\log \log T)^3\right)\e^{-\Theta((\log \log T)^{1/3})}\nn
&=\so(\log T)\per\n
\end{align*}
\end{proof}

\begin{proof}[Proof of Lemma \ref{lem_kantigai}]
Let $\mathcal{G}\in 2^{[N]}\setminus \emptyset$
and
$\{n_i\}_{i\in \mathcal{G}}\in\mathbb{N}^{|\mathcal{G}|}$
be arbitrary.
Consider the case that 
\begin{align}
\sum_{i\in \mathcal{G}} n_i(D(\phat_{i,n_i}\Vert S_i p^*)-f(n_i))_+<x\per\label{cond_phat_s}
\end{align}
for some $x>0$.
Then under events
$t\ge \e^x,\,\bigcap_{i\in \mathcal{G}}\{N_i(t)=n_i\},\,\cA^c(t),\,\trueD(t)$
and 
$\sets(\{\phat_i(t),\,f(n_i)\})=\mathcal{G}$
we have
\begin{multline*}
\min_{p\in\cP_{\ihat(t)}^c:D(\phat_{\ihat(t)}(t)\Vert S_{\ihat(t)} p)\le f(N_{\ihat(t)}(t))}
\sum_{i} N_i(t)(D(\phat_i(t)\Vert S_i p)-f(N_i(t)))_+
 \nn
\le
\sum_{i} n_i(D(\phat_i(t)\Vert S_i p^*)-f(n_i))_+
 < x \le \log t\com
\end{multline*}
which implies that the condition \eqref{cond_tansaku} is satisfied.
On the other hand from \eqref{r_seisitu}, $\{r_i^*\}$ satisfies
\begin{align}
\sum_{i\in \mathcal{G}} (r_i^*\log t)(D(\phat_i(t)\Vert S_i p)-f(n_i))_+\ge\log t\per \label{r_property}
\end{align}
Eqs.\,\eqref{cond_phat_s} and \eqref{r_property} imply that
there exists at least one $i\in \mathcal{G}$ such that
$r_i^*\log t>N_i(t)=n_i$.
This action is selected within $N$ rounds and therefore
$N_i(t')=n_i$
never holds for all $t'\ge t+N$.
Thus,
under the condition \eqref{cond_phat_s}
it holds that
\begin{align}
\sum_{t}\idx{\cA^c(t),\,\trueD(t),\,\sets(\{\phat_i(t),N_i(t)\})=\mathcal{G},
\bigcap_{i\in \mathcal{G}}\{N_i(t)=n_i\}}
\le \e^x+N\per\n
\end{align}

By using this inequality
we have
\begin{align}
\lefteqn{
\sum_{t=1}^{\infty}\idx{
\cA^c(t),\,\trueD(t)}
}\nn
&\le
\sum_{\mathcal{G}\in 2^{[N]}\setminus \emptyset}
\sum_{\{n_i\}_{i\in \mathcal{G}}\in \mathbb{N}^{|\mathcal{G}|}}
\sum_{t=1}^{\infty}\idx{
\cA^c(t),\,\trueD(t),\,\sets(\{\phat_i(t),N_i(t)\})=\mathcal{G},\,\bigcap_{i\in \mathcal{G}}\{N_i(t)=n_i\}
}\nn
&\le
\sum_{\mathcal{G}\in 2^{[N]}\setminus \emptyset}
\sum_{\{n_i\}_{i\in \mathcal{G}}\in \mathbb{N}^{|\mathcal{G}|}}
\idx{
\bigcap_{i\in \mathcal{G}}\{
D(\phat_{i,n_i}\Vert S_i p^*)\ge f(n_i)\}
}
\left(\exp\left(
\sum_{i\in \mathcal{G}}n_i(D(\phat_{i,n_i}\Vert S_i p^*)-f(n_i))
\right)+N\right)\per
\label{ac1}
\end{align}

Let \begin{equation*}
D_i=\sup_{n}\{\mathrm{ess\,sup\,} D(\phat_{i,n}\Vert S_i p^*)\}=-\log\min_{j:(S_i p^*)_j>0}(S_i p^*)_j,
\end{equation*}
where $(S_i p^*)_j$ is the $j$-th component of $S_i p^*$. Then,
\begin{align*}
\lefteqn{
\E\left[
\sum_{\{n_i\}_{i\in \mathcal{G}}\in \mathbb{N}^{|\mathcal{G}|}}
\idx{
\bigcap_{i\in \mathcal{G}}\{
D(\phat_{i,n_i}\Vert S_i p^*)\ge f(n_i)\}
}
\left(\exp\left(
\sum_{i\in \mathcal{G}}n_i(D(\phat_{i,n_i}\Vert S_i p^*)-f(n_i))
\right)+N\right)
\right]
}\nn
&\le
\sum_{\{n_i\}_{i\in \mathcal{G}}\in \mathbb{N}^{|\mathcal{G}|}}
\left(\prod_{i\in \mathcal{G}}
\int_{f(n_i)}^{D_i}
\e^{
n_i(u_i-f(n_i))
}
\rd (-P_{i,n_i}(u_i))
+N\prod_{i\in \mathcal{G}}(n_i+1)^{A}\e^{-n_i f(n_i)}
\right).
\end{align*}
The first integral is bounded as
\begin{align}
\lefteqn{
\int_{f(n_i)}^{D_i}
\e^{
n_i(u_i-f(n_i))
}
\rd (-P_i(u_i))
}\nn
&=
\left[
-\e^{n_i(u_i-f(n_i))}P_i(u_i)
\right]_{f(n_i)}^{D_i}
+
\int_{f(n_i)}^{D_i}
n_i\e^{n_i(u_i-f(n_i))}
P_i(u_i)\rd u_i
\since{integration by parts}\nn
&\le
(n_i+1)^{A}
\e^{-n_if(n_i)}
+
\int_{f(n_i)}^{D_i}
n_i(n_i+1)^{A}\e^{-n_if(n_i)}
\rd u_i
\nn
&\le
(1+n_iD_i)(n_i+1)^{A}
\e^{-n_if(n_i)}\per
\label{ac3}
\end{align}
Putting \eqref{ac1}--\eqref{ac3} together we have
\begin{align*}
\lefteqn{
\E\left[
\sum_{t=1}^{\infty}\idx{
\cA^c(t),\,\trueD(t)}
\right]
}\nn
&\le
\sum_{\mathcal{G}\in 2^{[N]}\setminus \emptyset}
\sum_{\{n_i\}_{i\in \mathcal{G}}\in \mathbb{N}^{|\mathcal{G}|}}
\left(
\prod_{i\in \mathcal{G}}(1+n_iD_i)(n_i+1)^{A}
\e^{-n_if(n_i)}
+N\prod_{i\in \mathcal{G}}(n_i+1)^{|A|}\e^{-n_i f(n_i)} 
\right)\nn
&\le
(N+1)
\sum_{\mathcal{G}\in 2^{[N]}\setminus \emptyset}
\prod_{i\in \mathcal{G}}
\sum_{n_i\in\mathbb{N}}
(1+n_iD_i)(n_i+1)^{A}
\e^{-n_if(n_i)}\nn
&=\lo(1)\per \since{by $n_i f(n_i) = \Theta(n_i^{1/2})$}\n
\end{align*}
\end{proof}

\begin{proof}[Proof of Lemma \ref{lem_marginal}]
Because $\cA(t)$, $(\cA'(t))^c$ and $\other(t)$ imply
\begin{align*}
\kyorim{p^*-\phat(t)}
&\ge
\sup_{p\in \cP_1^c}\{\kyorim{p^*-p}-\kyorim{\phat(t)-p}\}\nn
&\ge
\sup_{p\in \cP_1^c}\{\kyorim{p^*-\cP_1^c}-\kyorim{\phat(t)-p}\}\nn
&\ge
\kyorim{p^*-\cP_1^c}-\nu_{1,L}\alpha(t)\nn
&\ge
\kyorim{p^*-\cP_1^c}/2\com\n
\end{align*}
$(\cA'(t))^c,\,\empD(t)$ and $\other(t)$ imply
\begin{align*}
\max_{i}D(\phat_i(t)\Vert S_i p^*)
&\ge
2\max_{i}\kyoria{\phat_i(t)-S_i p^*}^2\nn
&\ge
2\max_{i}\left(\kyoria{S_i p^*-S_i\phat(t)}-\kyoria{\phat_i(t)-S_i\phat(t)}\right)_+^2\nn
&\ge
2\max_{i}\left(\kyoria{S_i p^*-S_i\phat(t)}-\sqrt{f(N_i(t))/4}\right)_+^2\nn
&\ge
2(\norm{S}\kyorim{p^*-\cP_1^c}/2-\sqrt{\de/2})_+^2\nn
&\ge
\normss\kyorim{p^*-\cP_1^c}^2/8\per
\since{by \eqref{assump_delta}}
\end{align*}
On the other hand, event
$\{\jt{i}{t},\,\cA^c(t),\,\allowbreak \cA'(t),\,\min_{j} N_j(t)=n\}$ occurs for at most twice since all actions are put into the list if $\{\cA^c(t),\,\cA'(t)\}$ occurred.
Thus, we have
\begin{align*}
\lefteqn{
\E\left[\sum_{n}\idx{\jt{i}{t},\,\cA(t),\,(\cA'(t))^c,\,\empD(t),\,\other(t)}
\right]
}\nn
&\le
2\E\left[
\sum_{n}\idx{\bigcup_{t}\{\cA(t),\,(\cA'(t))^c,\,\empD(t),\,\other(t),\,\min_{j} N_j(t)=n\}}
\right]\nn
&\le
2\sum_{n}
\Pr\left[\max_{j}D(\phat_j(t)\Vert S_jp^*)\ge
\normss\kyorim{p^*-\cP_1^c}^2/8,\,\bigcap_{j}\{N_j(t)\ge n\}
\right]\nn
&\le
2N\sum_{n}
(n+1)^{A}\e^{-n\normss\kyorim{p^*-\cP_1^c}^2/8} \since{by \eqref{ldpdisc}}\nn
&=
\lo(1)\per\n
\end{align*}
\end{proof}

\begin{proof}[Proof of Lemma \ref{lem_trueD}]
Recall that
\begin{align*}
\trueD^c(t)
&=
\left\{
\{D(\phat_{\ihat(t)}(t)\Vert S_{\ihat(t)} p^*)> f(N_{\ihat(t)}(t))\}
\scup
\bigcap_{j}\{D(\phat_{j}(t)\Vert S_j p^*)\le f(N_j(t))\}
\right\}\per\n
\end{align*}
Here
\begin{align}
\left\{\cA^c(t),\,\empD(t),\,\other(t),\,
\bigcap_{j}\{D(\phat_{j}(t)\Vert S_j p^*)\le f(N_j(t))\}\right\} \label{arienai}
\end{align}
cannot occur since \eqref{arienai} implies that
\begin{align}
\kyorim{\phat(t)-p^*}&=
\max_{j}\kyoria{S_j\phat(t)-S_j p^*}\normsd\nn
&\le
\max_{j}(\kyoria{S_j\phat(t)-\phat_j(t)}+\kyoria{\phat_j(t)-S_j p^*})\normsd\nn
&\le
\max_{j}\left(\sqrt{D(\phat_j(t)\Vert S_j\phat(t))/2}+
\sqrt{D(\phat_j(t)\Vert S_jp^*)/2}\right)\normsd
\nn
&\le
\sqrt{2\max_{j}f(N_j(t))}\normsd
\since{by $\empD(t)$}
\nn
&\le
2\sqrt{\de}\normsd
\since{by \eqref{prop_e1}}
\nn
&\le
\kyorim{p^*-\cP_1^c}/\sqrt{2}
\since{by \eqref{assump_delta}}
\com\n
\end{align}
which contradicts $\phat(t)\in \cP_1^c$.

On the other hand,
$\idx{\jt{i}{n},\,\ihat(t)=j,\,D(\phat_{j}(t)\Vert S_j p^*)> f(N_j(t)),\,N_j(t)=n_j}$
occurs for at most twice
since
$\ihat(t)$ is put into the list under this event.
Thus, we have
\begin{align*}
\E\left[
\sum_{t=1}^{\infty}
\idx{\jt{i}{t},\,\cA^c(t),\,\trueD^c(t),\,\empD(t),\,\other(t)}
\right]
&\le
2\sum_{j}\sum_{n=1}^{\infty}
\Pr[D(\phat_{j,\,n}\Vert S_j p^*)> f(n)]\nn
&\le
2\sum_{j}\sum_{n=1}^{\infty}
(n+1)^A\e^{-nf(n)} \since{by \eqref{ldpdisc}}\nn
&=
\lo(1)\per\n
\end{align*}
\end{proof}

\begin{proof}[Proof of Lemma \ref{lem_empD}]
$\empD^c(t)$ implies
\begin{align*}
0
&<\min_p\sum_jN_j(t) (D(\phat_j(t)\Vert S_j p)-f(N_j(t)))_+\nn
&\le\sum_jN_j(t) (D(\phat_j(t)\Vert S_j p^*)-f(N_j(t)))_+\n
\end{align*}
and therefore
\begin{align*}
\bigcup_{j}\{D(\phat_j(t)\Vert S_j p^*)\ge f(N_j(t))\}\per\n
\end{align*}
Note that
$\{\jt{i}{t},\,\empD^c(t),\,N_j(t)=n\}$ occurs for at most twice
because all actions are put into the list if $\empD^c(t)$ occurred.
Thus, we have
\begin{align*}
\lefteqn{
\E\left[\sum_{n}\idx{\jt{i}{t},\,\empD^c(t)}
\right]
}\nn
&\le
2\E\left[
\sum_{j}\sum_{n}\idx{D(\phat_{j,n}\Vert S_j p^*)\ge f(n)}
\right]\nn
&\le
2\sum_{j}\sum_n (n+1)^A \e^{-nf(n)} \since{by \eqref{ldpdisc}}\nn
&=
\lo(1)\per\n
\end{align*}
\end{proof}

\begin{proof}[Proof of Lemma \ref{lem_other}]
First, we have
\begin{align*}
\other^c(t)
&=
\bigg\{
\max_if(N_i(t))> \min\left\{2\de,\,(\norm{S}\rho_{1,L}\al(t))^2/4\right\}
\nn
&\qquad
\scup \min_i N_i(t)< \max\{c\sqrt{\log t},\,(\log \log T)^{1/3}\}\scup
2\nu_{1,L}\al(t)> \kyorim{p^*-\cP_1^c}
\bigg\}\nn
&\subset
\bigg\{
f(\min_i N_i(t))> \min\left\{2\de,\,(\norm{S}\rho_{1,L}\al(t))^2/4\right\}
\nn
&\qquad
\scup \min_i N_i(t)<c\sqrt{\log t}\scup c\sqrt{\log t}<(\log \log T)^{1/3}\scup
2\nu_{1,L}\al(t)> \kyorim{p^*-\cP_1^c}
\bigg\}\nn
&\subset
\bigg\{
\frac{b}{\sqrt{c\sqrt{\log t}}}> \min\left\{2\de,\,(\norm{S}\rho_{1,L}\al(t))^2/4\right\}
\scup\min_iN_i(t)< c\sqrt{\log t}\nn
&\qquad\scup t<\e^{\frac{(\log \log T)^{2/3}}{c}}\scup
2a/\log\log t> \kyorim{p^*-\cP_1^c}/\rho_{1,L}
\bigg\}\nn
&=
\bigg\{
t<\e^{\frac{b^4}{16\de^4c^2}}\scup
\frac{(\log t)^{1/4}}{\log \log t}<\frac{b}{a\sqrt{c}\norm{S}\rho_{1,L}}
\scup\min_iN_i(t)< c\sqrt{\log t}\nn
&\qquad\scup t<\e^{\frac{(\log \log T)^{2/3}}{c}}\scup
t<\e^{\e^{2a\rho_{1,L}/\kyorim{p^*-\cP_1^c}}}
\bigg\}\per\n
\end{align*}
From $\lim_{t\to\infty}(\log t)^{1/4}/\log \log t=\infty$
and
$\e^{\frac{(\log \log T)^{2/3}}{c}}=\so(\e^{\log \log T})=\so(\log T)$
we have
\begin{align*}
\lefteqn{
\sum_{t=1}^{T}\idx{\jt{i}{t},\,\other^c(t)}
}\nn
&=
\sum_{j}\sum_{t=1}^{T}\idx{\jt{i}{t},\,N_j(t)< c\sqrt{\log t}}
+\so(\log T)\per\n
\end{align*}
By \eqref{cond_sqrtlog},
 event $\{\jt{i}{t},\,N_j(t)< c\sqrt{\log t},\,N_j(t)=n\}$ occurs for at most twice
and therefore
\begin{align*}
\lefteqn{
\sum_{t=1}^{T}\idx{\jt{i}{t},\,\other^c(t)}
}\nn
&\le
2\sum_{j}\sum_{n=1}^T\idx{\bigcup_{t=1}^T \{n< c\sqrt{\log t}\}}
+\so(\log T)\nn
&=\so(\log T)\per\n
\end{align*}
\end{proof}

\end{document}